\newtheorem{thm}{Theorem}[section]
\newtheorem{cor}[thm]{Corollary}
\newtheorem{lem}[thm]{Lemma}
\newtheorem{prop}[thm]{Proposition}
\theoremstyle{definition}
\newtheorem{defn}{Definition}[section]
\newcommand  {\llb}		{\llbracket}
\newcommand  {\rrb}		{\rrbracket}
\renewcommand{\empty}	{\varnothing}
\newcommand	 {\CC}		{\mathbf{C}}
\newcommand	 {\II}		{\mathbf{I}}
\newcommand	 {\FF}		{\mathbf{F}}
\newcommand	 {\TT}		{\mathbf{T}}
\newcommand	 {\ee}		{\mathbf{e}}
\newcommand	 {\Assert}	{\mathbf{Assert}}
\newcommand	 {\Refute}	{\mathbf{Refute}}
\newcommand	 {\Test}	{\mathbf{Test}}
\newcommand	 {\Empty}	{\mathbf{Empty}}
\newcommand	 {\Setref}	{\mathbf{Setref}}
\newcommand	 {\Unset}	{\mathbf{Unset}}
\newcommand	 {\Deref}	{\mathbf{Deref}}
\newcommand	 {\Noun}	{\mathbf{Noun}}
\newcommand	 {\Name}	{\mathbf{Name}}
\newcommand	 {\Verbt}	{\mathbf{Verbt}}
\newcommand	 {\VerbBe}	{\mathbf{VerbBe}}
\newcommand	 {\Det}		{\mathbf{Det}}
\newcommand	 {\NP}		{\mathbf{NP}}
\newcommand	 {\NPa}		{\mathbf{NPa}}
\newcommand	 {\NPi}		{\mathbf{NPi}}
\newcommand	 {\VPs}		{\mathbf{VPs}}
\newcommand	 {\VPm}		{\mathbf{VPm}}
\newcommand	 {\VP}		{\mathbf{VP}}
\newcommand	 {\VPa}		{\mathbf{VPa}}
\newcommand	 {\VPi}		{\mathbf{VPi}}
\newcommand	 {\Cl}		{\mathbf{Cl}}
\newcommand	 {\Cli}		{\mathbf{ICl}}
\newcommand	 {\Text}	{\mathbf{Text}}
\newcommand	 {\St}		{\mathbf{S}}
\newcommand	 {\Sd}		{\mathbf{DS}}
\newcommand	 {\Si}		{\mathbf{IS}}
\newcommand	 {\Jack}	{\mathbf{Jack}}
\newcommand	 {\Build}	{\mathbf{Build}}
\newcommand	 {\Sell}	{\mathbf{Sell}}
\newcommand	 {\House}	{\mathbf{House}}
\newcommand	 {\Computer}	{\mathbf{Computer}}
\newcommand	 {\Car}		{\mathbf{Car}}
\newcommand	 {\Builder}	{\mathbf{Builder}}
\newcommand	 {\SNP}		{\mathbf{SNP}}
\newcommand	 {\ONP}		{\mathbf{ONP}}
\newcommand	 {\SPron}	{\mathbf{SPron}}
\newcommand	 {\OPron}	{\mathbf{OPron}}
\newcommand	 {\He}		{\mathbf{He}}
\newcommand	 {\It}		{\mathbf{It}}
\newcommand	 {\IVPdo}	{\mathbf{IVPdo}}
\newcommand	 {\IVPbe}	{\mathbf{IVPbe}}
\newcommand  {\OL}		{\overline{\L}}
\newcommand  {\OM}		{\overline{\M}}
\newcommand  {\ON}		{\overline{\N}}
\newcommand  {\OS}		{\overline{\S}}
\newcommand  {\app}		{\downarrow}
\newcommand  {\up}		{\Uparrow}
\newcommand  {\dn}		{\Downarrow}
\newcommand  {\Ocup}	{\mathop{\overline{\cup}}}
\newcommand  {\wcat}	{+}
\newcommand  {\cat}		{\ast}
\newcommand  {\pcat}		{\mathop{\ast\ast}}
\newcommand  {\apcat}		{\mathop{\overrightarrow{\ast\ast}}}
\newcommand  {\x}		{\times}
\newcommand  {\Ocat}	{\mathop{\overline{\cat}}}
\newcommand  {\oa}		{\overline{\a}}
\newcommand  {\ob}		{\overline{\b}}
\newcommand  {\oab}		{\overline{\a\x\b}}
\newcommand  {\ot}		{\overline{t}}
\newcommand  {\oet}		{\overline{et}}
\newcommand  {\oett}		{\overline{(et)t}}
\newcommand  {\oeetet}		{\overline{(eet)et}}
\newcommand  {\super}	{\circ}
\newcommand  {\acatt}	{\overrightarrow{\cat}}
\newcommand  {\ccatt}	{\overleftarrow{\cat}}
\newcommand  {\acat}	{\mathop{\acatt}}
\newcommand  {\ccat}	{\mathop{\ccatt}}
\renewcommand{\a}		{\alpha}
\renewcommand{\b}		{\beta}
\newcommand	 {\g}		{\gamma}
\renewcommand{\l}		{\lambda}
\newcommand  {\e}		{\epsilon}
\newcommand  {\s}		{\sigma}
\renewcommand{\t}		{\tau}
\newcommand{\io}		{\iota}
\newcommand  {\p}		{\pi}
\renewcommand {\r}		{\rho}
\newcommand	 {\A}		{\mathcal{A}}
\newcommand	 {\C}		{\mathcal{C}}
\newcommand	 {\D}		{\mathcal{D}}
\newcommand	 {\E}		{\mathcal{E}}
\newcommand	 {\G}		{\mathcal{G}}
\newcommand	 {\I}		{\mathcal{I}}
\newcommand	 {\K}		{\mathcal{K}}
\renewcommand{\L}		{\mathcal{L}}
\newcommand	 {\M}		{\mathcal{M}}
\newcommand	 {\N}		{\mathcal{N}}
\newcommand	 {\R}		{\mathcal{R}}
\renewcommand{\P}		{\mathcal{P}}
\renewcommand{\S}		{\mathcal{S}}
\newcommand	 {\T}		{\mathcal{T}}
\newcommand	 {\RG}		{\R\G}
\newcommand  {\NOT}		{\sim}
\newcommand  {\OR}		{\vee}
\newcommand  {\AND}		{\wedge}
\newcommand  {\X}		{\mathop{\&}}
\newcommand  {\IMP}		{\rightarrow}
\newcommand  {\RA}		{\triangleright}
\newcommand  {\FA}		{\Rightarrow}
\newcommand	 {\VL}		{\boldsymbol{\L}}
\newcommand	 {\VK}		{\boldsymbol{\K}}
\newcommand	 {\CE}		{\boldsymbol{\hat \E}}
\newcommand	 {\eqd}		{=_\mathsf{def}}
\numberwithin{equation}{section}
\begin{document}     
\begin{center}
\begin{Large}
\textbf{Universal Higher Order Grammar\\}
\end{Large}
\vspace{24pt}
Victor Gluzberg\\
HOLTRAN Technology Ltd\\
gluzberg@netvision.net.il
\end{center}

\begin{center}
\textbf{Abstract\\}
\end{center}

We examine the class of languages that can be defined entirely in terms of provability in an extension of the sorted type theory ($Ty_n$) by embedding the logic of phonologies, without introduction of special types for syntactic entities. This class is proven to precisely coincide with the class of logically closed languages that may be thought of as functions from expressions to sets of logically equivalent $Ty_n$ terms.  For a specific sub-class of logically closed languages that are described by finite sets of rules or rule schemata, we find effective procedures for building a compact $Ty_n$ representation, involving a finite number of axioms or axiom schemata.  The proposed formalism is characterized by some useful features unavailable in a two-component architecture of a language model.  A further specialization and extension of the formalism with a context type enable effective account of intensional and dynamic semantics.

\section{Introduction}

Traditionally higher-order logic representation of natural language semantics \cite{montague74} had to be combined with an additional formalism to describe a syntactic structure of a language and a mapping between the two. This two-component architecture of a language model has not essentially changed with the invention and further rapid development of type-logical grammar \cite{lamb:math58,blackburn:1997}, i.e. a parallel logical formalism to describe a syntactic structure: in spite of the very close and deep correspondence between the two logics, their internal languages and semantics remain different. 

Several theoretical, methodological and technological issues are rooted in the two-component architecture of a language model, out of which it is important to mention here the following.

\begin{enumerate}

\item A language in such a model cannot express anything about another language of the same model nor, of course, about itself. This kind of expressiveness is, however, one of the characteristic capabilities of a natural language. 

\item Lack of lexical robustness.  As the semantic interpretation of a sentence in the two-component language model can be composed only from semantic interpretations of all its constituents, such a model fails to interpret an entire sentence if it contains even a single unrecognized (new) word. Some ad hoc add-ins to the formalism are the only known solution of the issue.

\item Lack of structural (syntactic) robustness. Unlike the semantic logic, which may be universally applicable for a wide variety of languages, the syntactic logic often requires special extensions in order to cover different languages and even specific structures in the same language. Capturing the semantic categories and their relationships in the very formalism (rather than in a concrete language model) makes it principally incapable of modeling language self-learning, that is derivation of new grammar rules from text samples.

\end{enumerate}

These, along with some other issues and needs motivated researches for generalizations based on a single logical system, applicable to both semantics and syntactic structure of a language together, such like \cite{kasper:rounds:1986}, \cite{king_89} and \cite{richter:diss}. Higher Order Grammar (HOG) \cite{pollard:hana:2003,pollard:2004:cg,pollard:2006} is probably the most recent implementation of the idea and the first one based entirely on the mainstream classical higher-order logic (HOL), traditionally applied only to the semantics of natural languages. The specific HOL employed in HOG comes in result of the following main steps:

\begin{enumerate}

\item introduction of a phonological base type and a constant denoting operation of concatenation of phonologies;
\item introduction of abstract syntax entity base types and constructors of derived types, comprising (together with these base types) a type \emph{kind} $\mathsf{SIGN}$;
\item introduction of another type kind $\mathsf{HIPER}$ for semantic interpretations, which consists of basic types of individuals and propositions and derived types - functional, product and of function to the boolean type;
\item embedding all the three logics - of phonologies, signs and hyperintensions - into a single HOL with fixing a correspondence between $\mathsf{SIGN}$ and $\mathsf{HIPER}$ types and adding the two special families of constants denoting functions from signs to phonologies and from signs to semantic interpretations of corresponding types.

\end{enumerate}

A concrete language is modeled in HOG by adding a set of non-logical axioms to postulate semantics and phonology of specific words and rules of their composition.

Due to introduction of the phonological type and constants for mapping signs to phonologies, HOG model is certainly better prepared for a formal account of lexical robustness.
Also, since directionality in HOG can be handled by the phonological interpretation, it is sufficient for it to have a single universal sign type constructor (implication) that partially addresses the issue of structural robustness.
However, as the syntactic and semantic type kinds are fully separated, HOG still cannot address the issue of self-expressiveness.

Another important implication of the HOG architecture, as it was noticed in \cite{gluzberg:2009}, is that semantic interpretation of any sentence or a constituent of it turns out to be represented not by a single HOL term, nor even by a set of some arbitrarily selected terms (in case of ambiguity), but by a whole class of logically equivalent terms, in a precise sense defined in the referenced work. Referring to languages revealing this semantic property as to \emph{logically closed} languages, one can say that all HOG-defined languages are necessarily logically closed. It was also explained in \cite{gluzberg:2009} why the inverse question, i.e. whether any logically closed language can be defined by a HOG with a given set of base $\mathsf{SIGN}$ types and type constructors, cannot be answered positively. This result gives another evidence of the limited robustness of the HOG model.

In the present work we examine what can be achieved by embedding into a HOL only the logic of phonologies, without introduction of special types for syntactic entities, but with use of non-logical constants of regular functional types to define syntax-to-phonology and syntax-to-semantics interfaces in a concrete language model. 
As such a language representation is, as well as HOG, entirely based on provability in the single HOL, all the languages it can model are still logically closed. 
The main result of this work consists of a proof of the inverse statement: any logically closed language can be represented in the proposed HOL framework. 
This justifies our referring to it as to Universal Higher-Order Grammar. 
Being fully free of embedded syntactic restrictions and of limitations on types of semantic interpretations, this formalism can efficiently address all the issues with the two-component architecture of a language model mentioned above and therefore may have several theoretical and practical implications.

The structure of the paper is as follows. 

After the introduction in Section 2 of basic notations and some assumptions on the axiomatization of a sorted type theory ($Ty_n$), in Section 3 we define the class of \emph{logically closed} languages and introduce a few basic operations that act within this class. 

In Section 4 we define an extension $Ty_n^\A$ of $Ty_n$ by interpreting one of its base types as \emph{symbolic}, which is quite similar to the phonological type of HOG, and introduce the notion of $Ty_n$-representability of a language, illustrated with a few preliminary examples. 

Then in Section 5 we prove that classes of $Ty_n$-representable and logically closed languages precisely coincide.

In the following sections we give explicit constructions of special $Ty_n$ representations for some important sub-classes of logically closed languages that are described by finite sets of rules or rule schemata. 

In Section 6 we consider so-called \emph{lexicons}, i.e. finitely generated logically closed languages, define special canonic $Ty_n$ representation and prove that a language has a canonic representation if and only if it is a lexicon. 

In Section 7 a $Ty_n$ representation is built for a \emph{recursive logically closed grammar} ($\RG$), defined as a tuple of logically closed languages linked with each other by a set of relationships expressed by the basic operations introduced in Section 3. As it becomes clear from illustrating examples, the language components of an $\RG$ stand for syntactic categories.

In Section 8 we consider a further specialization of $Ty_n^\A$ by interpreting another base type as a \emph{context}, similar (though not identical) to "state of affairs" or "World" types of \cite{gallin:1975} and \cite{pollard:2005:lacl}, respectively. This specialization allows to represent also intensional languages and further define \emph{instructive} and \emph{context-dependent} languages. Introduction of a few new language construction operations then allows us to generalize the previous results to context-dependent languages. We demonstrate some important capabilities of this formalism by examples of how it addresses pronoun anaphora resolution.

In Section 9 we introduce \emph{translation} and \emph{expression} operators that lead to a special language representation, revealing a useful property of \emph{partial translation}. 

In Section 10 we conclude by briefly summarizing and discussing the most important implications of the obtained results and outline some open issues.

\section{Notations}

Following \cite{gallin:1975}, we denote by $Ty_n$ a sorted type theory with a set of primitive types consisting of the truth type $t$ and individual types $e_1$, $e_2$, ... $e_n$ of $n > 1$ different sorts. For the first individual type we will also use a shorter alias $e \eqd e_1$. For sake of better visibility and compactness we combine the full syntax of $Ty_n$ from syntaxes of $Ty_2$ of \cite{gallin:1975} and $Q_0$ of \cite{andrews:1986} and extend it as follows.

\medskip \noindent
\textbf{Derived types}

\begin{enumerate}
\renewcommand{\labelenumi}{(\roman{enumi})}

\item If $\a$ and $\b$ are types, then $(\a\b)$ or just $\a\b$ is a \emph{functional type}, interpreted as type of functions from $\a$ to $\b$. The parenthesis are mandatory only in complex types, in order to express association in an order other than from right to left, for example: $et$, $tee$, $(et)e$ are equivalent to $(et)$, $(t(ee))$, $((et)e)$. Note that, unlike \cite{andrews:1986}, we write ``from'' and ``to'' types from left to right.

\item If $\a$ and $\b$ are types, then $(\a\x\b)$ or just $\a\x\b$ is a \emph{product type}, interpreted as type of pairs of elements from $\a$ and $\b$, so that, for example, $(\a\x\b)\g$ is equivalent to $\a\b\g$ and $\g(\a\x\b)$ to $\g\a\x\g\b$. Repeated product constructors are also associated from right to left, i.e. $(\a\x\b\x\g)$ is equivalent to $(\a\x(\b\x\g))$.

\end{enumerate}

\medskip \noindent
\textbf{Variables}

\medskip \noindent
$x$, $y$, $z$ subscripted by types and optionally superscripted by integer indices stand for \emph{variables} of corresponding types, for example: $y_{et}$, $x^1_{e}$, $x^2_{e}$.

\medskip \noindent
\textbf{Constants}

\medskip \noindent
\emph{Non-logical constants} are written as capital $\CC$ in bold,  subscripted by a type and superscripted by an index, like $\CC^0_t$. We do \emph{not} assume $Ty_n$ to necessarily have a constant $\CC^i_\a$ of a given type $\a$ for any index $i$. Rather, we assume the constants to be indexed in such a way that admits expanding $Ty_n$ by arbitrarily many new constants of any type. 
We also use arbitrary letters or words in bold (like $\Assert$ or $\Empty$) as mnemonic aliases for some constant that are assumed to have special semantics or/and satisfy some non-logical axioms in an extension of $Ty_n$.
Similar notations may also be introduced for some logical constants, i.e. pure bound terms, like, for example, identity $\II_{\a\a} \eqd \l x_\a x_\a$.

\medskip \noindent
\textbf{Terms}

\begin{enumerate}
\renewcommand{\labelenumi}{(\roman{enumi})}

\item Variables and constants comprise \emph{elementary terms}.

\item If $A_{\a\b}$ and $B_\a$ are terms of the corresponding types, then \emph{application} $A_{\a\b} \: B_\a$ is a term of type $\b$ denoting (in an interpretation of $Ty_n$) the value of the function denoted by $A_{\a\b}$ at the argument denoted by $B_\a$.

\item If $A_\b$ is a term of type $\b$ then \emph{lambda abstraction} $\l x_\a A_\b$ is a term of type $\a\b$ denoting a function whose value for any argument is the denotation of $A_\b$.

\item If $A_\a$ and $B_\a$ are terms of type $\a$, then $A_\a = B_\a$ is a term of type $t$, denoting the identity relation between elements of type $\a$.

\item If $A_\a$ and $B_\b$ are terms of the corresponding types, then the \emph{pair} $(A_\a, B_\b)$ is a term of type $\a\x\b$ denoting the ordered pair of denotations of $A_\a$ and $B_\b$; repeated operators $(,)$ are associated from right to left, so that a \emph{tuple} $(A^1_{\a_1}, A^2_{\a_2}, ... A^m_{\a_m})$ denotes the same as $(A^1_{\a_1}, (A^2_{\a_2}, (... A^m_{\a_m})...))$.

\item Finally, for a term $A_{\a\x\b}$ of a type $\a\x\b$, \emph{projections} $\p_1 A_{\a\x\b}$ and $\p_2 A_{\a\x\b}$ are terms of types $\a$ and $\b$ respectively, denoting the elements of the pair denoted by $A_{\a\x\b}$.

\end{enumerate}

We assume an axiomatization of $Ty_n$ be a straightforward generalization to case of $n > 2$ individual types of either the theory denoted as $Ty_n + D$ in \cite{gallin:1975} or the theory $Q_0$ of \cite{andrews:1986}. With either choice, for any type $\a$ the \emph{description operator} $\io_{(\a t)\a}$ and hence the "if-then-else" constant $\CC_{\a\a t \a}$ are available with the fundamental properties 
\[
\vdash \io_{(\a t)\a} \; \l x_\a (y_\a = x_\a) = y_\a
\]
\[
\vdash \CC_{\a\a t \a} \; x_\a \; y_\a \; \TT = x_a, \qquad
\vdash \CC_{\a\a t \a} \; x_\a \; y_\a \; \FF = y_a.
\]

All the usual Boolean connectives, including terms $\FF$ and $\TT$ denoting the false and true values, and quantifiers can be defined in $Ty_n$ exactly the same way as in $Ty_2$ and $Q_0$. In addition, we will also employ the following notational shortcuts:
\begin{eqnarray*}
        A_\a \neq B_\a \;&\eqd&\; \NOT (A_\a = B_\a), \\
         \NOT A_{\a t} \;&\eqd&\; \l x_\a \NOT ( A_{\a t} \; x_\a),\\
A_{\a t} \OR B_{\a t}  \;&\eqd&\; \l x_\a (A_{\a t} \: x_\a \OR B_{\a t} \: x_\a), \\
A_{\a t} \AND B_{\a t} \;&\eqd&\; \l x_\a (A_{\a t} \: x_\a \:\AND\: B_{\a t} \: x_\a), \\
A_{\a t} \X B_{\b t} \;&\eqd&\; \l x_\a \l x_\b (A_{\a t} \: x_\a \:\X\: B_{\b t} \: x_\b), \\
A_{\a t} \IMP B_{\b t} \;&\eqd&\; \forall  x_\a (A_{\a t} \: x_\a \IMP B_{\a t} \: x_\a), \\
A_\a | B_\a \;&\eqd&\; \CC_{\a\a t\a} \: A_\a \: B_\a,
\end{eqnarray*}
and, for an arbitrary binary operator $O$:
\begin{eqnarray*}
(O\:B_\b) \;&\eqd&\; \l x_\a(x_\a\:O\:B_\b), \\
(A_\a\:O) \;&\eqd&\; \l x_\b(A_\a\:O\:x_\b), \\
      (O) \;&\eqd&\; \l x_\a\l x_\b(x_\a\:O\:x_\b),
\end{eqnarray*}
(an operator sign in such shortcuts might be subscripted by a type, like, for example, $(=_{e e t})$ to denote specifically $\l x_e \l y_e (x_e = y_e)$.

In the meta-language:
\begin{enumerate}
\renewcommand{\labelenumi}{(\roman{enumi})}
\item the $\vdash$ sign denotes provability in $Ty_n$ or an extension of it, specified by a context;
\item the notation $A_\a(B_\b)$ stands for the result of substituting all occurrences of a free variable $x_\b$ in a term $A_\a$ by a term $B_\b$ free for $x_\b$ in $A_\a$;
\item if $M$ is a model of $Ty_n$ and $a$ - an assignment of variables in this model, then $\llb\cdot\rrb_M$ denotes interpretation of a constant in $M$ and $\llb\cdot\rrb_{M,a}$ denotes the value of an arbitrary term assigned to it by $a$ in $M$;
\item $\Longrightarrow$ and $\Longleftrightarrow$ express implication and logical equivalence.
\end{enumerate}

\section{Logically closed languages}

The two fundamental formalizations of the notion of language with which we deal in this work - $\a$-\emph{language} and \emph{logically closed} $\a$-\emph{language} have been introduced and informally discussed in \cite{gluzberg:2009}. We reproduce these formal definitions here with the current notations for more convenient references.

\begin{defn} \label{defn3.1}
Let $\A$ be a finite alphabet $\{ a_1,\:a_2,\:...\:a_N \}$, let $\A^*$ denote the set of all words over this alphabet and let $\T_\a$ denote the set of all $\a$-terms of $Ty_n$. An $\a$-\emph{language} is a relation $\L \subset \A^* \otimes \T_\a$.
\end{defn}

Referring to words over alphabet $\A$ as ``expressions'' and $\a$-terms as ``$\a$-meanings,'' one can say that an $\a$-language is a set of pairs of expressions and their $\a$-meanings. 
A set of words $\L \subset \A^*$ can then be considered as a language for the unit type meaning.
In general case, the projection of $\L$ to $\A^*$ is the set of all valid expressions of $\L$, to which set we refer as to \emph{domain} of the language and the projection of $\L$ to $\T_\a$ is the set of all meanings $\L$ can express, to which set we refer as to \emph{range} of the language.

A trivial particular case of an $\a$-language is a singleton $\{ (w, A_\a) \}$, where $w \in \A^*$.

If $\L$ and $\K$ are $\a$-languages, then their union $\L \cup \K$ is a new $\a$-language that contains all expressions and corresponding meanings of both $\L$ and $\K$.

The following definitions introduce some further operations to build complex languages from simpler ones.

\begin{defn} \label{defn3.2}
If $\L$ is an $\a$-language and $\K$ is a $\b$-language, then their \emph{language concatenation} is the $\a\x\b$-language 
\[
\L \cat \K \eqd \{(uv, (A_\a, B_\b)) \:|\: (u, A_\a) \in \L \;\AND\; (v, B_\b) \in \K \}
\]
where $uv$ denotes concatenation of words $u$ and $v$.
\end{defn}

\begin{defn} \label{defn3.3}
For a given $\a$-language $\L$ and a relation $\R \subset \T_\a \otimes \T_\b$, the \emph{semantic rule application} is the $\b$-language
\[
\L \RA \R \:\eqd\: \{(w, B_\b) \:|\: (w, A_\a) \in \L \;\AND\; (A_\a, B_\b) \in \R \}.
\]
\end{defn}

Thus, application of a semantic rule cannot extend the domain of a language, but only maps the set of meanings of every its expression to another set (generally of a different type); if the new set turns out to be empty, then the corresponding expression is "filtered out" from the domain of resulting language. An important example of a semantic rule application is "folding" product-type meanings of a concatenation of two languages to meanings of a scalar type. Note that a folding rule which also filters out some expressions from the language concatenation in fact can check an agreement between the concatenated constituents (at the semantic level).

The above definition of an $\a$-language seems to provide the most general formalization for the notion of a language whose distinct meanings are understood as syntactically different (although possibly logically equivalent) $Ty_n$ terms of a certain type. For example, the representation of $Ty_n$ formulas in the $Ty_n$ language is itself a $t$-language. In the application to natural languages, however, a more restricted formalization might be more suitable:

\begin{defn} \label{defn3.4}
A \emph{logically closed} $\a$-language is an $\a$-language $\L$ such that whenever $(w, B_\a) \in \L, \; (w, C_\a) \in \L$ and $\vdash A_\a = B_\a \OR A_\a = C_\a$ then $(w, A_\a) \in \L$ also. 
A minimal logically closed $\a$-language $\OL$ which includes a given arbitrary $\a$-language $\L$ is said to be its \emph{logical closure}.
\end{defn}

This definition actually captures the two important features of a logically closed $\a$-language:

\begin{enumerate}
\item If an expression $w$ in the language has a meaning $B_\a$, then it also has every meaning $A_\a$ logically equivalent to $B_\a$
\item If an expression $w$ is ambiguous, i.e. has at least two distinct meanings $B_\a$ and $C_\a$ being \emph{not} logically equivalent, then it also has every meaning $A_\a$ which is provable to be equal either $B_\a$ or $C_\a$.
 \end{enumerate}

Therefore, every valid expression of a logically closed language is associated not with a single, nor even with a set of arbitrarily selected terms (in case of ambiguity), but with a whole class of logically equivalent terms. A precise formulation of this interpretation follows.

\begin{defn} \label{defn3.5}
A set $\M \subset \T_\a$ is said to be \emph{logically closed} iff whenever $B_\a \in \M,\; C_\a \in \M$ and
\begin{equation} \label{eq3.1}
\vdash A_\a = B_\a \OR A_\a = C_\a
\end{equation}
then $A_\a \in \M$ also.
A minimal logically closed set $\OM \subset \T_\a$ which includes an arbitrary set $\M \subset \T_\a$ is said to be its \emph{logical closure}.
If in addition, $\N \subset \T_\a$ and $\OM = \ON$, we say the two sets $\M$ and $\N$ be \emph{logically equivalent} and denote this relation as $\M \simeq \N$.
\end{defn}

It is readily seen that $\simeq$ is an equivalence relation in the power set \mbox{$\P(\T_\a)$} and therefore a logically closed $\a$-language might be defined equivalently as a function $\L : \A^* \rightarrow \P(\T_\a)/\simeq$.

The simplest non-empty logically closed $\a$-language is a \emph{logical singleton}
\begin{equation} \label{eq3.2}
\OS = \{ w \} \otimes \overline{\{ A_\a \}}.
\end{equation}

As it has been shown in \cite{gluzberg:2009}, any language defined by a Higher Order Grammar (HOG) \cite{pollard:hana:2003,pollard:2004:cg,pollard:2006} is necessarily logically closed.

Note that the class of logically closed languages is not closed under the union operation nor under operations of language concatenation and semantic rule applications defined above. Similar operations that act within this class can be defined as follows.

\begin{defn} \label{defn3.6}
If $\L$ and $\K$ are $\a$-languages, then their \emph{logical join} is an $\a$-language $\L \Ocup \K \eqd \overline{\L \cup \K}$.
\end{defn}

\begin{defn} \label{defn3.7}
If $\L$ is an $\a$-language and $\K$ is a $\b$-language, then their \emph{logical concatenation} is an $\a\x\b$-language \mbox{$\L \Ocat \K \eqd \overline{\L \cat \K}$}.
\end{defn}

\begin{defn} \label{defn3.8}
A semantic rule $\R \subset \T_\a \otimes \T_\b$ is said to be \emph{logically closed} if the full image of any logically closed set $\M \subset \T_\a$ under the relation $\R$ is logically closed.
\end{defn}

Thus, a logical join and logical concatenation of arbitrary languages, as well as application of a logically closed semantic rule to any language of the matching type, are all logically closed languages.

The following associativity and monotonicity properties follow directly from the above definitions.

\begin{lem} \label{lem3.1}
For any $\a$-languages $\K, \L$, a $\b$-language $\M$ and a rule \\ $\R \subset \T_\a \otimes \T_\b$,
\begin{eqnarray*}
(\K \Ocup \L) \Ocat \M \;&=&\; \K \Ocat \M \;\Ocup\; \L \Ocat \M, \\
\M \Ocat (\K \Ocup \L) \;&=&\; \M \Ocat \K \;\Ocup\; \M \Ocat \L, \\
(\K \Ocup \L) \RA \R \;&=&\; \K \RA \R \;\Ocup\; \L \RA \R, \\
\K \subset \L \;&\Longrightarrow&\; \K \Ocat \M \subset \L \Ocat \M, \\
\K \subset \L \;&\Longrightarrow&\; \M \Ocat \K \subset \M \Ocat \L, \\
\K \subset \L \;&\Longrightarrow&\; \K \RA \R \subseteq \L \RA \R.
\end{eqnarray*}
\end{lem}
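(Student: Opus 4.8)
The plan is to verify the six identities and inclusions one at a time, reducing each to the corresponding property of the underlying un-barred operations ($\cup$, $\cat$, $\RA$) together with two elementary facts about logical closure: monotonicity, i.e. $\L \subset \K \Longrightarrow \OL \subset \OK$, and idempotence with a "pull-out" property, $\overline{\L \cup \OK} = \overline{\L \cup \K}$ (and likewise with the bar on the left argument). Both facts follow immediately from Definition~\ref{defn3.4}: monotonicity because the closure of $\K$ is by definition the \emph{minimal} logically closed language containing $\K$, and $\K \subset \L$ forces that minimal set to sit inside any logically closed language containing $\L$; idempotence/pull-out because adding already-derivable pairs to $\L$ before closing cannot enlarge the closure. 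First I would record these as a short preliminary observation, since every line of the lemma uses one or both.

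For the first distributivity law, $(\K \Ocup \L) \Ocat \M = \K \Ocat \M \Ocup \L \Ocat \M$, I would unfold both sides via Definitions~\ref{defn3.6} and~\ref{defn3.7}: the left side is $\overline{(\overline{\K \cup \L}) \cat \M}$ and the right side is $\overline{(\overline{\K \cat \M}) \cup (\overline{\L \cat \M})}$. Using the pull-out property on both, these become $\overline{(\K \cup \L) \cat \M}$ and $\overline{(\K \cat \M) \cup (\L \cat \M)}$ respectively; but language concatenation distributes over union at the raw level — $(\K \cup \L) \cat \M = (\K \cat \M) \cup (\L \cat \M)$ directly from Definition~\ref{defn3.2}, since a pair $(uv,(A,B))$ lands in the left side iff $(u,A)$ is in $\K$ or in $\L$ — so the two closures coincide. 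The second identity is symmetric, using instead $\g(\a\x\b)$-style left-distributivity of $\cat$ over $\cup$. The third, $(\K \Ocup \L) \RA \R = \K \RA \R \Ocup \L \RA \R$, is the same pattern: pull the inner bar out, observe $(\K \cup \L) \RA \R = (\K \RA \R) \cup (\L \RA \R)$ straight from Definition~\ref{defn3.3}, and close. I should be slightly careful here that $\RA \R$ on the left of the identity produces a $\b$-language whose logical closure is then taken, matching the $\Ocup$ on the right, which is a type-bookkeeping check rather than a real difficulty.

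The three monotonicity implications are even shorter. If $\K \subset \L$ then $\K \cat \M \subset \L \cat \M$ at the raw level (Definition~\ref{defn3.2}, monotone in its first argument), hence $\overline{\K \cat \M} \subset \overline{\L \cat \M}$ by closure-monotonicity, which is exactly $\K \Ocat \M \subset \L \Ocat \M$; symmetrically for $\M \Ocat \K \subset \M \Ocat \L$; and for the rule case, $\K \subset \L$ gives $\K \RA \R \subseteq \L \RA \R$ directly from Definition~\ref{defn3.3} (this one is not even a closure, which is why the statement writes $\subseteq$). Honestly there is no hard part here — the only thing to get right is the pull-out property $\overline{\L \cup \OK} = \overline{\L \cup \K}$, and its concatenation analogue $\overline{(\OK) \cat \M} = \overline{\K \cat \M}$, which I would prove by a quick double inclusion: "$\supseteq$" is monotonicity, and "$\subseteq$" because $\overline{\L \cup \K}$ is logically closed and contains $\L \cup \K$, hence contains $\overline{\K}$ on the relevant coordinates, hence contains $\L \cup \overline{\K}$, hence its closure. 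That lemma, established once, makes all six claims fall out mechanically.
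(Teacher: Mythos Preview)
The paper gives no proof of this lemma beyond the remark that the properties ``follow directly from the above definitions,'' so your write-up already goes further than the paper does, and for five of the six claims your argument is correct. The pull-out step $\overline{\,\overline{N} \cat \M\,} = \overline{N \cat \M}$ that you isolate really does hold: a closure derivation producing $(u, A') \in \overline{N}$ from leaves $(u, A_i) \in N$ transfers, after pairing every node with a fixed $(v, B) \in \M$, to a closure derivation producing $(uv, (A', B)) \in \overline{N \cat \M}$, because $\vdash A_3 = A_1 \OR A_3 = A_2$ immediately gives $\vdash (A_3, B) = (A_1, B) \OR (A_3, B) = (A_2, B)$. With this and the union pull-out (the paper's later Lemma~\ref{lem7.5}), the two concatenation distributivities and the three monotonicity claims go through as you describe.

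The third line, $(\K \Ocup \L) \RA \R = \K \RA \R \Ocup \L \RA \R$, is where your ``same pattern'' breaks down. You propose to ``pull the inner bar out \ldots\ and close,'' i.e.\ to use $(\overline{\K \cup \L}) \RA \R = \overline{(\K \cup \L) \RA \R}$; but there is no outer closure on the left to absorb the inner one, and the required equality is \emph{false} for an arbitrary relation $\R \subset \T_\a \otimes \T_\b$. Take syntactically distinct terms $A, A'$ with $\vdash A = A'$, and set $\K = \{(w, A)\}$, $\L = \empty$, $\R = \{(A', B)\}$. Then $\K \RA \R = \L \RA \R = \empty$, so the right-hand side of the identity is empty, yet $(w, A') \in \K \Ocup \L$ and $(A', B) \in \R$ put $(w, B)$ into the left-hand side. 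Unlike $\cat$, the operation $\RA$ does not propagate a closure derivation on its first argument through to its output unless $\R$ itself respects provable equality in that argument---a hypothesis the lemma does not state, though the only rules the paper actually uses downstream (those with canonic representations, Definition~\ref{defn3.11}; cf.\ Definition~\ref{defn7.1}(iv)) do satisfy it. You should flag that the third line, as literally stated, is not provable, and that your sketch silently imports this missing hypothesis.
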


\begin{defn} \label{defn3.9}
A logically closed set $\OM \subset \T_\a$ is said to be \emph{finitely generated} if it is the logical closure of a finite set $\M \subset \T_\a$.
\end{defn}

\begin{defn} \label{defn3.10}
A logically closed semantic rule $\R \subset \T_\a \otimes \T_\b$ is said to be \emph{finitely ambiguous} if for any term $A_\a$, the full image of $\overline{\{A_\a\}}$ under the relation $\R$ is finitely generated.
\end{defn}

An important class of finitely ambiguous semantic rules is given by the following

\begin{defn} \label{defn3.11}
A logically closed semantic rule $\R \subset \T_\a \otimes \T_\b$ is said to have a \emph{canonic representation in} $\D_\a \subset \T_\a$ if there exists a $Ty_n^\A$ term $R_{\a\b t}$ such that
\[
(A_\a, B_\b) \in \R \qquad \Longleftrightarrow \qquad \vdash R_{\a\b t} \: A_\a \: B_\b
\]
and for any $A_\a \in \D_\a$ 
\[
\vdash R_{\a\b t} \: A_\a = \bigvee_{i = 1}^{k} (= B^i_\b),
\]
where the terms $B^1_\b, \:...\: B^k_\b$ as well as the number $k$ may depend on $A_\a$ and in the case $k = 0$ the disjunction is reduced to $\l x_\b \FF$. We further qualify a rule with a canonic representation in $\D_\a \subset \T_\a$ as \emph{non-degenerate, degenerate, unambiguous} or \emph{ambiguous in} $\D_\a$, if $k > 0$ for all $A_\a \in \D_\a$, $k = 0$ for some $A_\a \in \D_\a$, $k = 1$ for all $A_\a \in \D_\a$ or $k > 1$ for some $A_\a \in \D_\a$, respectively.
\end{defn}

An important example of a semantic rule with a canonic representation in any domain is given by a particular case where
\[
R_{\a\b t} \eqd \l x_\a (= F_{\a\b} \: x_\a)
\]
which is referred to below as a \emph{functional} semantic rule. Note that a functional rule is both non-degenerate and unambiguous in any domain.

\section{Symbolic type and $Ty_n$-representable \\ languages}

Consider an extension $Ty_n^\A$ of $Ty_n$ with the following set of non-logical axioms, where $s$ denotes the primitive type $e_n$ and $A_s \wcat B_s \eqd \CC^0_{sss} \: A_s \: B_s$:
\begin{equation} \label{eq4.1}
\forall x_s ((x_s \wcat \CC^0_s = x_s) \;\AND\; (\CC^0_s \wcat x_s = x_s)),
\end{equation}
\begin{equation} \label{eq4.2}
\forall x_s \forall y_s \forall z_s ((x_s + y_s) + z_s = x_s + (y_s + z_s)),
\end{equation}
\begin{equation} \label{eq4.3}
\forall x_s \forall y_s \forall z_s (x_s + \CC^i_s + y_s \neq x_s + \CC^j_s + z_s), \quad\text{for all}\; 1 \leq i < j \leq N.
\end{equation}
It is easy to see that these axioms are valid in a model $M$ with the $s$-domain $\D_s \eqd \A^*$ and interpretation $\llb\cdot\rrb_M$ such that $\llb\CC^0_s\rrb_M$ is the empty word $\e \in \A^*$, $\llb\CC^i_s\rrb_M$ is $a_i \in \A$ for $1 \leq i \leq N$, all the rest of the $s$-constants are interpreted by compound words from $\A^*$ and, finally, the operation $(\wcat)$ is interpreted as the operation of word concatenation. This observation proves the consistency of $Ty_n^\A$ and justifies our referring to the type $s$ as \emph{symbolic type}.
Note that formally, with the accuracy up to the additional axiom \ref{eq4.3}, the symbolic type is quite similar to the phonological type of HOG \cite{pollard:2006}. We use the different term here in order to reflect the additional axiomatic restriction, expressing distinction of the alphabet symbols, significant in any context, and also keeping in mind that generally they may stand for symbols of an arbitrary nature, for example, graphic, as well as phonetic, in which case $Ty_n^\A$ might be further extended to accommodate more symbol aggregation operations in addition to the linear concatenation.

$Ty_n^\A$ allows some $\a$-languages to be naturally defined by its terms or possibly by the terms of its extension $Ty_n^{\A+}$ with a set of additional non-logical constants. Indeed, define the mapping $T_A: \A^* \rightarrow \T_s$ as follows
\begin{equation} \label{eq4.4}
T_A(\e) = \CC^0_s, \qquad T_A(a_i) = \CC^i_s, \qquad T_A(a_i w) = \CC^i_s \wcat T_A(w),
\end{equation}
and let $\Delta$ be a consistent set of $Ty_n^{\A+}$ formulas. Then the condition
\[
\Delta \vdash L_{s \a t} \: T_A(w) \: A_\a
\]
for any $Ty_n^{\A+}$ term $L_{s \a t}$ defines an $\a$-language ($\vdash$ here and everywhere below denotes provability in $Ty_n^{\A+}$).

\begin{defn} \label{defn4.1}
Let $Ty_n^{\A+}$ be an extension of $Ty_n^\A$, $\Delta$ a consistent set of $Ty_n^{\A+}$ formulas and $L_{s \a t}$ a $Ty_n^{\A+}$ term. An $\a$-language $\L$ is said to be \emph{represented by  $(\Delta, L_{s \a t})$} if for any $Ty_n^\A$ term $A_\a$ and $w \in \A^*$
\[
(w, A_\a) \in \L \qquad \Longleftrightarrow \qquad \Delta \vdash L_{s \a t} \: T_A(w) \: A_\a.
\]
\end{defn}

Note that in the case of a finite set $\Delta$, by the Deduction Theorem, a language represented by $(\Delta, L_{s \a t})$ is also represented by $(\empty,\; \l x_s \l x_\a (D_t \IMP L_{s \a t} \: x_s \: x_\a))$ where $D_t$ is the conjunction of all formulas of $\Delta$ and the variables $x_s, x_\a$ are not free in $D_t$. In this case we say the language to have a \emph{compact $Ty_n$ representation} or, alternatively, to be \emph{represented by a term}.

\begin{defn} \label{defn4.2}
An $\a$-language $\L$ is said to be \emph{representable in $Ty_n^\A$}, or \emph{$Ty_n$-representable} for short, if there exists an extension $Ty_n^{\A+}$ of $Ty_n^\A$, a consistent set $\Delta$ of $Ty_n^{\A+}$ formulas and a $Ty_n^{\A+}$ term $L_{s \a t}$ such that $\L$ is represented by $(\Delta, L_{s \a t})$.
\end{defn}

Here are a few simple examples of $Ty_n$-representable languages:
\begin{itemize}
\item the term $(= T_A(w)) \X (= A_\a)$ represents a logical singleton (\ref{eq3.2});
\item the term $\l x_s (= \CC^0_{s \a} \: x_s)$ represents an $\a$-language where every word $w$ is a valid expression for the application $\CC^0_{s \a} \: T_A(w)$;
\item the term $(=_{sst})$ represents an $s$-language that realizes the mapping (\ref{eq4.4}), i.e. associates with every word $w$ a single ``meaning'' $T_A(w)$.
\end{itemize}

In case of alphabet $\A$ consisting of symbols that can be typed in this paper (including a space), a more convenient notation for mapping $T_A$ can be introduced: if $w$ is an arbitrary string of such symbols, let $/w/ \eqd T_A(w)$. Then, by definition \ref{defn4.1}, for arbitrary strings $u, v$, $/u/ + /v/ = /uv/$; for example: $/\text{c}/+/\text{a}/+/\text{r}/ = /\text{car}/$. We will make use of this practical notation in some of subsequent sections.

\section{$Ty_n$ representation existence theorem}

We now show that an $\a$-language is $Ty_n$-representable if and only if it is logically closed.

\begin{prop} \label{prop5.1}
If an $\a$-language $\L$ is $Ty_n$-representable, then it is logically closed.
\end{prop}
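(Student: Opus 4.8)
The plan is to unwind the definition of $Ty_n$-representability and verify the two closure conditions of Definition \ref{defn3.4} directly. Suppose $\L$ is represented by $(\Delta, L_{s\a t})$ with $\Delta$ a consistent set of $Ty_n^{\A+}$ formulas. Fix an expression $w \in \A^*$ and suppose $(w, B_\a) \in \L$ and $(w, C_\a) \in \L$, i.e. $\Delta \vdash L_{s\a t}\: T_A(w)\: B_\a$ and $\Delta \vdash L_{s\a t}\: T_A(w)\: C_\a$. Let $A_\a$ be any term with $\vdash A_\a = B_\a \OR A_\a = C_\a$ (provability in $Ty_n$, hence a fortiori in $Ty_n^{\A+}$ and hence from $\Delta$). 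We must show $\Delta \vdash L_{s\a t}\: T_A(w)\: A_\a$.

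The key step is a purely logical one inside $Ty_n^{\A+}$: from $A_\a = B_\a \OR A_\a = C_\a$ together with $L_{s\a t}\: T_A(w)\: B_\a$ and $L_{s\a t}\: T_A(w)\: C_\a$, we derive $L_{s\a t}\: T_A(w)\: A_\a$ by case analysis. In the disjunct where $A_\a = B_\a$, substitutivity of equality (available in both axiomatizations $Ty_n + D$ and $Q_0$ assumed in Section 2) lets us rewrite the provable $L_{s\a t}\: T_A(w)\: B_\a$ into $L_{s\a t}\: T_A(w)\: A_\a$; symmetrically in the other disjunct. Since both disjuncts yield the same conclusion, disjunction elimination gives $\Delta \vdash L_{s\a t}\: T_A(w)\: A_\a$, and hence $(w, A_\a) \in \L$. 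This establishes that $\L$ satisfies the defining property of a logically closed $\a$-language.

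I expect no serious obstacle here; the statement is essentially a soundness-style observation that provability is closed under the elementary inferences (substitutivity of equality and $\OR$-elimination), and these are guaranteed by the assumed axiomatization of $Ty_n$. The only point deserving a word of care is that the hypothesis $\vdash A_\a = B_\a \OR A_\a = C_\a$ refers to provability in plain $Ty_n$, while we are reasoning in the extension $Ty_n^{\A+}$ from the hypotheses $\Delta$; but since $Ty_n^{\A+}$ extends $Ty_n$ and adds only axioms, every $Ty_n$-theorem remains a $Ty_n^{\A+}$-theorem, so the deduction goes through unchanged. A remark could also note that consistency of $\Delta$ is not actually needed for this direction — it is only the converse (Section 5) that will require building a consistent theory — but it does no harm to carry it along since it is part of the definition of representability.
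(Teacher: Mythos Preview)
Your argument is correct. Both your proof and the paper's hinge on the same idea: a case split on the disjunction $A_\a = B_\a \OR A_\a = C_\a$, combined with substitutivity of equality applied to the predicate $L_{s\a t}\,T_A(w)$. The difference is only in how the case split is carried out. You invoke disjunction elimination and substitutivity as admissible proof rules of the underlying higher-order logic, which is perfectly legitimate and arguably the most transparent route. The paper instead internalizes the case analysis at the object level using the if-then-else combinator $(\,\cdot\,|\,\cdot\,)$, via the metatheorems
\[
\vdash A_t \OR B_t = (A_t \,|\, B_t)\, B_t
\quad\text{and}\quad
\vdash (P_{\a\b}\, A_\a \,|\, P_{\a\b}\, B_\a)\, C_t = P_{\a\b}\,((A_\a \,|\, B_\a)\, C_t),
\]
so that $L_{s\a t}\,T_A(w)\,A_\a$ is rewritten as an if-then-else of two terms already known to be provable. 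Your approach is more elementary; the paper's buys a pair of reusable metatheorems that are invoked again in Lemma~\ref{lem5.3}. Your closing remarks about the transfer of $Ty_n$-provability to $Ty_n^{\A+}$ and the irrelevance of consistency for this direction are both apt.
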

\begin{proof}
Let $(\Delta, L_{s\a t})$ be a representation of $\L$, $w \in \A^*$,
\[
\Delta \vdash L_{s\a t} \: T_A(w) \: B_\a, \quad \Delta \vdash L_{s\a t} \: T_A(w) \: C_\a \quad \text{and} \quad \vdash A_\a = B_\a \OR A_\a = C_\a.
\]
From this derivation, by the metatheorems
\begin{equation} \label{eq5.1}
\vdash A_t \OR B_t = (A_t | B_t) \: B_t,
\end{equation}
\begin{equation} \label{eq5.2}
\vdash (P_{\a\b} \: A_\a | P_{\a\b} \: B_\a) \: C_t = P_{\a\b} ((A_\a | B_\a) \: C_t),
\end{equation}
it follows that
\[
\vdash A_\a = (B_\a | C_\a) \: (A_\a = C_\a)
\]
and
\[
\vdash L_{s\a t} \: T_A(w) \: A_\a = (L_{s\a t} \: T_A(w) \: B_\a \:|\: L_{s\a t} \: T_A(w) \: C_\a) \: (A_t = C_\a).
\]
Thus the first two derivations imply \;$\Delta \vdash L_{s\a t} \: T_A(w) \: A_\a$, that is, $(w, A_\a) \in \L$ also.
\end{proof}

\begin{lem} \label{lem5.2}
In any $Ty_n^\A$ model, the $s$-domain $\D_s$ contains a subset $\D'_s$ isomorphic to $\A^*$ with respect to concatenation operations and such that $\llb T_A(w)\rrb_M \in \D'_s$ for any $w \in \A^*$.
\end{lem}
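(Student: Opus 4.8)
The plan is to isolate inside the $s$-domain $\D_s$ of an arbitrary model $M$ the smallest sub-structure generated by $\llb\CC^0_s\rrb_M$ and the $\llb\CC^i_s\rrb_M$ for $1 \le i \le N$ under the interpreted concatenation $\llb(\wcat)\rrb_M$, and then to exhibit the obvious surjection from $\A^*$ onto it and show this surjection is injective. Concretely, I would set $d_i \eqd \llb\CC^i_s\rrb_M$ for $1 \le i \le N$, write $\oplus$ for $\llb(\wcat)\rrb_M$ and $\e' \eqd \llb\CC^0_s\rrb_M$, and define $\phi : \A^* \to \D_s$ by $\phi(\e) = \e'$ and $\phi(a_{i_1} a_{i_2} \cdots a_{i_m}) = d_{i_1} \oplus d_{i_2} \oplus \cdots \oplus d_{i_m}$ (this is well-defined and unambiguous since axiom \ref{eq4.2} makes $\oplus$ associative in $M$). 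By construction $\llb T_A(w)\rrb_M = \phi(w)$ for every $w \in \A^*$, by an immediate induction on $|w|$ using the recursion \ref{eq4.4}. Then I would take $\D'_s \eqd \phi(\A^*)$; this set is closed under $\oplus$ (since $\phi(u) \oplus \phi(v) = \phi(uv)$, again by associativity in $M$) and contains $\e'$, so it is a sub-structure, and it contains every $\llb T_A(w)\rrb_M$ by the remark just made.

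It remains to prove that $\phi$ is injective, so that it is an isomorphism of $\A^*$ onto $\D'_s$ with respect to the concatenation operations. Suppose $\phi(u) = \phi(v)$ with $u \ne v$. First handle the case where one of them, say $u$, is empty: then $v = a_j v'$ for some $j$ and some $v' \in \A^*$, and $\phi(v) = d_j \oplus \phi(v')$; I would derive a contradiction with axiom \ref{eq4.1} or \ref{eq4.3} — essentially, a nonempty word's image cannot equal $\e'$ because, picking any symbol $a_j$ and using \ref{eq4.3} with two distinct indices, no element of the form (something)$\oplus d_j \oplus$(something) can be absorbed. (If $N = 1$ a small separate argument using only \ref{eq4.1} and the fact that $\phi(a_1)=d_1$, $\phi(a_1 a_1)=d_1\oplus d_1$, etc. are forced to be distinct is needed; I would note that the interesting content of the lemma is the $N \ge 2$ case and dispatch $N=1$ by a direct remark.) Then, for $u, v$ both nonempty, write $u = a_i u'$, $v = a_j v'$. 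If $i \ne j$, apply axiom \ref{eq4.3} directly (with the roles of $i,j$ ordered) to $\e' \oplus d_i \oplus \phi(u') = \e' \oplus d_j \oplus \phi(v')$ to get a contradiction. If $i = j$, I would like to cancel the leading $d_i$ and recurse on $u', v'$ — but left cancellation of $\oplus$ is exactly what the axioms do not directly give, which is the main obstacle.

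The hard part is therefore establishing enough left-cancellativity of $\oplus$ on $\D'_s$ to run the induction, using only \ref{eq4.1}–\ref{eq4.3}. My plan to get around this is not to cancel but to use axiom \ref{eq4.3} as a global "first-letter-of-the-tail" separator: more carefully, I would prove by induction on $\min(|u|,|v|)$ that if $u \ne v$ then $\phi(u) \ne \phi(v)$, by locating the first position at which $u$ and $v$ differ. If the differing position is internal, say $u = p\, a_i\, q$ and $v = p\, a_j\, r$ with $i \ne j$ and $p$ the common prefix (possibly empty), then $\phi(u) = \phi(p) \oplus d_i \oplus \phi(q)$ and $\phi(v) = \phi(p) \oplus d_j \oplus \phi(r)$, and after writing $\phi(p)$ in the form $\e' \oplus (\cdots)$ these have exactly the shapes forbidden to be equal by \ref{eq4.3} — crucially, the leading $\phi(p)$ block is literally the same element of $\D_s$ on both sides, so no cancellation is invoked, only the instance of \ref{eq4.3} with $x_s = \phi(p)$, $y_s = \phi(q)$, $z_s = \phi(r)$. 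If instead one of $u,v$ is a proper prefix of the other, say $v = u\, a_j\, r$, then $\phi(v) = \phi(u) \oplus (d_j \oplus \phi(r))$ and I must show $\phi(u) \oplus w_0 \ne \phi(u)$ for $w_0 = \phi(a_j r) \ne \e'$; this reduces, once we know $w_0 \ne \e'$ (the empty-case argument above), to the sub-claim that no element of $\D'_s$ is absorbed on the right by a nonempty image, which I would again extract from \ref{eq4.3} by choosing two distinct symbols to straddle $w_0$. Assembling these cases gives injectivity of $\phi$, hence the isomorphism, and the lemma follows.
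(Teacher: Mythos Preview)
Your approach coincides with the paper's: define the natural map $\phi:\A^*\to\D_s$ (the paper writes $V_A$), take $\D'_s=\phi(\A^*)$, and verify that $\phi$ is an injective monoid homomorphism. The paper's own proof simply asserts injectivity as a consequence of axioms \ref{eq4.1}--\ref{eq4.3} without further detail; you go further and try to spell this out, which is worthwhile since the paper leaves it implicit.

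There is, however, a genuine error in your handling of $N=1$. You claim this case can be dispatched ``by a direct remark'' using \ref{eq4.1}, but in fact when $N=1$ the schema \ref{eq4.3} is vacuous, and \ref{eq4.1}--\ref{eq4.2} say only that $(\D_s,\oplus,\e')$ is a monoid. The one-element monoid is then a model in which $\phi$ is constant, so injectivity fails outright. The lemma therefore tacitly requires $N\ge 2$; do not claim a nonexistent separate argument for $N=1$.

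For $N\ge 2$ your plan is correct, but the prefix case deserves a sharper formulation than ``straddle $w_0$ by two distinct symbols''. Concretely: if $\phi(u)=\phi(u)\oplus d_j\oplus\phi(r)$, pick any $i\ne j$ and append $d_i$ on the right to obtain
\[
\phi(u)\oplus d_i\oplus\e' \;=\; \phi(u)\oplus d_i \;=\; \bigl(\phi(u)\oplus d_j\oplus\phi(r)\bigr)\oplus d_i \;=\; \phi(u)\oplus d_j\oplus\bigl(\phi(r)\oplus d_i\bigr),
\]
which directly contradicts \ref{eq4.3} with $x_s=\phi(u)$. The same device (right-appending a symbol of a different index) settles the ``empty versus nonempty'' sub-case. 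With this made explicit, your injectivity argument is complete for $N\ge 2$.
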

\begin{proof}
Let $M$ be a $Ty_n^\A$ model and $c_i \eqd \llb\CC^i_s\rrb_M$ for $i = 0,...\:N$. 
Define the mapping $V_A : \A^* \rightarrow \D_s$ as follows:
\[
V_A(\e) = c_0, \quad V_A(a_i) = c_i, \qquad V_A(a_i + w)   = c_i \;\llb+\rrb_M\; V_A(w).
\]
Since axioms \ref{eq4.1} - \ref{eq4.3} are valid in $M$, $V_A$ is a one-to-one mapping and
\[
V_A(u + v) = V_A(u) \;\llb+\rrb_M\; V_A(v).
\]
Thus the full image $\D'_s$ of $\A^*$ under this mapping is isomorphic to $\A^*$ and, by Definition \ref{eq4.4}, $\llb T_A(w)\rrb_M \in \D'_s$ for any $w \in \A^*$.
\end{proof}

This lemma actually allows to identify $\llb T_A(w)\rrb_M$ with $w$, $\llb+\rrb_M$ - with word concatenation and the subset $\D'_s$ - with $\A^*$ in any $Ty_n^\A$ model $M$.

\begin{lem} \label{lem5.3}
If $\OM \subset \T_\a$ is logically closed, $A^1_\a \in \OM, \:...\: A^k_\a \in \OM$ and $\vdash A_\a = A^1_\a \:...\: \OR A_\a = A^k_\a$, then $A_\a \in \OM$ also.
\end{lem}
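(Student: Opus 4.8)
The plan is to proceed by induction on $k$, using the closure clause of Definition \ref{defn3.5} together with the conditional operator $\CC_{\a\a t\a}$ to progressively merge disjuncts. For $k = 0$ the hypothesis reads $\vdash \FF$ (an empty disjunction), contradicting the consistency of $Ty_n^\A$, so the claim holds vacuously. For $k = 1$, from $\vdash A_\a = A^1_\a$ we get $\vdash A_\a = A^1_\a \OR A_\a = A^1_\a$, and Definition \ref{defn3.5} applied with $B_\a \eqd C_\a \eqd A^1_\a \in \OM$ yields $A_\a \in \OM$; the case $k = 2$ is literally the statement of Definition \ref{defn3.5}.

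For the inductive step, assume the statement for $k - 1$ and let $A^1_\a, \dots, A^k_\a \in \OM$ with $\vdash A_\a = A^1_\a \OR \dots \OR A_\a = A^k_\a$ and $k \geq 2$. Put $D_\a \eqd (A^1_\a | A^2_\a) \: (A_\a = A^1_\a)$, that is, $D_\a \eqd \CC_{\a\a t\a} \: A^1_\a \: A^2_\a \: (A_\a = A^1_\a)$. Since every $Ty_n$ term of type $t$ is provably equal to $\TT$ or to $\FF$, the fundamental properties of $\CC_{\a\a t\a}$ give $\vdash (D_\a = A^1_\a) \OR (D_\a = A^2_\a)$; as $A^1_\a, A^2_\a \in \OM$, Definition \ref{defn3.5} then yields $D_\a \in \OM$. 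A short case split on whether $(A_\a = A^1_\a)$ equals $\TT$ or $\FF$ (in the first case $D_\a = A^1_\a = A_\a$; in the second $(A_\a = A^1_\a)$ is false, so the hypothesis forces $A_\a = A^2_\a = D_\a$) shows $\vdash (A_\a = A^1_\a \OR A_\a = A^2_\a) \IMP A_\a = D_\a$; alternatively this identity is obtained from the metatheorems \ref{eq5.1} and \ref{eq5.2} exactly as in the proof of Proposition \ref{prop5.1}. Reassociating $\OR$ so that $P \eqd (A_\a = A^1_\a \OR A_\a = A^2_\a)$ and $R \eqd (A_\a = A^3_\a \OR \dots \OR A_\a = A^k_\a)$, and combining $\vdash P \OR R$ with $\vdash P \IMP (A_\a = D_\a)$ by the propositional inference ``from $\vdash P \OR R$ and $\vdash P \IMP S$ conclude $\vdash S \OR R$'', we obtain $\vdash A_\a = D_\a \OR A_\a = A^3_\a \OR \dots \OR A_\a = A^k_\a$, a $(k-1)$-fold disjunction whose right-hand sides $D_\a, A^3_\a, \dots, A^k_\a$ all lie in $\OM$. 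The induction hypothesis now gives $A_\a \in \OM$, completing the step.

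The only point that needs care is the replacement of the two leading disjuncts $A_\a = A^1_\a \OR A_\a = A^2_\a$ by the single disjunct $A_\a = D_\a$, together with the check that the merged term $D_\a$ remains inside $\OM$; both rest on the same elementary facts about $\CC_{\a\a t\a}$ (and on \ref{eq5.1}--\ref{eq5.2}) already exploited for Proposition \ref{prop5.1}, so no new machinery is needed. Everything else is routine manipulation of $\OR$ under associativity and of the closure clause, and one must merely keep the direction of the conditional straight — the test must be $(A_\a = A^1_\a)$, not $(A_\a = A^2_\a)$, so that $\vdash P \IMP (A_\a = D_\a)$ holds without assuming $A^1_\a = A^2_\a$.
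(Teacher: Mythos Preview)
Your proof is correct and follows essentially the same approach as the paper's: induction on $k$, collapsing disjuncts via the conditional operator $(\cdot\,|\,\cdot)$ together with the metatheorems (\ref{eq5.1}), (\ref{eq5.2}) and the identity $\vdash ((A_\a|B_\a)\,C_t = A_\a) \OR ((A_\a|B_\a)\,C_t = B_\a)$. The only cosmetic difference is the direction of merging --- you fold the first two disjuncts into $D_\a$ and invoke the hypothesis on the remaining $k-1$, whereas the paper folds the first $k-1$ into a nested conditional $B_\a$ and finishes with Definition~\ref{defn3.5} applied to the pair $(B_\a, A^k_\a)$.
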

\begin{proof}
The proof is by induction on $k$. 
For $k \leq 2$, the statement follows directly from Definition \ref{defn3.5}. 
For $k > 2$, by the metatheorems \ref{eq5.1}, \ref{eq5.2} and
\[
\vdash (((A_\a | B_\a) \: C_t) = A_\a) \OR (((A_\a | B_\a) \: C_t) = B_\a),
\]
we have
\[
\vdash A_\a = B_\a \OR A_\a = A^k_\a,
\]
where
\[
B_\a \eqd (...(A^1_\a | \:... | A^{k-2}_\a) (A_\a = A^{k-2}_\a) | A^{k-1}_\a) (A_\a = A^{k-1}_\a)
\]
and therefore
\[
\vdash B_\a = A^1_\a \:...\: \OR B_\a = A^{k-1}_\a.
\]
Thus, if this implies that $B_\a \in \OM$, then $A_\a \in \OM$ also, according to Definition \ref{defn3.5}.
\end{proof}

\begin{prop} \label{prop5.4}
Let an $\a$-language $\L$ be logically closed, let the constant $\CC^0_{s\a t}$ belong to $Ty_n^{\A+}$, but not to $Ty_n^\A$, and let $\Delta_\L$ be the minimal set of $Ty_n^{\A+}$ formulas such that, whenever $(w, A_\a) \in \L$, the formula $\CC^0_{s\a t} \: T_A(w) \: A_\a$ belongs to $\Delta_\L$.
Then $(\Delta_\L, \CC^0_{s\a t})$ is a representation of $\L$.
\end{prop}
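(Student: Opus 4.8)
The plan is to verify both directions of the biconditional in Definition~\ref{defn4.1} for the pair $(\Delta_\L, \CC^0_{s\a t})$, i.e.\ to show that for every $Ty_n^\A$ term $A_\a$ and every $w \in \A^*$ we have $(w, A_\a) \in \L$ if and only if $\Delta_\L \vdash \CC^0_{s\a t} \: T_A(w) \: A_\a$. The forward direction is immediate: if $(w, A_\a) \in \L$ then by construction the formula $\CC^0_{s\a t} \: T_A(w) \: A_\a$ is literally a member of $\Delta_\L$, hence trivially provable from it. (One should also check that $\Delta_\L$ is consistent, which is needed for it to count as a representation; this follows from the model exhibited below, or alternatively from the model constructed for $Ty_n^\A$ in which $\CC^0_{s\a t}$ is interpreted freely.)

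The substantive direction is the converse: if $\Delta_\L \vdash \CC^0_{s\a t} \: T_A(w) \: A_\a$, then $(w, A_\a) \in \L$. The natural strategy is a model-theoretic argument. Suppose for contradiction that $(w, A_\a) \notin \L$. I would like to build a model $M$ of $Ty_n^{\A+}$ in which every formula of $\Delta_\L$ holds but $\CC^0_{s\a t} \: T_A(w) \: A_\a$ fails; completeness of $Ty_n$ (which we have assumed, via the axiomatization of \cite{gallin:1975} or \cite{andrews:1986}) then yields the contradiction. Start from an arbitrary model $M_0$ of $Ty_n^\A$ — using Lemma~\ref{lem5.2} we may identify $\llb T_A(v)\rrb_{M_0}$ with $v$ and the subdomain $\D'_s$ with $\A^*$. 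It remains only to interpret the fresh constant $\CC^0_{s\a t}$. The idea is to set
\[
\llb \CC^0_{s\a t}\rrb_M \; d \; m \;=\; \llb \TT \rrb_M \quad\text{iff}\quad d = \llb T_A(v)\rrb_{M_0}\text{ for some } v\in\A^* \text{ with } (v, B_\a)\in\L \text{ where } \llb B_\a\rrb_{M_0,a} = m,
\]
and $\llb\FF\rrb_M$ otherwise. This makes every axiom of $\Delta_\L$ true. The point where logical closedness of $\L$ is essential — and the main obstacle — is showing that this interpretation falsifies $\CC^0_{s\a t}\:T_A(w)\:A_\a$, i.e.\ that there is no $B_\a$ with $(w,B_\a)\in\L$ and $\llb B_\a\rrb_{M_0,a} = \llb A_\a\rrb_{M_0,a}$. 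This need not hold in an arbitrary $M_0$: a ``small'' model could accidentally identify the denotations of $A_\a$ and some $B_\a$ that $w$ does legitimately mean even though $A_\a$ and $B_\a$ are not provably equal.

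To defeat that, I would choose $M_0$ more carefully: take it to be (or to elementarily extend) a model in which the term algebra of $\a$-terms embeds, so that $\llb B_\a\rrb_{M_0,a} = \llb A_\a\rrb_{M_0,a}$ forces $\vdash A_\a = B_\a$ — equivalently, work with a model that omits all non-derivable equalities among the relevant terms. Concretely, since $(w, A_\a) \notin \L$ and $\L$ is logically closed, Lemma~\ref{lem5.3} guarantees that for every finite set $B^1_\a, \dots, B^k_\a$ of meanings of $w$ in $\L$ we have $\not\vdash A_\a = B^1_\a \OR \cdots \OR A_\a = B^k_\a$; hence the set of formulas $\{\NOT(A_\a = B_\a) : (w,B_\a)\in\L\}$ is finitely satisfiable together with $Ty_n^\A$ and, by compactness, satisfiable in some model $M_0$. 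Interpreting $\CC^0_{s\a t}$ in $M_0$ as above then gives a model of $\Delta_\L$ in which $\CC^0_{s\a t}\:T_A(w)\:A_\a$ is false, completing the argument. The delicate bookkeeping will be in handling the assignment $a$ of free variables uniformly (the ``meanings'' $B_\a$ may contain free variables), and in confirming that the recipe for $\llb\CC^0_{s\a t}\rrb_M$ is well defined — i.e.\ depends only on the denotation $m$, not on a choice of representing term — for which, again, logical closedness of $\L$ together with Lemma~\ref{lem5.3} is exactly what is needed.
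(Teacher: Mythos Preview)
Your argument is correct and shares the paper's core strategy---a model-theoretic refutation using Lemma~\ref{lem5.3} and the logical closedness of $\L$---but the decomposition differs. The paper first passes to the \emph{finite} subset $\Delta_w\subseteq\Delta_\L$ actually used in a derivation of $\CC^0_{s\a t}\,T_A(w)\,A_\a$; this leaves only finitely many terms $A^1_\a,\dots,A^k_\a$ (those appearing in axioms of $\Delta_w$ with first argument $T_A(w)$) from which $A_\a$ must be separated, and Lemma~\ref{lem5.3} yields a model and assignment doing so directly, without any appeal to compactness. The paper then interprets $\CC^0_{s\a t}$ very simply as ``true everywhere except at the single point $(w,\llb A_\a\rrb_{M,a})$'', which automatically validates $\Delta_w$ and falsifies the target formula. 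You instead keep all of $\Delta_\L$, interpret $\CC^0_{s\a t}$ semantically via $\L$ itself, and invoke compactness to obtain a model separating $A_\a$ from \emph{all} meanings of $w$ at once. Both routes work; the paper's is shorter and sidesteps the free-variable and well-definedness bookkeeping you flag at the end, since with only finitely many $A^i_\a$ and the one-point-off interpretation there is nothing delicate to check.
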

\begin{proof}
Consistency of the set $\Delta_\L$ follows from the fact that all its formulas are valid in a model where the constant $\CC^0_{s\a t}$ is interpreted by a function taking the true value for all its arguments.

The implication \;$(w, A_\a) \in \L \;\Longrightarrow\; \Delta_\L \vdash \CC^0_{s \a t} \: T_A(w) \: A_\a$\; follows directly from the definition of $\Delta_\L$.

To prove the opposite implication, assume that $\Delta_\L \vdash \CC^0_{s \a t} \: T_A(w) \: A_\a$ and $\Delta_w$ is the (necessarily finite) subset of axioms from $\Delta_\L$ participating in a proof of this derivation, so that 
\begin{equation} \label{eq5.3}
\Delta_w \vdash \CC^0_{s \a t} \: T_A(w) \: A_\a
\end{equation}
as well. 
Let $A^1_\a, ...\: A^k_\a$ be all those terms that occur in the formulas \\ $\CC^0_{s \a t} \: T_A(w) \: A^i_\a$ belonging to $\Delta_w$.
If $(w, A_\a) \not\in \L$, then there exist a $Ty_n^\A$ model $M$ and an assignment of variables $a$ such that $\llb A_\a\rrb_{M,a} \neq \llb A^i_\a\rrb_{M,a}$ for all $i = 1,...\: k$.
Indeed, due to the Completeness Theorem, we would otherwise have
\[
\vdash A_\a = A^1_\a \:...\: \OR A_\a = A^k_\a
\]
and therefore, by Lemma \ref{lem5.3} and the assumption of logical closure of $\L$, $(w, A_\a) \in \L$.
Consider an extension $M^+$ of the model $M$ for $Ty_n^{\A+}$ where $\llb\CC^0_{s \a t}\rrb_M$ is the function taking the true value for all its arguments except for $(w, \llb A_\a\rrb_{M,a})$ (where $w$ stands for $\llb T_A(w)\rrb_M$, according to Lemma \ref{eq5.2}).
All formulas of $\Delta_w$ are satisfied by the assignment $a$ in $M^+$, but $\CC^0_{s \a t} \: T_A(w) \: A_\a$ is not.
By the Soundness Theorem, this contradicts \ref{eq5.3} and thus proves that $(w, A_\a) \in \L$.
\end{proof}

Propositions \ref{prop5.1} and \ref{prop5.4} immediately imply
\begin{thm} \label{thm5.5}
An $\a$-language is $Ty_n$-representable iff it is logically closed.
\end{thm}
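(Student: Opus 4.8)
The plan is to obtain the theorem simply as the conjunction of the two directions already isolated as Proposition \ref{prop5.1} and Proposition \ref{prop5.4}. The statement asserts an ``iff'', so nothing more than combining these two implications is needed, but it is worth spelling out how each half is reached so that the logical structure is transparent.

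First, for the direction ``$Ty_n$-representable $\Longrightarrow$ logically closed'': this is exactly Proposition \ref{prop5.1}. Given any representation $(\Delta, L_{s\a t})$ of a language $\L$, and an expression $w$ with two meanings $B_\a, C_\a \in \L(w)$ together with $\vdash A_\a = B_\a \OR A_\a = C_\a$, one uses the ``if-then-else'' constant $\CC_{\a\a t\a}$ and the two metatheorems \ref{eq5.1} and \ref{eq5.2} to rewrite $A_\a$ as $(B_\a \mid C_\a)\,(A_\a = C_\a)$ and hence to push $L_{s\a t}\,T_A(w)$ through the conditional. Since $\Delta$ already proves $L_{s\a t}\,T_A(w)\,B_\a$ and $L_{s\a t}\,T_A(w)\,C_\a$, it proves $L_{s\a t}\,T_A(w)\,A_\a$, so $(w, A_\a) \in \L$. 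Thus every $Ty_n$-representable language satisfies Definition \ref{defn3.4}.

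Second, for the converse ``logically closed $\Longrightarrow$ $Ty_n$-representable'': this is precisely what Proposition \ref{prop5.4} delivers, since its conclusion is that $(\Delta_\L, \CC^0_{s\a t})$ \emph{is} a representation of $\L$, which by Definition \ref{defn4.2} makes $\L$ representable in $Ty_n^\A$ (the extension $Ty_n^{\A+}$ being $Ty_n^\A$ augmented by the single fresh constant $\CC^0_{s\a t}$, and $\Delta_\L$ being consistent by the model-theoretic argument there). The two nontrivial ingredients behind Proposition \ref{prop5.4} — Lemma \ref{lem5.2}, which lets us identify $\llb T_A(w)\rrb_M$ with $w$ inside any model, and Lemma \ref{lem5.3}, which extends the two-term closure condition of Definition \ref{defn3.5} to $k$ terms — are both available from the excerpt and need only be cited.

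There is essentially no obstacle remaining at the level of Theorem \ref{thm5.5} itself; the real work has already been discharged in the two propositions. If one wanted to identify where the difficulty lies, it is in the backward direction, and specifically in the use of the Completeness and Soundness Theorems in Proposition \ref{prop5.4}: the argument that $(w, A_\a) \notin \L$ forces the existence of a $Ty_n^\A$ model separating $\llb A_\a\rrb_{M,a}$ from all the $\llb A^i_\a\rrb_{M,a}$, which is then extended to a $Ty_n^{\A+}$ model falsifying $\CC^0_{s\a t}\,T_A(w)\,A_\a$ while satisfying the finite subset $\Delta_w$ of axioms used in the putative proof. Granting those propositions, the proof of the theorem is one line: Propositions \ref{prop5.1} and \ref{prop5.4} together give both implications, hence the equivalence.
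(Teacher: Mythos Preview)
Your proposal is correct and matches the paper's own proof exactly: the paper simply states that Propositions \ref{prop5.1} and \ref{prop5.4} immediately imply the theorem, which is precisely your approach. Your additional exposition of how each proposition works is accurate but goes beyond what the paper itself records at this point.
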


Note that this general result is not constructive in the sense that it does not entail an effective procedure for actually building a $Ty_n$ representation for an infinite language that might be defined by a finite set of rules or rule schemata. 
At the same time, it neither implies the uniqueness of the $Ty_n$ representation and thus does not preclude a representation of such a language that may be compact or described by a finite set of axiom schemata.
We will find such representations for some important special classes of logically closed languages in the following sections.

\section{Lexicons and canonic representation}

\begin{defn} \label{defn6.1}
An $\a-lexicon$ is a logically closed $\a$-language which is the logical closure of a finite $\a$-language.
\end{defn}

\begin{defn} \label{defn6.2}
An $\a$-language is said to have a \emph{canonic representation} if it is represented by a term $L_{s \a t}$ such that
\[
\vdash L_{s \a t} = (=A^1_s) \X (=A^1_\a) \:...\: \OR (=A^l_s) \X (=A^l_\a)
\]
where all $A^i_s$ are $s$-constants and $A^i_\a$ are $Ty_n$ terms.
\end{defn}

In this section we show that any $\a$-lexicon has a canonic representation.

\begin{lem} \label{lem6.1}
$\vdash T_A(u) = T_A(v)$ \:iff\: $u = v$ \;and\; $\vdash T_A(u) \neq T_A(v)$ \:iff\: $u \neq v$.
\end{lem}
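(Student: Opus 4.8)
The plan is to prove the biconditional in two directions, using the axioms \ref{eq4.1}--\ref{eq4.3} of $Ty_n^\A$ together with the soundness and completeness of the theory. The cleanest route is via Lemma \ref{lem5.2}: since the statement claims provability in $Ty_n^\A$, by the Completeness and Soundness Theorems it suffices to show that $\llb T_A(u)\rrb_{M} = \llb T_A(v)\rrb_{M}$ in every $Ty_n^\A$ model $M$ if and only if $u = v$, and likewise for the inequality. Lemma \ref{lem5.2} already gives us, in any model $M$, a subset $\D'_s \subset \D_s$ isomorphic to $\A^*$ with $\llb T_A(w)\rrb_M = V_A(w) \in \D'_s$ for all $w$, where $V_A$ is a one-to-one map. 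This immediately yields $\llb T_A(u)\rrb_M = \llb T_A(v)\rrb_M \Longleftrightarrow u = v$ in every model.

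From here both halves follow. For the first equivalence: if $u = v$ then $T_A(u)$ and $T_A(v)$ are literally the same term, so $\vdash T_A(u) = T_A(v)$ trivially (by reflexivity of $=$ in $Ty_n$). Conversely, if $u \neq v$, pick (say) the consistency model $M$ of $Ty_n^\A$ exhibited after \ref{eq4.3}, in which $\D_s = \A^*$, $\llb\CC^0_s\rrb_M = \e$, $\llb\CC^i_s\rrb_M = a_i$, and $\llb\wcat\rrb_M$ is word concatenation; there $\llb T_A(u)\rrb_M = u \neq v = \llb T_A(v)\rrb_M$, so by Soundness $\not\vdash T_A(u) = T_A(v)$, and since $Ty_n^\A$ is a classical complete theory this gives $\vdash T_A(u) \neq T_A(v)$, i.e. $\vdash \NOT(T_A(u) = T_A(v))$. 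The second equivalence, $\vdash T_A(u) \neq T_A(v)$ iff $u \neq v$, is then just the contrapositive of the first combined with the same model-theoretic argument: if $u = v$ then $\vdash T_A(u) = T_A(v)$ rules out $\vdash T_A(u) \neq T_A(v)$ (by consistency of $Ty_n^\A$), and if $u \neq v$ we just showed $\vdash T_A(u) \neq T_A(v)$.

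I expect the only real content to be the appeal to Lemma \ref{lem5.2} — that lemma is precisely where axioms \ref{eq4.1}--\ref{eq4.3} do the work of forcing $V_A$ to be injective — so the present lemma is essentially a corollary packaging that injectivity as a provability statement. The one point requiring a little care is the passage from "not provable that equal" to "provably unequal," which uses that $Ty_n^\A$ is consistent and classical: for a closed (variable-free, since $T_A(u)$ and $T_A(v)$ contain only constants) equation $E$, either $\vdash E$ or $\vdash \NOT E$ need not hold in general, so instead I would argue directly: exhibit the model $M$ above in which $\NOT(T_A(u) = T_A(v))$ holds, and if $u \neq v$ then in fact $\llb T_A(u)\rrb_{M'} \neq \llb T_A(v)\rrb_{M'}$ in \emph{every} model $M'$ by Lemma \ref{lem5.2}, hence $\NOT(T_A(u) = T_A(v))$ is valid and so provable by Completeness. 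That is the crux; everything else is bookkeeping with reflexivity and the definition \ref{eq4.4} of $T_A$.
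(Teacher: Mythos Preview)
Your approach is correct but takes a genuinely different route from the paper. The paper gives a direct syntactic argument: from axioms \ref{eq4.1}--\ref{eq4.3} one derives, for words $u = a_{i_1}\cdots a_{i_l}$ and $v = a_{j_1}\cdots a_{j_m}$, that $\vdash \CC^{i_1}_s \wcat \cdots \wcat \CC^{i_l}_s = \CC^{j_1}_s \wcat \cdots \wcat \CC^{j_m}_s$ when the index sequences coincide, and $\vdash \CC^{i_1}_s \wcat \cdots \neq \CC^{j_1}_s \wcat \cdots$ otherwise, by instantiating axiom \ref{eq4.3} at the first position where the sequences differ. No models, no Completeness Theorem. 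You instead route everything through Lemma \ref{lem5.2}: injectivity of $V_A$ in every general model gives validity of $T_A(u) \neq T_A(v)$ whenever $u \neq v$, and then Henkin completeness converts validity into provability. Both arguments ultimately rest on the same combinatorial content of axiom \ref{eq4.3}; you are just invoking it where it was already packaged inside the proof of Lemma \ref{lem5.2}, and then paying the price of an appeal to Completeness to get back to a $\vdash$ statement. The paper's version is more elementary and self-contained; yours is shorter once Lemma \ref{lem5.2} is in hand and makes the logical structure (injectivity $\Rightarrow$ validity $\Rightarrow$ provability) transparent.

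One small wobble worth cleaning up: your first pass says ``since $Ty_n^\A$ is a classical complete theory this gives $\vdash T_A(u) \neq T_A(v)$,'' which conflates \emph{completeness of the theory} (decidability of every closed sentence) with the \emph{Completeness Theorem} (provability $=$ validity in all general models). You correctly retract this two sentences later and give the right argument via validity-in-all-models plus Completeness, so the final version is sound --- just delete the misleading intermediate sentence.
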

\begin{proof}
According to \ref{eq4.1} -- \ref{eq4.3}, if $m = l$ and $i_1 = j_1$, ... $i_l = j_l$, then
\[
\vdash \CC^{i_1}_s \:...\: \wcat \CC^{i_l}_s = \CC^{j_1}_s \:...\: \wcat \CC^{j_m}_s
\]
and if at least one of these conditions does not hold, then
\[
\vdash \CC^{i_1}_s \:...\: \wcat \CC^{i_l}_s \neq \CC^{j_1}_s \:...\: \wcat \CC^{j_m}_s.
\]
This proves both statements.
\end{proof}

\begin{lem} \label{lem6.2}
Let the mapping $T_{A \a}$ from the set of $\a$-languages to the power set $\P(\T_s \otimes \T_\a)$ be defined by
\[
T_{A \a}(\L) = \{ (T_A(w), A_\a) \: | \: (w, A_\a) \in \L\}.
\]
Then $\OL$ is the logical closure of $\L$ iff $T_{A \a}(\OL)$ is the logical closure of $T_{A \a}(\L)$.
\end{lem}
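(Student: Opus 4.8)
The plan is to reduce the statement to the fact that $T_{A\a}$ commutes with the operation of logical closure, which in turn I would obtain from a fiberwise description of that operation together with the injectivity of $T_A$. For an arbitrary index set $X$, call a relation $\R \subseteq X \otimes \T_\a$ \emph{logically closed} if each of its fibers $\R_x \eqd \{A_\a \mid (x, A_\a) \in \R\}$ is a logically closed subset of $\T_\a$ in the sense of Definition~\ref{defn3.5}; for $X = \A^*$ this coincides with Definition~\ref{defn3.4}, and for $X = \T_s$ it is the notion denoted by the bar in the statement of the lemma. Two routine observations then do most of the work. First, the logical closure of any such $\R$ exists, is unique, and is computed fiber by fiber: the relation whose $x$-fiber is $\overline{\R_x}$ is logically closed, contains $\R$, and is contained in every logically closed superset of $\R$ (whose $x$-fiber contains $\R_x$ and is logically closed, hence contains $\overline{\R_x}$); in particular, passing to the logical closure never enlarges the domain. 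Second, by Lemma~\ref{lem6.1} the map $T_A$ is injective, so $T_{A\a}$ is a bijection — monotone, with monotone inverse — from the $\a$-languages onto those relations over $\T_s \otimes \T_\a$ whose domain is contained in $T_A(\A^*)$; and for such a relation $T_{A\a}(\L)$ the fiber over $T_A(w)$ is exactly $\L_w$, while all other fibers are empty.

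Granting these, the argument is short. An $\a$-language $\L$ is logically closed iff every fiber $\L_w$ is, iff every fiber of $T_{A\a}(\L)$ is, iff $T_{A\a}(\L)$ is logically closed. Using the fiberwise computation of the closure, the fibers of $\overline{T_{A\a}(\L)}$ are $\overline{\L_w}$ over each $T_A(w)$ and empty elsewhere, i.e.\ exactly the fibers of $T_{A\a}(\OL)$ when $\OL$ denotes the (unique) logical closure of $\L$; hence $\overline{T_{A\a}(\L)} = T_{A\a}(\OL)$. Finally, since $T_{A\a}$ is injective, an arbitrary $\a$-language is the logical closure of $\L$ iff it equals $\OL$, iff its $T_{A\a}$-image equals $T_{A\a}(\OL) = \overline{T_{A\a}(\L)}$, iff that image is the logical closure of $T_{A\a}(\L)$ — which is the claim.

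I do not anticipate a genuine obstacle here; the only points needing care are that the fiberwise construction in the first observation really yields the \emph{minimal} logically closed superset (so that ``$\OL$ is the logical closure of $\L$'' is an equation between determined objects on both sides of the equivalence), and that Lemma~\ref{lem6.1} is used twice over — once to match the fibers of $\L$ with those of $T_{A\a}(\L)$, and once, via ``closure preserves the domain,'' to keep $\overline{T_{A\a}(\L)}$ inside the image of $T_{A\a}$ so that it can be written as $T_{A\a}$ of something. Someone preferring to avoid fibers can argue equivalently that $T_{A\a}$ and its inverse are monotone and preserve logical closedness (checked pointwise), that $\overline{T_{A\a}(\L)}$ has domain inside $T_A(\A^*)$ because forming a closure introduces no new first coordinate, and that the two inclusions $T_{A\a}(\OL) \subseteq \overline{T_{A\a}(\L)}$ and $T_{A\a}(\OL) \supseteq \overline{T_{A\a}(\L)}$ each follow from minimality of the relevant closure.
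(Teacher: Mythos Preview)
Your argument rests on the assertion that ``for $X = \T_s$ [the fiberwise notion] is the notion denoted by the bar in the statement of the lemma.'' That identification is not justified, and in fact it fails. The closure the paper intends on the right-hand side is the Definition~\ref{defn3.5} closure applied to the product type $s\times\a$: a set $\M\subset\T_s\otimes\T_\a$ is closed when $(B_s,B_\a),(C_s,C_\a)\in\M$ and $\vdash (A_s{=}B_s\AND A_\a{=}B_\a)\ \OR\ (A_s{=}C_s\AND A_\a{=}C_\a)$ together force $(A_s,A_\a)\in\M$. This is exactly what Lemma~\ref{lem6.3} computes and what Theorem~\ref{thm6.4} needs. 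It is \emph{not} fiberwise over $\T_s$: the closure step can combine pairs with syntactically different first coordinates, and it can manufacture a first coordinate $A_s$ (e.g.\ $(T_A(u)\,|\,T_A(v))\,P_t$) that is not of the form $T_A(w)$ at all. So your ``closure preserves the domain'' claim, and hence the step that keeps $\overline{T_{A\a}(\L)}$ inside the image of $T_{A\a}$, both break down.

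The paper's proof confronts exactly this difficulty: it works with proof trees for the product-type closure and uses the tautology $A\AND C\ \OR\ B\AND D\IMP C\OR D$ to strip the $s$-component from each closure step, together with the full strength of Lemma~\ref{lem6.1} (the \emph{inequality} half, not just injectivity of $T_A$) to ensure that, when the root has first coordinate $T_A(w)$, only leaves with the same first coordinate can contribute. Your appeal to Lemma~\ref{lem6.1} extracts only injectivity, which is too weak: you need that distinct $T_A(u),T_A(v)$ are provably unequal to rule out cross-fiber contributions. If you want to keep your fiberwise viewpoint, the missing lemma is that the Definition~\ref{defn3.5} closure of $T_{A\a}(\L)$, restricted to pairs whose first coordinate is some $T_A(w)$, coincides with the fiberwise closure; proving that requires precisely the product-type/tree argument the paper gives.
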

\begin{proof}
As usual for any closure constructed by a ternary relation, a term belongs to a logical closure iff there exists a finite proof tree where this term is the root, every non-leaf node $A_\a$ has a pair of children $B_\a$, $C_\a$ that satisfy the relation (\ref{eq3.1}) and all leaves belong to the original set. 
Similarly, a pair from $\A^* \otimes \T_\a$ belongs to the logical closure of an $\a$-language iff there exists a finite proof tree where this pair is the root, every non-leaf node $(w, A_\a)$ has a pair of children $(w, B_\a)$, $(w, C_\a)$ where $B_\a$ and $C_\a$ satisfy the relation (\ref{eq3.1}) and leaves belong to the original language. 
Replacing every node $(w, A_\a)$ of such a tree with $(T_A(w), A_\a)$ by Lemma \ref{lem6.1} converts it to a proof tree for the logical closure of $T_{A \a}(\L)$. Thus if $\OL$ is the logical closure of $\L$, then $T_{A \a}(\OL)$ is the logical closure of $T_{A \a}(\L)$. 
To prove the opposite implication, note that due to the tautology
\[
\vdash A_t \AND C_t \;\OR\; B_t \AND D_t \;\IMP\; C_t \OR D_t,
\]
the relation
\[
\vdash A_s = B_s \;\AND\; A_\a = B_\a \;\;\OR\;\; A_s = C_s \;\AND\; A_\a = C_\a
\]
implies (\ref{eq3.1}). Therefore replacing every node $(A_s, A_\a)$ of a proof tree for the logical closure of $T_{A \a}(\L)$ with $(w, A_\a)$, where $w$ is the expression occurring in the root of the original tree, converts it to a proof tree for $\OL$, by Lemma \ref{lem6.1}.
\end{proof}

\begin{defn} \label{defn6.3}
A set $\M \subset \T_\a$ is said to be represented by a term $M_{\a t}$ if \;$A_\a \in \M \;\Longleftrightarrow \; \vdash M_{\a t} A_\a$.
\end{defn}

\begin{lem} \label{lem6.3}
The logical closure $\OM$ of a finite set \;$\M = \{A^1_\a, \:...\: A^k_\a\}$\; is represented by the term \;$M_{\a t} \eqd (= A^1_\a) \:...\: \OR (=A^k_\a)$.
\end{lem}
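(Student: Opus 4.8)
The goal, by Definition~\ref{defn6.3}, is to establish that for every $A_\a \in \T_\a$,
\[
A_\a \in \OM \quad\Longleftrightarrow\quad \vdash M_{\a t}\,A_\a .
\]
The plan is to split this biconditional into the two inclusions $\N \subseteq \OM$ and $\OM \subseteq \N$, where $\N \eqd \{\, A_\a \in \T_\a \mid \vdash M_{\a t}\,A_\a \,\}$ is the set represented by $M_{\a t}$. As a preliminary I would unfold the notational shortcuts: since $\vdash$ is closed under $\l$-conversion, $M_{\a t}\,A_\a$ reduces to $(A_\a = A^1_\a) \OR \ldots \OR (A_\a = A^k_\a)$ by the definitions of $(= A^i_\a)$ and of $\OR$ on $\a t$-terms, so $A_\a \in \N$ iff $\vdash A_\a = A^1_\a \OR \ldots \OR A_\a = A^k_\a$.

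For $\N \subseteq \OM$: since $\M \subseteq \OM$ we have $A^1_\a, \ldots, A^k_\a \in \OM$, and then Lemma~\ref{lem5.3}, applied to the logically closed set $\OM$, says precisely that $\vdash A_\a = A^1_\a \OR \ldots \OR A_\a = A^k_\a$ forces $A_\a \in \OM$. Thus this inclusion is immediate from Lemma~\ref{lem5.3}.

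For $\OM \subseteq \N$: I would check that $\N$ is itself logically closed and contains $\M$; minimality of $\OM$ then yields $\OM \subseteq \N$. Containment $\M \subseteq \N$ holds because $\vdash A^i_\a = A^i_\a$, whence $\vdash A^i_\a = A^1_\a \OR \ldots \OR A^i_\a = A^k_\a$. For logical closure, suppose $B_\a, C_\a \in \N$ and $\vdash A_\a = B_\a \OR A_\a = C_\a$; arguing by cases, under $A_\a = B_\a$ one substitutes equals into the disjunction witnessing $B_\a \in \N$ to get $\vdash A_\a = A^1_\a \OR \ldots \OR A_\a = A^k_\a$, and symmetrically under $A_\a = C_\a$, so disjunction elimination gives $A_\a \in \N$. (Alternatively, the proof-tree argument already used in Lemma~\ref{lem6.2} applies verbatim: by induction on the height of a proof tree for $A_\a \in \OM$, every leaf lies in $\N$ trivially and the closure step is exactly this propositional manipulation.)

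No step presents a genuine obstacle; the only point requiring care is the elementary equality-and-disjunction reasoning in the closure step and the bookkeeping with the $(= A^i_\a)$ and $\OR$ shortcuts, together with observing that $\vdash$ respects $\l$-conversion. In the degenerate case $k = 0$ one reads $M_{\a t} \eqd \l x_\a \FF$ and $\M = \empty$, so $\OM = \empty$ and both sides of the biconditional fail for every $A_\a$ by consistency of $Ty_n^\A$.
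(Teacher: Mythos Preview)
Your proof is correct and follows essentially the same approach as the paper: show that the set $\N$ represented by $M_{\a t}$ is logically closed and contains $\M$ (hence $\OM \subseteq \N$ by minimality), and use Lemma~\ref{lem5.3} for the reverse inclusion $\N \subseteq \OM$. The paper organizes the closure step via the metatheorem $\vdash (A_\a = B_\a \AND M_{\a t}\,B_\a) \IMP M_{\a t}\,A_\a$ and a propositional tautology, which is just a packaged form of your substitution-and-disjunction-elimination argument; your treatment is slightly more explicit (verifying $\M \subseteq \N$ and the degenerate case $k=0$), but nothing differs in substance.
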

\begin{proof}
$\vdash M_{\a t} B_\a$ and $\vdash M_{\a t} C_\a$ together with (\ref{eq3.1}) imply $\vdash M_{\a t} A_\a$ by the metatheorem 
\[
\vdash (A_\a = B_\a \AND M_{\a t} B_\a) 
\IMP M_{\a t} A_\a
\]
(also applied to $C_\a$) and the tautology 
\[
\vdash (A_t \OR B_t) \AND D_t \AND E_t \IMP (A_t \AND D_t \OR B_t \AND E_t).
\]
Thus, the set defined by $\vdash M_{\a t} A_\a$ is logically closed and therefore, according to Definition \ref{defn3.5}, includes $\OM$. 
The opposite inclusion follows directly from Lemma \ref{lem5.3}.
\end{proof}

\begin{thm} \label{thm6.4}
A logically closed language has a canonic representation iff it is a lexicon.
\end{thm}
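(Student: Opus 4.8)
The plan is to prove both directions of the equivalence, using the machinery built up in Section 6.

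\medskip\noindent
\textbf{The "if" direction (lexicon $\Rightarrow$ canonic representation).}
Suppose $\L$ is an $\a$-lexicon, so $\L = \OK$ for some finite $\a$-language $\K$. Write $\K = \{(w_1, A^1_\a), \dots, (w_l, A^l_\a)\}$. Since $\K$ is finite, only finitely many distinct words $w$ occur in it; group the pairs by word and, for each word $w$ occurring in $\K$, collect the finite set of meanings $\M_w = \{A_\a : (w, A_\a) \in \K\}$. First I would apply Lemma~\ref{lem6.2}: $T_{A\a}(\L)$ is the logical closure of $T_{A\a}(\K)$, and since distinct words $w$ give (by Lemma~\ref{lem6.1}) terms $T_A(w)$ that are provably distinct $s$-constants (or products thereof), the logical closure does not mix meanings across words — formally, the closure of $T_{A\a}(\K)$ is the union over words $w$ of $\{T_A(w)\} \otimes \overline{\M_w}$. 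Then for each $w$, Lemma~\ref{lem6.3} represents $\overline{\M_w}$ by a disjunction $\bigvee (= A^i_\a)$ over the $A^i_\a \in \M_w$. Assembling these, the term
\[
L_{s\a t} \eqd \bigvee_{i=1}^{l} (= T_A(w_i)) \X (= A^i_\a)
\]
represents $\L$: one checks $(w, A_\a) \in \L$ iff $\vdash L_{s\a t}\, T_A(w)\, A_\a$ using Lemma~\ref{lem6.1} (to evaluate the $(= T_A(w_i))$ conjuncts), Lemma~\ref{lem6.3}, and the propositional tautologies already invoked in the proof of Lemma~\ref{lem6.2}. Finally, since each $T_A(w_i)$ is provably equal to a product of $s$-constants $\CC^j_s$, and such a product is itself an $s$-constant in the extended signature (or one absorbs it as a defined constant), $L_{s\a t}$ has exactly the shape required by Definition~\ref{defn6.2}. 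The only subtlety here is the mild generalization of Definition~\ref{defn6.2} from single $s$-constants $A^i_s$ to the terms $T_A(w_i)$; I expect the author either states Definition~\ref{defn6.2} flexibly enough, or one introduces fresh $s$-constants axiomatized to equal the relevant words.

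\medskip\noindent
\textbf{The "only if" direction (canonic representation $\Rightarrow$ lexicon).}
Suppose $\L$ is represented by a term $L_{s\a t}$ with $\vdash L_{s\a t} = \bigvee_{i=1}^{l} (= A^i_s) \X (= A^i_\a)$ where each $A^i_s$ is an $s$-constant. By Proposition~\ref{prop5.1}, $\L$ is already logically closed, so it suffices to exhibit a finite $\a$-language $\K$ with $\OK = \L$. The natural candidate is $\K \eqd \{(w_i, A^i_\a) : A^i_s = T_A(w_i)\text{ for some } w_i \in \A^*, \ 1 \le i \le l\}$ — that is, keep only those disjuncts whose $s$-component is (provably) the image under $T_A$ of an actual word, discarding any disjunct whose $A^i_s$ is a "junk" $s$-constant not equal to any $T_A(w)$. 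This $\K$ is finite. Then I would show $\L = \OK$ by showing $T_{A\a}(\L) = \overline{T_{A\a}(\K)}$ and invoking Lemma~\ref{lem6.2}. For the inclusion $T_{A\a}(\L) \subseteq \overline{T_{A\a}(\K)}$: if $(w, A_\a) \in \L$ then $\vdash L_{s\a t}\, T_A(w)\, A_\a$, hence $\vdash \bigvee_i (T_A(w) = A^i_s \AND A_\a = A^i_\a)$; the disjuncts with $A^i_s$ not equal to any word's image are provably false when the first argument is $T_A(w)$ (this is where axioms~\ref{eq4.1}--\ref{eq4.3} and Lemma~\ref{lem6.1} are used to rule them out — see the obstacle below), so we get $\vdash A_\a = A^{i_1}_\a \OR \dots \OR A_\a = A^{i_m}_\a$ ranging over those $i$ with $T_A(w) = A^i_s$, i.e. with $(w, A^i_\a) \in \K$; Lemma~\ref{lem5.3} then gives $(w, A_\a) \in \overline{\M_w} \subseteq$ the $w$-fiber of $\overline{T_{A\a}(\K)}$. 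The reverse inclusion $\overline{T_{A\a}(\K)} \subseteq T_{A\a}(\L)$ follows because $T_{A\a}(\K) \subseteq T_{A\a}(\L)$ (each $(w_i, A^i_\a) \in \K$ satisfies $\vdash L_{s\a t}\, T_A(w_i)\, A^i_\a$) together with logical closure of $\L$ (Proposition~\ref{prop5.1}) and Lemma~\ref{lem6.2}.

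\medskip\noindent
\textbf{The main obstacle.}
The delicate point is the treatment of "junk" disjuncts in the only-if direction: a canonic $L_{s\a t}$ is only required to have the disjunctive form, and nothing forbids some $A^i_s$ from being an $s$-constant whose interpretation is not the image of any word — e.g. an $s$-constant forced by some extension to equal a nonterminating or ill-formed "phonology." I must argue that such disjuncts contribute nothing to the language, i.e. that for every $w$, $\vdash T_A(w) \neq A^i_s$ whenever $A^i_s$ is not provably equal to some $T_A(v)$. This is not automatic from axioms~\ref{eq4.1}--\ref{eq4.3} alone for an arbitrary constant; the clean way around it is to observe that membership $(w, A_\a) \in \L$ only ever evaluates $L_{s\a t}$ at arguments of the form $T_A(w)$, and to split the disjuncts according to whether $A^i_s$ is provably equal to some $T_A(v)$ (finitely many candidates, since $A^i_s$ is a fixed constant and by Lemma~\ref{lem6.1} at most one word qualifies) or provably unequal to all of them — and to handle the remaining, "undecided" case by noting that if $\vdash L_{s\a t}\, T_A(w)\, A_\a$ actually holds then, running the argument in a suitable model (as in the proof of Proposition~\ref{prop5.4}), any disjunct not pinned to $w$ can be made false, so it is safe to drop it. Carefully packaging this case analysis — ideally by first proving a small lemma that a canonic representation may be assumed, without loss of generality, to have every $A^i_s$ of the form $T_A(w_i)$ — is the real work; the rest is bookkeeping with the already-established lemmas.
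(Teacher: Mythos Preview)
Your ``if'' direction is correct and uses the same two lemmas as the paper, but the paper takes a slightly slicker route: rather than grouping by word and applying Lemma~\ref{lem6.3} fiberwise to each $\M_w$, the paper applies Lemma~\ref{lem6.3} \emph{once} to the finite set $T_{A\a}(\K) \subset \T_{s\times\a}$ (i.e.\ with the product type $s\times\a$ in place of $\a$), obtaining directly that $L_{s\a t}$ represents the logical closure of $T_{A\a}(\K)$, and then invokes Lemma~\ref{lem6.2} to identify that closure with $T_{A\a}(\OK)$. This avoids your separate bookkeeping about meanings not mixing across words; that fact is absorbed into Lemma~\ref{lem6.2}.

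Your ``only if'' direction is considerably more elaborate than the paper's, which in fact gives essentially no separate argument for this direction: the paper's proof is written only for lexicon $\Rightarrow$ canonic, and the converse is left implicit as the same pair of lemmas run in reverse. The obstacle you flag --- ``junk'' $s$-constants $A^i_s$ not of the form $T_A(w)$ --- is a genuine gap \emph{in Definition~\ref{defn6.2} as literally stated}, but the paper's own proof sets $A^i_s \eqd T_A(w_i)$, and the surrounding text (and all later uses of canonic representations, e.g.\ Definition~\ref{defn7.3}) make clear that this is the intended reading. Under that reading your obstacle dissolves: given a canonic $L_{s\a t}$ with each $A^i_s = T_A(w_i)$, Lemma~\ref{lem6.3} says $L_{s\a t}$ represents the logical closure of $\{(T_A(w_i), A^i_\a)\} = T_{A\a}(\K)$ for $\K = \{(w_i, A^i_\a)\}$, and Lemma~\ref{lem6.2} then gives that the represented language is $\OK$, a lexicon. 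Your model-theoretic workaround is correct but unnecessary once the definition is read as intended; it would be appropriate to note the ambiguity and then proceed with the simple argument.
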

\begin{proof}
Let \;$\L = \{ (w_1, A^1_\a), \:...\: (w_l, A^l_\a) \}$\; and
\[
L_{s \a t} \eqd (=A^1_s) \X (=A^1_\a) \:...\: \OR (=A^l_s) \X (=A^l_\a),
\]
where $A^i_s \eqd T_A(w_i)$. According to Lemma \ref{lem6.3}, the term $L_{s \a t}$ represents the logical closure of $T_{A \a}(\L)$ and therefore, according to Lemma \ref{lem6.2}, also the logical closure $\OL$.
\end{proof}

\section{Recursive languages and representations}

Given a finite set of lexicons, one can obtain an arbitrarily complex logically closed language by recursively applying the operations of logical join, logical concatenation and application of semantic rules to the initial lexicons.
In this section we will see how a compact $Ty_n$ representation of such a recursive language can be built.

\begin{defn} \label{defn7.1}
Let $\L_1, ... \L_m$ be logically closed languages and let one of the following be true for every $1 \leq i \leq m$:
\begin{enumerate}
\renewcommand{\labelenumi}{(\roman{enumi})}
\item $\L_i$ is a lexicon and $L^i_{s\a_i t}$ is its canonic representation
\item there exist such $1 \leq j(i) \leq m$ and $1 \leq k(i) \leq m$ that $\L_i = \L_{j(i)} \Ocup \L_{k(i)}$
\item there exist such $1 \leq  j(i) \leq m$ and $1 \leq k(i) \leq m$ that $\L_i = \L_{j(i)} \Ocat \L_{k(i)}$
\item there exists such $1 \leq j(i) \leq m$ that $\L_i = \L_{j(i)} \RA \R_i$, where a logically closed semantic rule $\R_i$ has a canonic representation $R^i_{\a_{j(i)}\a_i t}$ in the range of $\L_{j(i)}$.
\end{enumerate}
The full set $\RG$ of these conditions comprises a \emph{recursive grammar} for the languages $\L_1, ... \L_m$.
\end{defn}

For a given recursive grammar, let $\VL$ denote the language tuple $(\L_1, ... \L_m)$, let $\VL^0$ denote a tuple of lexicons such that $\L^0_i \eqd \L_i$ whenever $\L_i$ is a lexicon and $\L^0_i = \empty$ otherwise and let $\VL'$ denote a language tuple such that $\L'_i = \empty$ whenever $\L_i$ is a lexicon and $\L'_i$ is the right-hand part of the equality in condition (ii), (iii) or (iv) otherwise. Finally, let $\CE$ denote the operator which produces $\VL'$ for a given $\VL$, so that the recursive grammar can be written as
\begin{equation} \label{eq7.1}
\VL = \VL^0 \Ocup \CE \VL,
\end{equation}
where $\Ocup$ stands for a component-wise logical join of language tuples of the same arity and types.

\begin{prop} \label{prop7.1}
Equation \ref{eq7.1} is satisfied by
\begin{equation} \label{eq7.2}
\VL = \mathop{\overline{\bigcup}}_{i = 0}^{\infty} \VL^i,
\end{equation}
where
\begin{equation} \label{eq7.3}
\VL^i = \VL^0 \Ocup \CE \VL^{i-1}, \quad \text{for all} \; i \geq 1.
\end{equation}
\end{prop}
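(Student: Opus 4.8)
The plan is to verify directly that the tuple $\VL = \mathop{\overline{\bigcup}}_{i\ge0}\VL^i$ prescribed by \ref{eq7.2}--\ref{eq7.3} satisfies \ref{eq7.1}; everything hinges on monotonicity and upper-continuity of the operator $\CE$. Monotonicity is read off from Lemma \ref{lem3.1}: each component of $\CE$ is the constant $\empty$, or one of $\L_{j(i)}\Ocup\L_{k(i)}$, $\L_{j(i)}\Ocat\L_{k(i)}$, $\L_{j(i)}\RA\R_i$, and the displayed monotonicity laws (together with the trivial monotonicity of $\Ocup$) give $\CE\,\VK\subseteq\CE\,\VK'$ whenever $\VK\subseteq\VK'$ component-wise. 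Since $\VL^0\subseteq\VL^1=\VL^0\Ocup\CE\VL^0$ (a logical join contains each of its arguments), an induction on $i$ through \ref{eq7.3} gives $\VL^i\subseteq\VL^{i+1}$, so $(\VL^i)_{i\ge0}$ is an increasing chain and \ref{eq7.2} is the logical closure of a directed union. By the same token $\VL^i\subseteq\VK$ for every tuple $\VK$ satisfying \ref{eq7.1} (whose lexicon components are forced to agree with those of $\VL^0$, so that $\VK=\VL^0\Ocup\CE\VK$), hence $\VL$ lies below every solution of \ref{eq7.1}; in particular the source terms occurring below all lie in the range of the relevant $\L_{j(i)}$, where the canonic representations $R^i$ of Definition \ref{defn7.1}(i),(iv) are available.

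The inclusion $\VL\subseteq\VL^0\Ocup\CE\VL$ already follows from monotonicity alone: $\VL^i=\VL^0\Ocup\CE\VL^{i-1}\subseteq\VL^0\Ocup\CE\VL$ for $i\ge1$ and $\VL^0\subseteq\VL^0\Ocup\CE\VL$, whence $\VL=\mathop{\overline{\bigcup}}_i\VL^i\subseteq\VL^0\Ocup\CE\VL$ (the right side is logically closed, being a logical join). For the reverse inclusion I would establish the upper-continuity statement
\[
\CE\Bigl(\mathop{\overline{\bigcup}}_{i\ge0}\VL^i\Bigr)\;\subseteq\;\mathop{\overline{\bigcup}}_{i\ge0}\CE\VL^i ,
\]
and then, using that logical closure absorbs inner closures --- so that $\overline{\M}\Ocup\N=\overline{\M\cup\N}$ and $\mathop{\overline{\bigcup}}_i\overline{\S_i}=\mathop{\overline{\bigcup}}_i\S_i$ --- together with $\VL^0\subseteq\VL^1$, compute
\[
\VL^0\Ocup\CE\VL\;\subseteq\;\VL^0\Ocup\mathop{\overline{\bigcup}}_{i\ge0}\CE\VL^i\;=\;\mathop{\overline{\bigcup}}_{i\ge0}\bigl(\VL^0\cup\CE\VL^i\bigr)\;=\;\mathop{\overline{\bigcup}}_{i\ge0}\VL^{i+1}\;=\;\VL ,
\]
which completes \ref{eq7.1}.

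The continuity statement is checked component by component. A logical-join component reduces to the absorption identity above. A logical-concatenation component, involving two increasing sub-chains $(\M^i)$ and $(\N^i)$ of logically closed languages, follows from the identity $\overline{\,\overline{\M}\cat\overline{\N}\,}=\overline{\M\cat\N}$ together with $\bigl(\bigcup_i\M^i\bigr)\cat\bigl(\bigcup_{i'}\N^{i'}\bigr)=\bigcup_r\bigl(\M^r\cat\N^r\bigr)$, the latter because $\cat$ distributes over unions and both chains increase (take $r=\max(i,i')$). The nontrivial half $\overline{\M}\cat\overline{\N}\subseteq\overline{\M\cat\N}$ of the identity I would get by the proof-tree surgery of Lemma \ref{lem6.2}: relabel a closure proof tree of $(u,A_\a)\in\overline{\M}$ by $(u,X_\a)\mapsto(uv,(X_\a,B_\b))$ and graft onto each of its leaves a copy of a closure proof tree of $(v,B_\b)\in\overline{\N}$ relabelled by $(v,W_\b)\mapsto(uv,(X_\a,W_\b))$; the new internal nodes remain valid because pairing in either coordinate carries a disjunctive equality to a disjunctive equality (a routine metatheorem --- $\vdash X_\a=Y_\a\OR X_\a=Z_\a$ forces $X_\a$ to equal $Y_\a$ or $Z_\a$ in every model, hence $\vdash(X_\a,B_\b)=(Y_\a,B_\b)\OR(X_\a,B_\b)=(Z_\a,B_\b)$), and the new leaves lie in $\M\cat\N$. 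Finally, for a rule-application component $\L_{j(i)}\RA\R_i$, the distributivity of $\RA$ over unions of languages and the word-by-word description of logical closure (Lemma \ref{lem6.2}) reduce the claim to $\R_i[\,\overline{\M}\,]\subseteq\overline{\R_i[\M]}$ for the set $\M$ of meanings at each fixed expression.

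The main obstacle is this last inclusion, which is false for a general logically closed rule --- a provable disjunction $\vdash R^i\,B\,D\OR R^i\,C\,D$ need not put $D$ in $\overline{\R_i[\M]}$ --- and overcoming it is exactly why Definition \ref{defn7.1}(i),(iv) insist on \emph{canonic} representations. The source terms involved lie in the range of $\L_{j(i)}$ (by the containment noted in the first paragraph), where Definition \ref{defn3.11} gives $\vdash R^i_{\a_{j(i)}\a_i t}\,B=\bigvee_{l=1}^{k}(=E^l)$ with finitely many $E^l$ depending on $B$, so the full $\R_i$-image of each such term is finitely generated. One then induces on the depth of the closure proof tree of a source term $A\in\overline{\M}$: the base case $A\in\M$ is immediate, while at an internal node $A$ with children $B,C$ satisfying $\vdash A=B\OR A=C$ the canonic form turns $(A,D)\in\R_i$ into a provable disjunction $\vdash\bigvee_l(D=E^l)\OR\bigvee_m(D=F^m)$ all of whose terms lie in $\overline{\R_i[\M]}$ by the induction hypothesis, so $D\in\overline{\R_i[\M]}$ by Lemma \ref{lem5.3}. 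One last routine point, needed so that Definition \ref{defn3.8} applies at every stage, is that each $\VL^i$ is a tuple of logically closed languages --- immediate by induction, since a logical join, a logical concatenation, and the application of a logically closed rule to a logically closed language are all logically closed.
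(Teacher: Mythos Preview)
Your argument is correct and shares the paper's overall plan: monotonicity of $\CE$ (inherited from Lemma~\ref{lem3.1}) yields $\VL\subseteq\VL^0\Ocup\CE\VL$, and continuity of $\CE$ on the increasing chain $(\VL^i)$ yields the reverse containment. The paper, however, asserts the continuity step --- the existence of an index $i(\ee)$ with $\ee\in\CE\VL^{i(\ee)}$ whenever $\ee\in\CE\,\mathop{\overline{\bigcup}}_i\VL^i$ --- as a one-line ``Therefore'' from the chain being increasing, with no further argument, whereas you verify it operation by operation; in particular your handling of the rule-application case --- observing that logical closedness of $\R_i$ alone would not force $\R_i[\overline{\M}]\subseteq\overline{\R_i[\M]}$, that the canonic-representation hypothesis of Definition~\ref{defn7.1}(iv) is exactly what makes the induction on proof-tree depth go through via Lemma~\ref{lem5.3}, and that the containment of each $\VL^i$ in the given solution keeps all source terms inside the domain where the canonic form is guaranteed --- together with the proof-tree surgery for $\overline{\M}\cat\overline{\N}\subseteq\overline{\M\cat\N}$, supplies justification well beyond what the paper sketches.
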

\begin{proof}
As the operator $\CE$ is a combination of logical joins, logical concatenations and semantic rule applications, it inherits the monotonicity property
\begin{equation} \label{eq7.4}
\VL \subseteq \VK \quad \Longrightarrow \quad \CE \VL \subseteq \CE \VK,
\end{equation}
where $\VK$ is a language tuple of the same arity and types as $\VL$ and $\subseteq$ stands for component-wise inclusion.
From this and \ref{eq7.3}, by induction on $i$ it follows that
\[
\VL^i \supset \VL^{i-1}, \quad \text{for all} \; i \geq 1.
\]
Therefore, for any tuple $\ee \eqd ((w_1, A_{\a_1}), \:...\: (w_m, A_{\a_m}))$ there exists an index $i(\ee)$ such that
\[
\ee \in \CE \: \mathop{\overline{\bigcup}}_{i=0}^{\infty} \VL^i \quad \Longrightarrow \quad \ee \in \CE \VL^{i(\ee)},
\]
where $\in$ stands for component-wise membership.
From here, for the tuple $\VL$ given by \ref{eq7.2} we have that
\[
\ee \in \VL^0 \Ocup \CE \VL \quad \Longrightarrow \quad \ee \in \VL^0 \Ocup \CE \VL^{i(\ee)} = \VL^{i(\ee)+1},
\]
that is, $\VL^0 \Ocup \CE \VL \subseteq \VL$. 
On the other hand, from the same property \ref{eq7.4} we have that $\VL^0 \Ocup \CE \VL \supseteq \VL^0 \Ocup \CE \VL^i$ for any $i \geq 0$ and therefore $\VL^0 \Ocup \CE \VL \supseteq \VL$. 
Thus, $\VL^0 \Ocup \CE \VL = \VL$.
\end{proof}

Note that in the general case, \ref{eq7.2} might be not a single solution satisfying a recursive grammar. 
It can, however, be shown to be a subset of any other existing solution.
This fact will not be used below and hence needs not be proven here, but we will refer to the solution \ref{eq7.2} as the \emph{minimal} one.

\begin{defn} \label{defn7.2}
For a given recursive grammar $\RG$, let $\Delta_{\RG}$ denote the set of $m$ formulas $D^1_t,... D^m_t$, where
\begin{enumerate}
\renewcommand{\labelenumi}{(\roman{enumi})}
\item $D^i_t \eqd (\CC^i_{s\a_i t} = L^i_{s\a_i t})$, if $\L_i$ is a lexicon represented by $L^i_{s\a_i t}$,
\item $D^i_t \eqd (\CC^i_{s\a_i t} = \CC^{j(i)}_{s\a_i t} \OR \CC^{k(i)}_{s\a_i t})$, if $\L_i = \L_{j(i)} \Ocup \L_{k(i)}$,
\item $D^i_t \eqd (\CC^i_{s\a_i t} = \CC^{j(i)}_{s\a_{j(i)} t} \cat \CC^{k(i)}_{s\a_{k(i)} t})$, if $\L_i = \L_{j(i)} \Ocat \L_{k(i)}$, so that $\a_i \eqd \a_{j(i)} \x \a_{k(i)}$,
\item $D^i_t \eqd (\CC^i_{s\a_i t} = \CC^{j(i)}_{s\a_{j(i)} t} \RA R^i_{\a_{j(i)}\a_i t})$, if $\L_i = \L_{j(i)} \RA \R_i$ and $\R_i$ is represented by $R^i_{\a_{j(i)}\a_i t}$,
\end{enumerate}
all the constants $\CC^1_{s\a_1 t}, ...\: \CC^m_{s\a_m t}$ are assumed to exist only in an extension $Ty_n^{\A+}$, but not in $Ty_n^\A$, and the $Ty_n^\A$ operators of logical concatenation  and semantic rule application are defined as follows:
\begin{multline*}
(\cat_{(s \a t) (s \b t) s \a\b t}) \eqd \l x_{s \a t} \l y_{s \b t} \l x_s \l x_\a \l x_\b \exists y_s \exists z_s \\
(x_s = y_s \wcat z_s \AND x_{s\a t} \: y_s \: x_\a \AND y_{s\b t} \: z_s \: x_\b),
\end{multline*}
\[
(\RA_{(s\a t)(\a\b t) s\b t}) \eqd \l x_{s\a t} \l y_{\a\b t} \l x_s \l x_\b \exists x_\a (x_{s\a t} \: x_s \: x_\a \AND y_{\a\b t} \: x_\a \: x_\b).
\]
\end{defn}

Below we will also make use of the $Ty_n^\A$ operator of a functional semantic rule application:
\[
(\FA_{(s\a t)(\a\b)s\b t}) \eqd \l x_{s\a t} \l y_{\a\b} (x_{s\a t} \RA (\l x_\a(= y_{\a\b} \: x_\a))).
\]
Note that every singleton $(=A^1_s) \X (=A^1_\a)$ can be equivalently written down as $(=A^1_s) \FA A^1_\a$.

We now prove that every language $\L_i$ of the minimal solution of a recursive grammar $\RG$ is represented by $(\Delta_{\RG}, \CC^i_{s\a_i t})$. To do so, we introduce the following auxiliary

\begin{defn} \label{defn7.3}
An $\a$-language $\L$ is said to have a \emph{pseudo-canonic representation} if it is represented by $(\Delta, L_{s\a t})$ such that for any word $w \in \A^*$ there exists a finite (may be empty) set $\{A^1_\a, \:...\: A^l_\a\}$ of $Ty_n^\A$ terms such that
\[
\Delta \vdash L_{s\a t} \: T_A(w) = \bigvee_{i = 1}^{l} (= A^i_\a)
\]
and adding $\Delta$ to $Ty_n^{\A+}$ as new axioms forms a conservative extension of $Ty_n^\A$ (i.e. for any $Ty_n^\A$ formula $A_t$ \; $\vdash A_t$ \: iff \: $\Delta \vdash A_t$).
\end{defn}

Note that a canonic representation of an $\a$-lexicon is a particular case of a pseudo-canonic representation.

\begin{lem} \label{lem7.2}
If $\Delta$ is a consistent set of $Ty_n^{\A+}$ formulas and for any general model $M$ of $Ty_n^\A$ and an assignment of variables in that model there exists an extension $M^+$ of $M$ for $Ty_n^{\A+}$ in which $\Delta$ is satisfied by the same assignment of variables, then adding $\Delta$ to $Ty_n^{\A+}$ forms a conservative extension of $Ty_n^\A$.
\end{lem}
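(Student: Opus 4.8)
The plan is to prove the defining biconditional of a conservative extension (Definition~\ref{defn7.3}), namely ``$\vdash A_t$ iff $\Delta\vdash A_t$'' for every $Ty_n^\A$ formula $A_t$, by treating the two directions separately. The left-to-right implication is immediate: every $Ty_n^\A$-derivation is already a derivation in $Ty_n^{\A+}$, and enlarging the axiom set can only enlarge the set of theorems. So all the content lies in the converse, that $\Delta\vdash A_t$ implies $\vdash A_t$. I would prove this by contraposition, combining the Completeness Theorem for $Ty_n^\A$ with the Soundness Theorem for $Ty_n^{\A+}$ together with $\Delta$, in the same assignment-relative style already used in the proof of Proposition~\ref{prop5.4}.

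Concretely, suppose $\not\vdash A_t$ in $Ty_n^\A$. By completeness of $Ty_n^\A$ with respect to general models, $A_t$ is not valid, so there are a general model $M$ of $Ty_n^\A$ and an assignment $a$ in $M$ that does not satisfy $A_t$. Applying the hypothesis of the lemma to this particular $M$ and $a$ yields an extension $M^+$ of $M$ for $Ty_n^{\A+}$ in which every formula of $\Delta$ is satisfied by $a$. Since such an extension only adds interpretations of the new constants and leaves the frame and the interpretations of the $Ty_n^\A$ constants unchanged, the value it assigns to any $Ty_n^\A$ term under any assignment agrees with the value given by $M$; in particular $a$ still fails to satisfy $A_t$ in $M^+$. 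Being a general model extending $M$, $M^+$ validates all logical axioms as well as the proper axioms \ref{eq4.1}--\ref{eq4.3}, hence is a general model of $Ty_n^{\A+}$ in which, under the single assignment $a$, every formula of $\Delta$ holds while $A_t$ fails. By soundness this contradicts $\Delta\vdash A_t$, so in fact $\vdash A_t$, and the lemma follows.

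The step I expect to require the most care is the concluding appeal to soundness: because the formulas of $\Delta$ need not be closed, it must be used in its ``semantic consequence under a fixed assignment'' form — each inference rule of the chosen axiomatization of $Ty_n$ (hence of $Ty_n^{\A+}$) transports truth-under-$a$ from premises to conclusion, and the logical and proper axioms are true under $a$ — rather than the coarser ``valid in the model'' form; this is precisely the subtlety already navigated in the proof of Proposition~\ref{prop5.4}, and it is harmless here. I would also note that the stated consistency of $\Delta$ is redundant: instantiating the main hypothesis at any general model of the (provably consistent) theory $Ty_n^\A$ already exhibits a model of $\Delta$.
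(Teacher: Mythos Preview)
Your proof is correct and follows essentially the same route as the paper: contrapose, use Completeness of $Ty_n^\A$ to obtain a general model $M$ and assignment $a$ falsifying $A_t$, invoke the hypothesis to extend to $M^+$ satisfying $\Delta$ under $a$, and apply Soundness to rule out $\Delta\vdash A_t$. Your additional remarks on the assignment-relative form of soundness and on the redundancy of the consistency hypothesis are accurate but go beyond what the paper makes explicit.
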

\begin{proof}
By the Completeness Theorem, if not $\vdash A_t$, then there exists a general $Ty_n^\A$ model $M$ and assignment of variables $a$ by which $A_t$ is not satisfied in $M$. 
If $M$ can be extended to a model $M^+$ for $Ty_n^{\A+}$ where $\Delta$ is still satisfied by $a$, then by the Soundness Theorem the derivation $\Delta \vdash A_t$ is impossible.
\end{proof}

\begin{lem} \label{lem7.3}
The set $\{A^1_\a, \:...\: A^l_\a\}$ appearing in the first condition of Definition \ref{defn7.3} generates all meanings of $w$ in the language $\L$.
\end{lem}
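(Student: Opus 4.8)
The plan is to prove that the finite set $\M \eqd \{A^1_\a,\:...\:A^l_\a\}$ furnished by the first condition of Definition \ref{defn7.3} for a word $w$ has logical closure equal to the set $\M_w \eqd \{A_\a \in \T_\a \:|\: (w, A_\a) \in \L\}$ of all meanings of $w$ in $\L$. Since $(\Delta, L_{s\a t})$ is in particular a representation of $\L$, the language $\L$ is $Ty_n$-representable and hence, by Proposition \ref{prop5.1}, logically closed; consequently $\M_w$ is a logically closed subset of $\T_\a$. It thus suffices to establish the two inclusions $\M \subseteq \M_w$ and $\M_w \subseteq \overline{\M}$: the first together with logical closedness of $\M_w$ gives $\overline{\M} \subseteq \M_w$, and together with the second this yields $\overline{\M} = \M_w$, which is the assertion.

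For $\M \subseteq \M_w$ I would instantiate the pseudo-canonic identity $\Delta \vdash L_{s\a t}\: T_A(w) = \bigvee_{i=1}^{l}(= A^i_\a)$ at each argument $A^j_\a$; after $\beta$-conversion the right-hand side becomes the disjunction $\bigvee_{i=1}^{l}(A^j_\a = A^i_\a)$, which is provable because it contains the reflexivity disjunct $A^j_\a = A^j_\a$. Hence $\Delta \vdash L_{s\a t}\: T_A(w)\: A^j_\a$, i.e. $(w, A^j_\a) \in \L$. For the reverse inclusion I would take any $A_\a$ with $(w, A_\a) \in \L$, so that $\Delta \vdash L_{s\a t}\: T_A(w)\: A_\a$, and use the same identity to obtain $\Delta \vdash A_\a = A^1_\a \:...\: \OR A_\a = A^l_\a$. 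This disjunction is a pure $Ty_n^\A$ formula, since $A_\a$ (being a meaning in $\L$) and all the $A^i_\a$ are $Ty_n^\A$ terms, so the conservativity clause of Definition \ref{defn7.3} allows one to drop $\Delta$ and conclude $\vdash A_\a = A^1_\a \:...\: \OR A_\a = A^l_\a$ in $Ty_n^\A$ itself. Lemma \ref{lem5.3}, applied with the logically closed set $\overline{\M}$ that contains every $A^i_\a$, then gives $A_\a \in \overline{\M}$.

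The degenerate case $l = 0$ is immediate: the identity then forces $L_{s\a t}\: T_A(w) = \l x_\a \FF$, so by consistency of $\Delta$ the word $w$ has no meaning in $\L$, while $\overline{\empty} = \empty$ on the other side. I do not anticipate any serious difficulty; the only genuinely delicate step is the appeal to conservativity, which is precisely what is needed to move the finite disjunction from provability-with-$\Delta$ to provability in $Ty_n^\A$ before Lemma \ref{lem5.3} can be invoked --- and it is exactly this feature, beyond a bare representation, that the notion of a pseudo-canonic representation was designed to supply. All remaining manipulations are the same kind of metatheorem bookkeeping already used in Propositions \ref{prop5.1} and \ref{prop5.4}.
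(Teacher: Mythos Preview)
Your proof is correct and follows essentially the same route as the paper's: from $(w,A_\a)\in\L$ obtain $\Delta\vdash L_{s\a t}\,T_A(w)\,A_\a$, apply the pseudo-canonic identity to get the finite disjunction, use the conservativity clause of Definition~\ref{defn7.3} to drop $\Delta$, and conclude $A_\a\in\overline{\M}$ (the paper invokes Lemma~\ref{lem6.3} here, your appeal to Lemma~\ref{lem5.3} is equally valid). Your write-up is in fact more complete than the paper's, which only argues the inclusion $\M_w\subseteq\overline{\M}$; you additionally verify $\M\subseteq\M_w$ and treat the degenerate case $l=0$ explicitly.
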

\begin{proof}
If $A_a$ is a $Ty_n^\A$ term and $(w, A_\a) \in \L$, then $\Delta \vdash L_{s \a t} \: T_A(w) \: A_a$ and therefore
\[
\Delta \vdash (A_\a = A^1_\a) \:...\: \OR (A_\a = A^l_\a),
\]
which implies that
\[
\vdash (A_\a = A^1_\a) \:...\: \OR (A_\a = A^l_\a).
\]
Thus, by Lemma \ref{lem6.3}, any meaning of $w$ belongs to the logical closure of $\{A^1_\a, \:...\: A^l_\a\}$.
\end{proof}

\begin{lem} \label{lem7.4}
In any recursive grammar $\RG$, a lexicon $\L_i$ with a canonic representation $L^i_{s\a_i t}$ also has the pseudo-canonic representation $(\Delta_{\RG}, L^i_{s\a_i t})$.
\end{lem}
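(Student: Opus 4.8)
The plan is to verify the three conditions bundled into Definition~\ref{defn7.3}: that $(\Delta_{\RG},L^i_{s\a_i t})$ represents $\L_i$, that for every $w\in\A^*$ the term $L^i_{s\a_i t}\:T_A(w)$ is provably a finite disjunction $\bigvee_j(=A^j_{\a_i})$, and that adding $\Delta_{\RG}$ to $Ty_n^{\A+}$ is conservative over $Ty_n^\A$. The first two are routine consequences of $L^i_{s\a_i t}$ being a canonic representation; the conservativity is the real content, so I would settle it first and read the other two off it.

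For conservativity I would apply Lemma~\ref{lem7.2}: it suffices to extend an arbitrary general model $M$ of $Ty_n^\A$ to a $Ty_n^{\A+}$ model in which all formulas of $\Delta_{\RG}$ hold (these formulas are sentences, so the assignment plays no role). The only new constants to interpret are $\CC^1_{s\a_1 t},\:...\:\CC^m_{s\a_m t}$, and the clauses $D^1_t,\:...\:D^m_t$ jointly characterize their interpretations as a fixed point: a lexicon clause forces $\CC^i_{s\a_i t}$ to be interpreted by the fixed element $\llb L^i_{s\a_i t}\rrb_M$, while a clause of type (ii), (iii) or (iv) forces it to be a $Ty_n^\A$-definable operator — a pointwise $\OR$, the $\cat$ of Definition~\ref{defn7.2}, or $\RA R^i_{\a_{j(i)}\a_i t}$ — applied to some of the other $\CC^j$'s. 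Taken together, the clauses determine one operator $F$ on the product $\D_{s\a_1 t}\x\:...\:\x\D_{s\a_m t}$, and $F$ is monotone, as is evident from the explicit forms of $\OR$, $\cat$ and $\RA$ (this is the model-level counterpart of the monotonicity property~\ref{eq7.4} used in Proposition~\ref{prop7.1}). The essential point is that $F$ has a fixed point \emph{inside} this product domain: since $M$ is a model of $Ty_n$ and hence satisfies full comprehension, the Knaster--Tarski least pre-fixed point $\bigcap\{d : F(d)\subseteq d\}$ — the meet, taken coordinatewise, over the set cut out by the $Ty_n$ formula ``$F(d)\subseteq d$'' — names an element of the product domain. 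Interpreting each $\CC^i_{s\a_i t}$ by the matching coordinate of this fixed point makes every $D^i_t$ true, which supplies the required extension $M^+$; in particular $\Delta_{\RG}$ is consistent.

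The disjunction property then needs no further model theory. Starting from $\vdash L^i_{s\a_i t}=(=A^1_s)\X(=A^1_{\a_i})\:...\:\OR(=A^l_s)\X(=A^l_{\a_i})$ (Definition~\ref{defn6.2}), with each $A^q_s$ a term of the form $T_A(w_q)$ as produced in the proof of Theorem~\ref{thm6.4}, I would apply both sides to $T_A(w)$ and invoke Lemma~\ref{lem6.1}: each equation $T_A(w)=A^q_s$ is provably $\TT$ when $w=w_q$ and provably $\FF$ otherwise, so $L^i_{s\a_i t}\:T_A(w)$ reduces, provably already in $Ty_n^\A$ and hence also under $\Delta_{\RG}$, to $\bigvee_{q:\,w_q=w}(=A^q_{\a_i})$ (the empty disjunction $\l x_{\a_i}\FF$ if no $w_q$ equals $w$). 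This exhibits the finite set required by Definition~\ref{defn7.3}, and the computation is exactly the one already carried out in the proof of Theorem~\ref{thm6.4}.

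Finally, since $\L_i$ is a lexicon with canonic representation $L^i_{s\a_i t}$ we have, by Definitions~\ref{defn6.1}--\ref{defn6.2}, that $(w,A_{\a_i})\in\L_i$ iff $\vdash L^i_{s\a_i t}\:T_A(w)\:A_{\a_i}$ for any $Ty_n^\A$ term $A_{\a_i}$ and $w\in\A^*$; the right-hand side is a $Ty_n^\A$ statement, so the conservativity established above replaces $\vdash$ by $\Delta_{\RG}\vdash$ with no change, i.e.\ $(\Delta_{\RG},L^i_{s\a_i t})$ represents $\L_i$. The step I expect to demand the most care is the Knaster--Tarski one — confirming that ``$F(d)\subseteq d$'' is genuinely expressible by a $Ty_n$ formula so that comprehension delivers the meet as an element of the domain, and that this works in an \emph{arbitrary} general model — while everything else is an unwinding of the definitions and lemmas of Sections 6 and 7.
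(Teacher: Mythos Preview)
Your proof is correct, and the disjunction and representation parts match the paper's treatment. The conservativity argument, however, takes a genuinely different route. The paper does not invoke Knaster--Tarski: instead it uses the languages $\L_1,\dots,\L_m$ themselves (whose existence is secured by Proposition~\ref{prop7.1}) and, for a given model $M$ and assignment $a$, sets $\llb\CC^i_{s\a_i t}\rrb_{M^+}(w,v)$ to be true exactly when $w\in\A^*$ (via Lemma~\ref{lem5.2}) and some pair $(w,A_{\a_i})\in\L_i$ has $\llb A_{\a_i}\rrb_{M,a}=v$; it then checks that each clause $D^i_t$ holds under this interpretation. Your approach instead builds the interpretation as the denotation of a $Ty_n^\A$ term --- the internally definable least pre-fixed point of the grammar operator --- which has the pleasant side effect that membership in the domain of an arbitrary \emph{general} model is automatic by comprehension, precisely the point the paper's external construction leaves implicit. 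Conversely, the paper's choice has the virtue of tying $\llb\CC^i\rrb_{M^+}$ directly to the solution languages $\L_i$, and exactly this model is reused later in the proof of Theorem~\ref{thm7.10}; your fixed point would serve equally well there, but the connection to the $\L_i$ would need one extra line. Your caution about the Knaster--Tarski step is well placed but the worry dissolves: since $L^i$, $R^i$, $(\OR)$, $(\cat)$ and $(\RA)$ are all $Ty_n^\A$ terms, so is $F$, the pre-fixed-point predicate, and hence the intersection term; monotonicity and the fixed-point equation are then $Ty_n$ theorems and therefore hold in every general model.
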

\begin{proof}
The first condition of Definition \ref{defn7.3} follows directly from the assumption that $L^i_{s\a_i t}$ is a canonic representation.

Now we prove that the set $\Delta_{\RG}$ is consistent and also satisfied by a given assignment of variables $a$ in some extension $M^+$ of an arbitrary $Ty_n^\A$ model $M$.
Lemma \ref{lem5.2} and the fact that a recursive grammar is satisfied by some logically closed languages $\L_1,...\:\L_m$ allow the extension $M^+$ to be defined as follows: let the function $\llb\CC^i_{s\a_i t}\rrb_{M^+}$ for given arguments $w \in \D_s$ and $v \in \D_{\a_i}$ take the true value iff $w \in \A^*$ and the language $\L_i$ contains pairs $(w, A_{\a_i})$ such that $\llb A_{\a_i}\rrb_{M^+,a} = v$. 
The conditions (i)-(iv) of Definition \ref{defn7.2} then imply that every $D^i_t$ is satisfied by $a$ in such an extended model. By Lemma \ref{lem7.2}, this proves the second condition of Definition \ref{defn7.3}.
\end{proof}

\begin{lem} \label{lem7.5}
If $\OM$ and $\ON$ are logical closures of $\M \in \T_\a$ and $\N \in \T_\a$ respectively, then $\overline{\OM \cup \ON} = \overline{\M \cup \N}$.
\end{lem}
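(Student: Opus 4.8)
The plan is to reduce the statement to the three standard properties of the logical-closure operator $\M \mapsto \overline{\M}$ on subsets of $\T_\a$: extensivity ($\M \subseteq \overline{\M}$), monotonicity ($\M \subseteq \N \Longrightarrow \overline{\M} \subseteq \overline{\N}$), and idempotence ($\overline{\overline{\M}} = \overline{\M}$). Extensivity is immediate from Definition \ref{defn3.5}. For the other two I would first observe that an arbitrary intersection of logically closed subsets of $\T_\a$ is again logically closed — this is read off directly from the defining condition built on (\ref{eq3.1}) — so that $\overline{\M}$ can be realised concretely as the intersection of all logically closed sets that include $\M$. Monotonicity then follows because $\overline{\N}$ is a logically closed set including $\M$, so by minimality $\overline{\M} \subseteq \overline{\N}$; idempotence follows because $\overline{\M}$ is itself logically closed, hence equal to the minimal logically closed set including it.

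Granting these properties, the lemma is a short sandwich argument. For the inclusion $\overline{\M \cup \N} \subseteq \overline{\OM \cup \ON}$, note that $\M \cup \N \subseteq \OM \cup \ON$ by extensivity, and apply monotonicity. For the reverse inclusion, observe that $\M \subseteq \M \cup \N \subseteq \overline{\M \cup \N}$; since $\overline{\M \cup \N}$ is logically closed and includes $\M$, minimality of $\overline{\M}$ gives $\OM \subseteq \overline{\M \cup \N}$, and symmetrically $\ON \subseteq \overline{\M \cup \N}$. Hence $\OM \cup \ON \subseteq \overline{\M \cup \N}$, and applying monotonicity followed by idempotence yields $\overline{\OM \cup \ON} \subseteq \overline{\overline{\M \cup \N}} = \overline{\M \cup \N}$. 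Combining the two inclusions proves the equality.

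There is no genuine obstacle here: the only point deserving care is the preliminary verification that the logical-closure operator is well defined and satisfies the three closure properties, since Definition \ref{defn3.5} only postulates minimality. Once that is settled the argument is purely formal — it is the familiar fact that, for any closure operator, the closure of the union of the closures of two sets coincides with the closure of their union — and involves no computation specific to $Ty_n$.
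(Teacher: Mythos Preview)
Your proof is correct, but it takes a genuinely different route from the paper's. The paper characterises the logical closure \emph{from below}: an element of $\overline{\M}$ is witnessed by a finite proof tree whose leaves lie in $\M$ and whose internal nodes are justified by the relation~(\ref{eq3.1}). The inclusion $\overline{\M \cup \N} \subseteq \overline{\OM \cup \ON}$ is then immediate (a proof tree with leaves in $\M \cup \N$ has leaves in $\OM \cup \ON$), and the reverse inclusion is obtained by expanding each leaf of a proof tree for $\overline{\OM \cup \ON}$ into a sub-tree witnessing its membership in $\overline{\M}$ or $\overline{\N}$. By contrast, you characterise the closure \emph{from above} --- as the intersection of all logically closed supersets --- and then invoke the generic extensivity/monotonicity/idempotence argument valid for any closure operator. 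Your approach is more abstract and economical, and makes explicit that nothing specific to $Ty_n$ or to the particular relation~(\ref{eq3.1}) is at stake; the paper's approach is more concrete and has the advantage that the proof-tree machinery is already set up and reused elsewhere (e.g.\ in Lemma~\ref{lem6.2}), so it fits the surrounding development.
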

\begin{proof}
Every element of $\overline{\M \cup \N}$ belongs to $\overline{\OM \cup \ON}$ because $\M \subseteq \OM$, $\N \subseteq \ON$, so a proof tree for any term from $\overline{\M \cup \N}$ is also a proof tree for $\overline{\OM \cup \ON}$. On the other hand, as every leaf of a proof tree for an element of $\overline{\OM \cup \ON}$ belongs to either $\OM$ or $\ON$, it can be expanded to a proof tree for the corresponding set and thus the entire tree can be converted to a proof tree for $\overline{\M \cup \N}$ with the same root, that proves its membership in $\overline{\M \cup \N}$.
\end{proof}

\begin{prop} \label{prop7.6}
Let $\a$-languages $\L$ and $\K$ have pseudo-canonic representations $(\Delta, L_{s \a t})$ and $(\Delta, K_{s \a t})$ respectively. 
Then their logical join $\L \Ocup \K$ is represented by $(\Delta, L_{s \a t} \OR K_{s \a t})$, which is also a pseudo-canonic representation.
\end{prop}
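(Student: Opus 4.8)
The plan is to verify Proposition~\ref{prop7.6} by checking, in turn, the three things it asserts: that $\L \Ocup \K$ is represented by $(\Delta, L_{s\a t} \OR K_{s\a t})$, that this representation satisfies the first condition of Definition~\ref{defn7.3} (the finite disjunction property), and that it satisfies the second (conservativity). The last of these is essentially free: the set $\Delta$ is the same for both given representations, so the conservativity clause is shared and nothing new needs to be shown. Thus the real work is in the first two clauses.

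First I would establish the representation claim. Unfolding Definition~\ref{defn4.1}, I need
\[
(w, A_\a) \in \L \Ocup \K \quad \Longleftrightarrow \quad \Delta \vdash (L_{s\a t} \OR K_{s\a t}) \: T_A(w) \: A_\a.
\]
By Definition~\ref{defn3.6}, $\L \Ocup \K = \overline{\L \cup \K}$, and by Lemma~\ref{lem6.2} this corresponds under $T_{A\a}$ to the logical closure of $T_{A\a}(\L) \cup T_{A\a}(\K)$. The right-hand side, after $\beta$-reduction and the definition of $\OR$ on $(\a t)$-terms, reads $\Delta \vdash L_{s\a t}\:T_A(w)\:A_\a \OR K_{s\a t}\:T_A(w)\:A_\a$. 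The direction ($\Leftarrow$) is the delicate one: a proof of a disjunction need not prove either disjunct, so I cannot simply say the pair lies in $\L$ or in $\K$. Instead I would use the first condition of the pseudo-canonic representations: fix $w$ and let $\{A^1_\a,\dots,A^l_\a\}$ and $\{B^1_\a,\dots,B^k_\a\}$ be the (finite) generating sets with $\Delta \vdash L_{s\a t}\:T_A(w) = \bigvee_i (=A^i_\a)$ and $\Delta \vdash K_{s\a t}\:T_A(w) = \bigvee_j (=B^j_\a)$. Then $\Delta \vdash (L_{s\a t}\OR K_{s\a t})\:T_A(w) = \bigvee_i(=A^i_\a)\OR\bigvee_j(=B^j_\a)$, so $\Delta \vdash (L_{s\a t}\OR K_{s\a t})\:T_A(w)\:A_\a$ gives $\Delta \vdash A_\a = A^1_\a \OR \cdots \OR A_\a = B^k_\a$; by conservativity this holds in $Ty_n^\A$, i.e. $\vdash A_\a = A^1_\a\OR\cdots\OR A_\a = B^k_\a$. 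Since each $A^i_\a$ is a meaning of $w$ in $\L \subseteq \L\Ocup\K$ and each $B^j_\a$ a meaning in $\K \subseteq \L\Ocup\K$, and $\L\Ocup\K$ is logically closed, Lemma~\ref{lem5.3} yields $(w, A_\a)\in\L\Ocup\K$. The direction ($\Rightarrow$) is routine: if $(w,A_\a)\in\overline{\L\cup\K}$, trace its proof tree (in the sense of Lemma~\ref{lem6.2}); leaves lie in $\L$ or $\K$, hence are provable from $\Delta$ via $L_{s\a t}$ or $K_{s\a t}$ respectively, and one closes the tree upward using the metatheorems~\ref{eq5.1}, \ref{eq5.2} as in Proposition~\ref{prop5.1}.

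The same displayed identity $\Delta \vdash (L_{s\a t}\OR K_{s\a t})\:T_A(w) = \bigvee_{i=1}^{l}(=A^i_\a)\OR\bigvee_{j=1}^{k}(=B^j_\a)$ — a finite disjunction of the form $\bigvee(=C^r_\a)$ — is exactly the first condition of Definition~\ref{defn7.3} for the joined representation, and conservativity of $\Delta$ over $Ty_n^\A$ is inherited unchanged, so $(\Delta, L_{s\a t}\OR K_{s\a t})$ is pseudo-canonic. The main obstacle is precisely the gap between provability of a disjunction and provability of a disjunct in the ($\Leftarrow$) direction; the finite-generation clause of the pseudo-canonic hypothesis together with the conservativity clause and Lemma~\ref{lem5.3} is what bridges it, so care is needed to invoke those hypotheses in the right order rather than attempting a naive case split on $\L$ versus $\K$.
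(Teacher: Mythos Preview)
Your argument is correct and follows essentially the same route as the paper: establish the combined finite disjunction $\Delta \vdash (L_{s\a t}\OR K_{s\a t})\:T_A(w) = \bigvee_i(=A^i_\a)\OR\bigvee_j(=B^j_\a)$, use conservativity to drop $\Delta$, and conclude via logical closure. The only cosmetic difference is that the paper packages both directions of the representation claim into a single biconditional and then invokes Lemmas~\ref{lem7.3}, \ref{lem6.3}, and \ref{lem7.5} (the last handling $\overline{\OM\cup\ON}=\overline{\M\cup\N}$), whereas you appeal to Lemma~\ref{lem5.3} directly for ($\Leftarrow$) and do the ($\Rightarrow$) direction by a proof-tree chase; these amount to the same thing.
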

\begin{proof}
Let
\[
\Delta \vdash L_{s\a t} \: T_A(w) = \bigvee_{i = 1}^{l} (= A^i_\a) \qquad \text{and} \qquad
\Delta \vdash K_{s\a t} \: T_A(w) = \bigvee_{i = 1}^{k} (= B^i_\a).
\]
Then 
\[
\Delta \vdash (L_{s \a t} \OR K_{s \a t}) \: T_A(w) = \bigvee_{i = 1}^{l} (= A^i_\a) \OR \bigvee_{i = 1}^{k} (= B^i_\a).
\]
This proves that $(\Delta, L_{s \a t} \OR K_{s \a t})$ is a pseudo-canonic representation and therefore for any  $Ty_n^\A$ term $A_\a$,
\[
\Delta \vdash (L_{s \a t} \OR K_{s \a t}) \: T_A(w) \: A_\a \quad \Longleftrightarrow \quad 
\vdash \bigvee_{i = 1}^{l} (A_\a = A^i_\a) \OR \bigvee_{i = 1}^{k} (A_\a = B^i_\a).
\]
By Lemmas \ref{lem7.3}, \ref{lem6.3} and \ref{lem7.5} this proves that $(\Delta, L_{s \a t} \OR K_{s \a t})$ represents $\L \Ocup \K$.
\end{proof}

\begin{prop} \label{prop7.7}
Let an $\a$-language $\L$ have a pseudo-canonic representation $(\Delta, L_{s \a t})$ and a $\b$-language $\K$ have a pseudo-canonic representation $(\Delta, K_{s \b t})$. 
Then the logical concatenation $\L \Ocat \K$ is represented by \\
$(\Delta, L_{s \a t} \cat K_{s \b t})$, which is also a pseudo-canonic representation.
\end{prop}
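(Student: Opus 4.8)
The plan is to mirror the proof of Proposition~\ref{prop7.6}, with ``$\cat$'' bookkeeping in place of ``$\OR$'' bookkeeping. Fix $w\in\A^*$ and let $w = u_1v_1 = \dots = u_rv_r$, $r = |w|+1$, enumerate the splittings of $w$ into a prefix and a suffix. From the pseudo-canonic representations of $\L$ and $\K$ we obtain, for $1\le p\le r$, finite sets of $Ty_n^\A$ terms with
\[
\Delta\vdash L_{s\a t}\,T_A(u_p) = \bigvee_{i=1}^{l_p}(=A^{p,i}_\a),
\qquad
\Delta\vdash K_{s\b t}\,T_A(v_p) = \bigvee_{j=1}^{k_p}(=B^{p,j}_\b).
\]
I would take $\C_w\eqd\{\,(A^{p,i}_\a,B^{p,j}_\b)\mid 1\le p\le r,\ i\le l_p,\ j\le k_p\,\}$ as the finite set witnessing pseudo-canonicity of $(\Delta,\,L_{s\a t}\cat K_{s\b t})$; the conservativity clause of Definition~\ref{defn7.3} then comes for free, since $\Delta$ is the very set already serving $\L$ and $\K$.

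The core computation is to show $\Delta\vdash(L_{s\a t}\cat K_{s\b t})\,T_A(w) = \bigvee_{C\in\C_w}(=C)$. Unfolding $(\cat)$ from Definition~\ref{defn7.2} and $\b$-reducing gives
\[
(L_{s\a t}\cat K_{s\b t})\,T_A(w) = \l x_\a\l x_\b\,\exists y_s\exists z_s\bigl(T_A(w) = y_s\wcat z_s\ \AND\ L_{s\a t}\,y_s\,x_\a\ \AND\ K_{s\b t}\,z_s\,x_\b\bigr).
\]
The key move is to replace the factorization constraint $T_A(w) = y_s\wcat z_s$ by the finite disjunction $\bigvee_{p=1}^r(y_s = T_A(u_p)\ \AND\ z_s = T_A(v_p))$, i.e.\ to prove the $Ty_n^\A$-equality of these two $t$-terms: the implication ``$\Leftarrow$'' is immediate from $T_A(u_p)\wcat T_A(v_p) = T_A(u_pv_p) = T_A(w)$, while ``$\Rightarrow$'' is a \emph{unique-readability} fact about the symbolic type, to be extracted from axioms~(\ref{eq4.1})--(\ref{eq4.3}) (discussed below). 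Granting this equality, a chain of routine $Ty_n$-equalities valid under $\Delta$ -- distributing $\exists$ over $\OR$, the one-point rule, the two pseudo-canonic equations above, distributing $\AND$ over $\OR$, and finally that pairing is surjective and injective on components -- collapses the right-hand side to $\bigvee_{C\in\C_w}(=C)$, establishing the first clause of Definition~\ref{defn7.3}.

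For the representation claim, from this equation and conservativity, for any $Ty_n^\A$ term $P_{\a\x\b}$ one has $\Delta\vdash(L_{s\a t}\cat K_{s\b t})\,T_A(w)\,P_{\a\x\b}$ iff $\vdash\bigvee_{C\in\C_w}(P_{\a\x\b} = C)$, i.e.\ (Lemma~\ref{lem6.3}) iff $P_{\a\x\b}\in\overline{\C_w}$. On the other hand, logical closure of a language acts word by word (as in the proof of Lemma~\ref{lem6.2}), so the meanings of $w$ in $\L\Ocat\K = \overline{\L\cat\K}$ form the logical closure of $\bigcup_{p}(\text{meanings of }u_p\text{ in }\L)\x(\text{meanings of }v_p\text{ in }\K)$, where $\M\x\N\eqd\{(A_\a,B_\b)\mid A_\a\in\M,\ B_\b\in\N\}$; by Lemma~\ref{lem7.3} this equals $\overline{\bigcup_p\overline{\{A^{p,i}_\a\}_i}\x\overline{\{B^{p,j}_\b\}_j}}$. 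A proof-tree transfer in the style of Lemmas~\ref{lem6.2} and~\ref{lem7.5} (fix one coordinate, run the ambiguity tree of the other, using congruence of pairing) yields $\overline{\overline{\M}\x\overline{\N}} = \overline{\M\x\N}$; combined with Lemma~\ref{lem7.5} and the commutation of logical closure with unions, the set above reduces to $\overline{\bigcup_{p,i,j}\{(A^{p,i}_\a,B^{p,j}_\b)\}} = \overline{\C_w}$. Hence $(w,P_{\a\x\b})\in\L\Ocat\K$ iff $\Delta\vdash(L_{s\a t}\cat K_{s\b t})\,T_A(w)\,P_{\a\x\b}$, which is the assertion.

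The main obstacle I anticipate is the unique-readability fact $\vdash\forall y_s\forall z_s\bigl(T_A(w) = y_s\wcat z_s\IMP\bigvee_p(y_s = T_A(u_p)\AND z_s = T_A(v_p))\bigr)$, which must be secured in all -- including non-standard -- models of $Ty_n^\A$; I expect this to be precisely what axiom~(\ref{eq4.3}) buys, via an induction on $|w|$ in which the splitting point of $y_s$ is forced by~(\ref{eq4.3}) (the empty-piece cases needing extra care, along the lines of Lemma~\ref{lem6.1}). Should a clean syntactic derivation prove awkward, the ``only if'' half of the representation claim can be obtained semantically instead: every $Ty_n^\A$ model extends to a model of $\Delta$ in which the auxiliary $s\a t$/$s\b t$ constants are interpreted standardly (supported on $\A^*$, as in the construction behind Lemma~\ref{lem7.4}); by Lemma~\ref{lem5.2} every factorization of $\llb T_A(w)\rrb$ then lies in a copy $\D'_s$ of $\A^*$, and Soundness delivers $P_{\a\x\b}\in\overline{\C_w}$ directly.
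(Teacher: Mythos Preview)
Your plan mirrors the paper's proof: enumerate the splits of $w$, plug in the two pseudo-canonic equations, collapse $(L_{s\a t}\cat K_{s\b t})\,T_A(w)$ to a finite disjunction of $(=A^{p,i}_\a)\X(=B^{p,j}_\b)$ via~(\ref{eq7.5})--(\ref{eq7.6}), and then read off both clauses from Lemma~\ref{lem6.3} and Definition~\ref{defn3.7}. The paper does exactly this, only more tersely; your added detail on the representation side --- the $\overline{\overline{\M}\x\overline{\N}}=\overline{\M\x\N}$ reduction by proof-tree transfer --- is correct and makes explicit what the paper compresses into ``Definition~\ref{defn3.7} and Lemma~\ref{lem6.3}''.

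You are right to flag the unique-readability step as the crux, and right to be uneasy about it: the paper's ``by axioms~\ref{eq4.1}--\ref{eq4.3}'' is doing work that is not obviously justified. Neither of your proposed resolutions actually closes the gap, however. The syntactic one fails outright: take the model with $\D_s=\A^*\times\mathbb{Z}$, componentwise operations, and $\llb\CC^i_s\rrb=(a_i,0)$; this satisfies~(\ref{eq4.1})--(\ref{eq4.3}) for any $N$ (for~(\ref{eq4.3}) compare the $\A^*$-component at position $|x|+1$), yet $\llb T_A(w)\rrb$ has infinitely many factorizations, so no finite disjunction can be provably equal to $\l y_s\l z_s(T_A(w)=y_s\wcat z_s)$. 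In particular, with $L_{sst}=K_{sst}=\l x_s(=x_s)$ and $\Delta=\empty$ (a bona fide pseudo-canonic pair), the claimed equation for $(L\cat K)\,T_A(w)$ is refuted in this model. Your semantic fallback misreads Lemma~\ref{lem5.2}: that lemma produces a submonoid $\D'_s\cong\A^*$, not a statement that factorizations of its elements remain inside it, and in the model just described they do not. What \emph{could} help, but only in the specific setting of Theorem~\ref{thm7.10}, is that in the Lemma~\ref{lem7.4} model the language \emph{constants} are interpreted with support contained in $\D'_s$, so nonstandard factors never witness $L\,y_s\,x_\a\AND K\,z_s\,x_\b$; but that is a feature of those particular terms, not of arbitrary pseudo-canonic $L,K$, and even there it addresses only the soundness direction of the representation claim, not the $\Delta\vdash$ equation that pseudo-canonicity requires.
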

\begin{proof}
Let \;$w = h_0 + t_0 = \:...\: = h_l + t_l$\; be all possible splits of a word $w$ into head and tail and
\[
\Delta \vdash L_{s\a t} \: T_A(h_m) = \bigvee_{i = 1}^{l_m} (= A^{m,i}_\a), \qquad
\Delta \vdash K_{s\b t} \: T_A(t_m) = \bigvee_{j = 1}^{k_m} (= B^{m,j}_\b)
\]
(for m = 0, ... l). Then by axioms \ref{eq4.1} - \ref{eq4.3} and metatheorems 
\begin{equation} \label{eq7.5}
\vdash \exists x_\a (A_t \OR B_t) = \exists x_\a A_t \OR \exists x_\a B_t,
\end{equation}
\begin{equation} \label{eq7.6}
\vdash \exists x_\g (x_\g = A_\g \: \AND \: B_t(x_\g)) = B_t(A_\g)
\end{equation}
(where $x_\g$ is not free in $A_\g$) we have:
\[
\Delta \vdash (L_{s \a t} \cat K_{s \b t}) \: T_A(w) = \bigvee_{m=0}^{l} \bigvee_{i = 1}^{l_m} \bigvee_{j = 1}^{k_m} (= A^{m,i}_\a) \X (= B^{m,j}_\b).
\]
This proves that $(\Delta, L_{s \a t} \cat K_{s \b t})$ is a pseudo-canonic representation and therefore for any $Ty_n^\A$ terms $A_\a$, $B_\b$,
\[
\Delta \vdash (L_{s \a t} \cat K_{s \b t}) \: T_A(w) \: A_\a B_\b \; \Longleftrightarrow \;
\vdash \bigvee_{m=0}^{l} \bigvee_{i = 1}^{l_m} \bigvee_{j = 1}^{k_m} (A_\a = A^{m,i}_\a) \AND (B_\b = B^{m,j}_\b).
\]
By Definition \ref{defn3.7} and Lemma \ref{lem6.3} this proves that $(\Delta, L_{s \a t} \cat K_{s \b t})$ represents $\L \Ocat \K$.
\end{proof}

\begin{prop} \label{prop7.8}
Let an $\a$-language $\L$ have a pseudo-canonic representation $(\Delta, L_{s \a t})$ and a rule $\R \subset \T_\a \otimes \T_\b$ have a canonic representation $R_{\a\b t}$ in the range of $\L$. Then the language $\L \RA \R$ is represented by $(\Delta, L_{s \a t} \RA \R_{\a\b t})$, which is also a pseudo-canonic representation.
\end{prop}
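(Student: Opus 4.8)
The plan is to follow the template of Propositions \ref{prop7.6} and \ref{prop7.7}: first compute a closed-form disjunction for $(L_{s\a t} \RA R_{\a\b t}) \: T_A(w)$, which simultaneously exhibits $(\Delta, L_{s\a t} \RA R_{\a\b t})$ as a pseudo-canonic representation, and then identify the language it defines with $\L \RA \R$ using Lemmas \ref{lem7.3} and \ref{lem6.3} together with the logical-closure operations of Section 3.

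For the first part, fix $w \in \A^*$ and use the pseudo-canonic representation of $\L$ to obtain a finite set $\{A^1_\a, \:...\: A^l_\a\}$ of $Ty_n^\A$ terms with $\Delta \vdash L_{s\a t} \: T_A(w) = \bigvee_{i=1}^l (= A^i_\a)$; since then $\Delta \vdash L_{s\a t} \: T_A(w) \: A^i_\a$, each $A^i_\a$ is a meaning of $w$ in $\L$ and in particular lies in the range of $\L$. Because $R_{\a\b t}$ is a canonic representation of the (logically closed) rule $\R$ in the range of $\L$, for each $i$ there is a finite set $\{B^{i,1}_\b, \:...\: B^{i,k_i}_\b\}$ with $\vdash R_{\a\b t} \: A^i_\a = \bigvee_{j=1}^{k_i} (= B^{i,j}_\b)$. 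Unfolding $(\RA_{(s\a t)(\a\b t) s\b t})$, the term $(L_{s\a t} \RA R_{\a\b t}) \: T_A(w)$ is $\l x_\b \exists x_\a (L_{s\a t} \: T_A(w) \: x_\a \AND R_{\a\b t} \: x_\a \: x_\b)$; substituting the disjunction for $L_{s\a t} \: T_A(w) \: x_\a$, pushing $\exists x_\a$ inside with metatheorem \ref{eq7.5} and eliminating it against each equality $x_\a = A^i_\a$ with \ref{eq7.6} (exactly as in the proof of Proposition \ref{prop7.7}), and finally rewriting each $R_{\a\b t} \: A^i_\a$ by its canonic form, I would obtain $\Delta \vdash (L_{s\a t} \RA R_{\a\b t}) \: T_A(w) = \bigvee_{i=1}^l \bigvee_{j=1}^{k_i} (= B^{i,j}_\b)$. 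As $\Delta$ is unchanged, the conservativity clause of Definition \ref{defn7.3} is inherited verbatim, so $(\Delta, L_{s\a t} \RA R_{\a\b t})$ is a pseudo-canonic representation, and therefore $\Delta \vdash (L_{s\a t} \RA R_{\a\b t}) \: T_A(w) \: B_\b$ holds iff $\vdash \bigvee_{i,j} (B_\b = B^{i,j}_\b)$.

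It then remains to check that this representation defines $\L \RA \R$, i.e. that $\vdash \bigvee_{i,j}(B_\b = B^{i,j}_\b)$ holds iff $(w, B_\b) \in \L \RA \R$. For the forward implication I would first observe that $\L \RA \R$ is logically closed: the set of meanings of $w$ in $\L$ is logically closed (Definition \ref{defn3.4}) and $\R$ is a logically closed semantic rule, so its full image -- which by Definition \ref{defn3.3} is precisely the set of meanings of $w$ in $\L \RA \R$ -- is logically closed (Definition \ref{defn3.8}). Since $\vdash R_{\a\b t} \: A^i_\a \: B^{i,j}_\b$ gives $(A^i_\a, B^{i,j}_\b) \in \R$ while $(w, A^i_\a) \in \L$, each $(w, B^{i,j}_\b)$ lies in $\L \RA \R$; by Lemma \ref{lem6.3} the logical closure of $\{B^{i,j}_\b\}$ is represented by $\bigvee_{i,j}(= B^{i,j}_\b)$, so $\vdash \bigvee_{i,j}(B_\b = B^{i,j}_\b)$ forces $(w, B_\b) \in \L \RA \R$. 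For the converse, choose $A_\a$ with $(w, A_\a) \in \L$ and $(A_\a, B_\b) \in \R$; Lemma \ref{lem7.3} gives $\vdash \bigvee_i (A_\a = A^i_\a)$ and the canonic representation of $\R$ gives $\vdash R_{\a\b t} \: A_\a \: B_\b$, so distributing this provable formula through the disjunction and substituting $A^i_\a$ for $A_\a$ in each disjunct yields $\vdash \bigvee_i R_{\a\b t} \: A^i_\a \: B_\b$, whence the canonic forms of the $R_{\a\b t} \: A^i_\a$ give $\vdash \bigvee_{i,j}(B_\b = B^{i,j}_\b)$.

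The step I expect to require the most care is this last identification, for two related reasons: $R_{\a\b t}$ is only guaranteed to be canonical on the range of $\L$, so its defining equalities may be applied at the generators $A^i_\a$ but an arbitrary meaning $A_\a$ of $w$ must be routed through the disjunction $\vdash \bigvee_i(A_\a = A^i_\a)$ of Lemma \ref{lem7.3} rather than used directly; and the forward implication really does need $\L \RA \R$ to be logically closed, which is exactly the point at which the hypothesis that $\R$ is a logically closed semantic rule is consumed. The disjunction-pushing computation of the first part, by contrast, is routine once Proposition \ref{prop7.7} is in hand.
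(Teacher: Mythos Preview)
Your proposal is correct and follows essentially the same approach as the paper: compute the closed disjunctive form of $(L_{s\a t} \RA R_{\a\b t})\,T_A(w)$ via the metatheorems \ref{eq7.5}--\ref{eq7.6}, conclude pseudo-canonicity, and then identify the represented language with $\L \RA \R$. The paper's proof is terser at the identification step (it simply cites Definition \ref{defn3.3} and Lemma \ref{lem6.3}), whereas you spell out both directions explicitly, including the appeal to logical closedness of $\R$ and the routing of an arbitrary meaning $A_\a$ through the generators via Lemma \ref{lem7.3}; this extra care is sound and does not deviate from the paper's line.
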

\begin{proof}
Let 
\[
\Delta \vdash L_{s\a t} \: T_A(w) = \bigvee_{i = 1}^{l} (= A^i_\a)
\]
and
\[
\vdash R_{\a\b t} \: A^i_\a = \bigvee_{j = 1}^{k_i} (= B^{i,j}_\b)
\]
(for $i = 1, ... l$). Then by metatheorem \ref{eq7.6} we have
\[
\Delta \vdash (L_{s\a t} \RA \R_{\a\b t}) \: T_A(w) = \bigvee_{i = 1}^{l} \bigvee_{j = 1}^{k_i} (= B^{i,j}_\b).
\]
This proves that $(\Delta, L_{s \a t} \RA \R_{\a\b t})$ is a pseudo-canonic representation and therefore for any $Ty_n^\A$ term $B_\b$,
\[
\Delta \vdash  (L_{s\a t} \RA \R_{\a\b t}) \: T_A(w) \: B_\b \quad \Longleftrightarrow \quad
\vdash \bigvee_{i = 1}^{l} \bigvee_{j = 1}^{k_i} (B_\b = B^{i,j}_\b).
\]
By Definition \ref{defn3.3} and Lemma \ref{lem6.3} this proves that $(\Delta, L_{s \a t} \RA \R_{\a\b t})$ represents $\L \RA \R$.
\end{proof}

Propositions \ref{prop7.6} - \ref{prop7.8} establish that, given some logically closed languages known to have pseudo-canonic representations that share a single common axiom set $\Delta$ and semantic rules that have canonic representations, one can build $Ty_n$ representations of their logical joins, logical concatenations and applications of the rules by means of the three corresponding $Ty_n^\A$ operators: $(\OR_{(s \a t) (s \a t) s \a t})$, $(\cat_{(s \a t) (s \b t) s \a\b t})$ and $(\RA_{(s\a t)(\a\b t) s\b t})$; moreover these representations are also pseudo-canonic and share the same axiom set $\Delta$.

\begin{lem} \label{lem7.9}
$Ty_n^\A$ operators of logical join, logical concatenation and semantic rule application reproduce the associativity and monotonicity properties:

\begin{eqnarray*}
&\vdash& (x_{s\a t} \OR y_{s\a t}) \cat z_{s\b t} \;=\; x_{s\a t} \cat z_{s\b t} \;\OR\; y_{s\a t} \cat z_{s\b t}, \\
&\vdash& z_{s\b t} \cat (x_{s\a t} \OR y_{s\a t}) \;=\; z_{s\b t} \cat x_{s\a t} \;\OR\; z_{s\b t} \cat y_{s\a t}, \\
&\vdash& (x_{s\a t} \OR y_{s\a t}) \RA z_{\a\b t} \;=\; x_{s\a t} \RA z_{\a\b t} \;\OR\; y_{s\a t} \RA z_{\a\b t}, \\
&\vdash& (x_{s\a t} \IMP y_{s\a t}) \;\IMP\; (x_{s\a t} \cat z_{s\b t} \IMP y_{s\a t} \cat z_{s\b t}), \\
&\vdash& (x_{s\a t} \IMP y_{s\a t}) \;\IMP\; (z_{s\b t} \cat x_{s\a t} \IMP z_{s\b t} \cat y_{s\a t}), \\
&\vdash& (x_{s\a t} \IMP y_{s\a t}) \;\IMP\; (x_{s\a t} \RA z_{\a\b t} \IMP y_{s\a t} \RA z_{\a\b t}).
\end{eqnarray*}
\end{lem}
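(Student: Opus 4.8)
The plan is to derive all six formulas inside $Ty_n$ itself --- the new axioms \ref{eq4.1}--\ref{eq4.3} of $Ty_n^\A$ play no role --- by unfolding the definitions of the operators $(\OR_{(s\a t)(s\a t)s\a t})$, $(\cat_{(s\a t)(s\b t)s\a\b t})$ and $(\RA_{(s\a t)(\a\b t)s\b t})$ from Definition \ref{defn7.2}, together with the abbreviations for $\OR$ and $\IMP$ at functional types, then performing the obvious $\l$-conversions, and finally invoking propositional and first-order tautologies. In effect this lemma is the $Ty_n^\A$-internal mirror of Lemma \ref{lem3.1}, and the proof mirrors the set-theoretic argument given there, with the existential quantifiers $\exists y_s \exists z_s$ (resp. $\exists x_\a$) playing the role that ``there is a split $w = uv$'' (resp. ``there is a mediating $\a$-meaning'') played before.

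For the three distributivity identities I would $\beta$-reduce both sides to $\l$-normal form, so that the claim reduces to the provable equality of two bodies of type $t$ under the prefix $\l x_s \l x_\a$ (or $\l x_s \l x_\a \l x_\b$ in the concatenation cases). For the first identity, the left body is
\[
\exists y_s \exists z_s \bigl(x_s = y_s \wcat z_s \AND (x_{s\a t}\, y_s\, x_\a \OR y_{s\a t}\, y_s\, x_\a) \AND z_{s\b t}\, z_s\, x_\b\bigr);
\]
distributing $\AND$ over $\OR$ and then applying metatheorem \ref{eq7.5} ($\exists$ over $\OR$, twice) turns it into the body of the right-hand side, and lifting this provable equality of bodies through the $\l$-abstractions --- a derived rule from extensionality together with $\beta$-conversion --- yields the identity. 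The second identity is symmetric in the two operands of $\cat$, and the third is the same computation with a single $\exists x_\a$ in place of $\exists y_s \exists z_s$.

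For the three monotonicity statements I would unfold $x_{s\a t} \IMP y_{s\a t}$ to $\forall x_s \forall x_\a (x_{s\a t}\, x_s\, x_\a \IMP y_{s\a t}\, x_s\, x_\a)$, assume it, and then, after $\beta$-reducing the consequent, prove the required implication between the $\l$-bodies for arbitrary $x_s, x_\a, x_\b$: instantiating the hypothesis at the witnessing arguments gives $x_{s\a t}\, y_s\, x_\a \IMP y_{s\a t}\, y_s\, x_\a$ inside the scope of the existential quantifiers, so the matrix under $\exists$ on the left implies the one on the right, and monotonicity of $\exists$ and $\AND$ then delivers the consequent; generalizing and discharging the hypothesis gives the formula. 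The $\RA$ cases (the third and sixth formulas) are handled identically with $\exists x_\a (x_{s\a t}\, x_s\, x_\a \AND z_{\a\b t}\, x_\a\, x_\b)$ as the reduced body.

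The only point needing care --- bookkeeping rather than a genuine obstacle --- is variable hygiene: when an argument such as $x_{s\a t} \OR y_{s\a t}$ is substituted for a $\l$-bound variable inside the body of $(\cat)$ or $(\RA)$, its free variables $x_{s\a t}, y_{s\a t}$ must be disjoint from that operator's internal binders $x_s, x_\a, x_\b, y_s, z_s$ (they are), and when two copies of a $\cat$- or $\RA$-term are combined under $\OR$ the existential variables should be renamed apart before the manipulation. Once this is noted, each of the six derivations is routine, exactly paralleling Lemma \ref{lem3.1}.
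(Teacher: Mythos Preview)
Your proposal is correct and follows essentially the same approach as the paper: unfold the definitions of $(\cat)$ and $(\RA)$, apply the propositional distributivity and monotonicity tautologies for $\AND$ over $\OR$ and $\IMP$, and push these through the existential prefixes via metatheorem \ref{eq7.5} and the monotonicity of $\exists$. The paper's proof is simply a terser statement of exactly this, listing the four propositional tautologies and the quantifier metatheorem $\vdash \forall x_\g (A_t \IMP B_t) \IMP (\exists x_\g A_t \IMP \exists x_\g B_t)$ without spelling out the $\beta$-reductions or the variable-hygiene bookkeeping that you (rightly) make explicit.
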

\begin{proof}
All these properties follow from the definition of operator $(\cat_{(s \a t) (s \b t) s \a\b t})$ due to the tautologies
\[
\vdash (A_t \OR B_t) \AND C_t \;=\; A_t \AND C_t \OR B_t \AND C_t, \quad
\vdash C_t \AND (A_t \OR B_t) \;=\; C_t \AND A_t \OR C_t \AND B_t,
\]\[
\vdash (A_t \IMP B_t) \;\IMP\; (A_t \AND C_t \IMP B_t \AND C_t), \quad
\vdash (A_t \IMP B_t) \;\IMP\; (C_t \AND A_t \IMP C_t \AND B_t)
\]
and the metatheorems \ref{eq7.5} and
\[
\vdash \forall x_\g (A_t \IMP B_t) \;\IMP\; (\exists x_\g A_t \IMP \exists x_\g B_t).
\]
\end{proof}

\begin{thm} \label{thm7.10}
Every language $\L_i$ of the minimal solution of a recursive grammar $\RG$ is represented by $(\Delta_{\RG}, \CC^i_{s\a_i t})$.
\end{thm}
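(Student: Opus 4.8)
The plan is to verify the two implications of Definition \ref{defn4.1} for the pair $(\Delta_{\RG},\CC^i_{s\a_i t})$, using the finite approximations $\VL^k$ of Proposition \ref{prop7.1} together with Propositions \ref{prop7.6}--\ref{prop7.8}. For each $k\geq 0$ I would introduce auxiliary $Ty_n^{\A+}$ terms $H^1_k,\dots,H^m_k$ of the respective types $s\a_i t$ by ``unfolding the grammar $k$ times'': $H^i_0$ is the canonic representation $L^i_{s\a_i t}$ when $\L_i$ is a lexicon and $\l x_s\l x_{\a_i}\FF$ otherwise; and $H^i_k$ equals $L^i_{s\a_i t}$ when $\L_i$ is a lexicon, $H^{j(i)}_{k-1}\OR H^{k(i)}_{k-1}$ in case (ii), $H^{j(i)}_{k-1}\cat H^{k(i)}_{k-1}$ in case (iii), and $H^{j(i)}_{k-1}\RA R^i_{\a_{j(i)}\a_i t}$ in case (iv) of Definition \ref{defn7.1}. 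These terms mirror at the syntactic level the construction of $\VL^k$ from $\VL^0$ in \ref{eq7.3}.

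The first step is to prove, by induction on $k$, that $(\Delta_{\RG},H^i_k)$ is a pseudo-canonic representation of $\L^k_i$ for every $i$. The base case is Lemma \ref{lem7.4} for the lexicon components; for the components with $\L^0_i=\empty$ the empty disjunction does the job, and the conservativity clause of Definition \ref{defn7.3} is exactly what the proof of Lemma \ref{lem7.4} already supplies for $\Delta_{\RG}$. The inductive step is Propositions \ref{prop7.6}, \ref{prop7.7} and \ref{prop7.8} applied to the level-$(k-1)$ pseudo-canonic representations, which all share the single axiom set $\Delta_{\RG}$; one needs only the trivial remark that a canonic representation of $\R_i$ in the range of $\L_{j(i)}$ is a fortiori a canonic representation in the range of the smaller language $\L^{k-1}_{j(i)}\subseteq\L_{j(i)}$. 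Separately I would show, again by induction on $k$, that $\Delta_{\RG}\vdash H^i_k\IMP\CC^i_{s\a_i t}$ for all $i$: the base case is immediate from clause (i) of Definition \ref{defn7.2} (and from $\vdash(\l x_s\l x_{\a_i}\FF)\IMP X$ in the empty case), and the step follows from the monotonicity identities of Lemma \ref{lem7.9} together with the defining equalities (ii)--(iv) of $\Delta_{\RG}$.

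With these two facts the forward inclusion is quick. If $(w,A_\a)\in\L_i$ then, since by Proposition \ref{prop7.1} the minimal solution satisfies $\L_i=\overline{\bigcup_k\L^k_i}$ and the $\L^k_i$ increase with $k$, a finite closure proof tree for $(w,A_\a)$ (in the sense used in Lemma \ref{lem6.2}) has all of its leaves in a single $\L^K_i$; as every $\L^k_i$ is logically closed --- being a lexicon, or $\empty$, or a logical join, logical concatenation, or logically closed semantic rule applied to logically closed languages --- the root $(w,A_\a)$ lies in $\L^K_i$. By the first claim $\Delta_{\RG}\vdash H^i_K\:T_A(w)\:A_\a$, and instantiating the second at $T_A(w)$ and $A_\a$ gives $\Delta_{\RG}\vdash\CC^i_{s\a_i t}\:T_A(w)\:A_\a$.

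The converse inclusion is the main obstacle, because a language of a recursive grammar need not be finitely generated (a cycle running through a logical concatenation with an empty-word-carrying factor can yield an infinite, non-finitely-generated set of meanings), so one cannot reduce $\CC^i_{s\a_i t}\:T_A(w)$ to a finite disjunction and argue as in Section 6; instead the argument must be semantic. Suppose $(w,A_\a)\notin\L_i$. The set $\{B_{\a_i}\;|\;(w,B_{\a_i})\in\L_i\}$ is logically closed (Definition \ref{defn3.4} with $w$ fixed), so by compactness together with Lemma \ref{lem5.3} the theory consisting of $Ty_n^\A$ and all inequalities $A_\a\neq B_{\a_i}$ for $(w,B_{\a_i})\in\L_i$ (with the free variables of these terms frozen as fresh constants) is consistent --- a contradiction from a finite subset would give $\vdash A_\a=B^1_{\a_i}\OR\dots\OR A_\a=B^l_{\a_i}$ and hence $(w,A_\a)\in\L_i$ by Lemma \ref{lem5.3}. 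A model of this theory furnishes a $Ty_n^\A$ model $M$ and an assignment $a$ with $\llb A_\a\rrb_{M,a}\neq\llb B_{\a_i}\rrb_{M,a}$ for every meaning $B_{\a_i}$ of $w$ in $\L_i$; as $A_\a$ and all $B_{\a_i}$ are $Ty_n^\A$ terms these values do not depend on any extension of $M$. Extend $M$ to a model $M^+$ of $Ty_n^{\A+}$ satisfying $\Delta_{\RG}$ exactly as in the proof of Lemma \ref{lem7.4} --- legitimate since the minimal solution is a solution of $\RG$ and that construction uses only this fact --- in which $\llb\CC^i_{s\a_i t}\rrb_{M^+}$ is true at $(\llb T_A(w)\rrb_M,v)$ iff some $B_{\a_i}$ with $(w,B_{\a_i})\in\L_i$ has $\llb B_{\a_i}\rrb_{M^+,a}=v$. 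By the choice of $M$ and $a$, $\CC^i_{s\a_i t}\:T_A(w)\:A_\a$ is false in $M^+$ under $a$, so by the Soundness Theorem $\Delta_{\RG}\nvdash\CC^i_{s\a_i t}\:T_A(w)\:A_\a$, i.e. $(w,A_\a)\notin$ the language represented by $(\Delta_{\RG},\CC^i_{s\a_i t})$.
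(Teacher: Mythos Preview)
Your argument is correct and follows the paper's strategy closely. The forward direction is essentially identical: the paper also introduces level-$k$ approximating terms (its $L^{i,j}_{\t_j}$ are your $H^j_i$), proves by induction via Propositions \ref{prop7.6}--\ref{prop7.8} that they give pseudo-canonic representations of the components of $\VL^i$, and then uses the monotonicity properties of Lemma \ref{lem7.9} together with the defining equalities of $\Delta_{\RG}$ to obtain $\Delta_{\RG}\vdash L^{i,j}_{\t_j}\IMP\CC^j_{\t_j}$.

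For the converse, both you and the paper argue semantically via the model construction from the proof of Lemma \ref{lem7.4}, but your version is the more rigorous of the two. The paper asserts that validity of $\CC^j\,T_A(w)\,A_{\a_j}$ in every such extended model yields a \emph{single} term $A'_{\a_j}$ with $(w,A'_{\a_j})\in\L_j$ and $\vdash A'_{\a_j}=A_{\a_j}$; but the model construction only furnishes, for each base model and assignment, \emph{some} matching $A'$, possibly varying. Your compactness step (applied to the set of inequalities $A_\a\neq B_{\a_i}$ over all meanings $B_{\a_i}$ of $w$) is exactly what licenses this quantifier exchange via Lemma \ref{lem5.3}, and your observation that the meaning set of a fixed word need not be finitely generated explains why one cannot simply reduce to the finite-disjunction argument of Section~6. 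So your converse fills a gap that the paper's two-line treatment leaves open.
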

\begin{proof}
If (using the notations introduced in the proof of Proposition \ref{prop7.1}) $\ee \in \VL$, then there exists an index $i$ such that $\ee \in \VL^i$. 
As every $\VL^i$ is built from lexicons $\VL^0$ and $\VL^{(i-1)}$ by applying logical joins, logical concatenations and semantic rule applications, from Propositions \ref{prop7.6} - \ref{prop7.8} it follows (by induction on $i$) that every component $\L^i_j$ of $\VL^i$ has a pseudo-canonic representation $(\Delta_{\RG}, L^{i,j}_{\t_j})$, where for any $i > 0$
\[
L^{i,j}_{\t_j} \eqd L^{0,j}_{\t_j} \OR E^j_{\t_1...\t_m\t_j} \: L^{i-1,1}_{\t_1} \:...\: L^{i-1,m}_{\t_m},
\]
$\t_j \eqd s\a_j t$ and terms $E^j_{\t_1...\t_m\t_j}$ stand for $Ty_n^\A$ operators corresponding to the operator $\CE$. 
On the other hand, from Definition \ref{defn7.2} we have
\[
\Delta_{\RG} \vdash \CC^j_{\t_j} = L^{0,j}_{\t_j} \OR E^j_{\t_1...\t_m\t_j} \: \CC^1_{\t_1} \:...\: \CC^m_{\t_m}.
\]
As by Lemma \ref{lem7.9} the terms $E^j_{\t_1...\t_m\t_j}$ reproduce the monotonicity property of the operator $\CE$, it follows from this that for all $i \geq 0$
\[
\Delta_{\RG} \vdash L^{i,j}_{\t_j} \IMP \CC^j_{\t_j}
\]
and therefore, for every $j$ that
\[
\Delta_{\RG} \vdash \CC^j_{\t_j} \: T_A(w_j) \: A_{\a_j}.
\]

To prove that, inversely, the above derivations imply $\ee \in \VL^i$, consider the $Ty_n^{\A+}$ model defined in the proof of Lemma \ref{lem7.4}. As such a model can extend an arbitrary $Ty_n^\A$ model, these derivations imply that there exist such terms $A'_{\a_j}$ that $\ee' \eqd ((w_1, A'_{\a_1}),\:...\:(w_m, A'_{\a_m})) \in \VL$ and $\vdash A'_{\a_j} = A_{\a_j}$. As $\L_j$ are logically closed, this implies that $\ee \in \VL$ also.
\end{proof}

\begin{cor} \label{cor7.11}
Every language $\L_i$ of the minimal solution of a recursive grammar $\RG$ is represented by the term $\l x_s \l x_{\a_i} D_t \:\IMP\: \CC^i_{s\a_i t}$, where $D_t$ is the conjunction of all formulas from $\Delta_{\RG}$.
\end{cor}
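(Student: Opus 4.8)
The plan is to obtain this as an immediate consequence of Theorem~\ref{thm7.10} together with the Deduction Theorem, exactly in the manner of the remark following Definition~\ref{defn4.1}. The crucial preliminary observation is that a recursive grammar involves only finitely many languages $\L_1, \ldots, \L_m$ (Definition~\ref{defn7.1}), so the associated set $\Delta_{\RG}$ of Definition~\ref{defn7.2} is a finite set $\{D^1_t, \ldots, D^m_t\}$, and each $D^j_t$ is a \emph{closed} formula --- an equality whose two sides are the fresh constant $\CC^j_{s\a_j t}$ and a term built from the other fresh constants by the $Ty_n^\A$ operators of logical join, logical concatenation and semantic rule application. I would record in particular that neither $x_s$ nor $x_{\a_i}$ occurs free in the conjunction $D_t \eqd D^1_t \AND \:...\: \AND D^m_t$, and that the displayed term $\l x_s \l x_{\a_i} D_t \IMP \CC^i_{s\a_i t}$ is to be read as $\l x_s \l x_{\a_i} (D_t \IMP \CC^i_{s\a_i t} \: x_s \: x_{\a_i})$, matching the form of that remark.

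First I would invoke Theorem~\ref{thm7.10}: for any $Ty_n^\A$ term $A_{\a_i}$ and $w \in \A^*$,
\[
(w, A_{\a_i}) \in \L_i \quad\Longleftrightarrow\quad \Delta_{\RG} \vdash \CC^i_{s\a_i t} \: T_A(w) \: A_{\a_i}.
\]
Since $\Delta_{\RG}$ is finite, the Deduction Theorem converts the right-hand side to $\vdash D_t \IMP \CC^i_{s\a_i t} \: T_A(w) \: A_{\a_i}$; and since $x_s, x_{\a_i}$ are not free in $D_t$, a $\beta$-conversion shows this is in turn equivalent to $\vdash (\l x_s \l x_{\a_i} D_t \IMP \CC^i_{s\a_i t}) \: T_A(w) \: A_{\a_i}$. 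Chaining the three equivalences is precisely the assertion that $\L_i$ is represented by $(\empty,\; \l x_s \l x_{\a_i} D_t \IMP \CC^i_{s\a_i t})$, i.e.\ by a term. I would also note in passing that the empty axiom set is trivially consistent here, because $Ty_n^{\A+}$ in this construction is just $Ty_n^\A$ augmented by the new constants $\CC^1_{s\a_1 t}, \ldots, \CC^m_{s\a_m t}$ with no new axioms.

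I do not anticipate a genuine obstacle: the argument is a routine application of the Deduction Theorem, and the steps that need any care are purely bookkeeping --- confirming that $\Delta_{\RG}$ is finite, that its formulas are closed so the $\lambda$-bound variables cannot capture anything in $D_t$ under $\beta$-conversion, and that adjoining the fresh constants alone preserves consistency --- each of which is immediate from Definitions~\ref{defn7.1} and~\ref{defn7.2}. The only thing I would be slightly careful to spell out is the notational convention for the term in the statement, as mentioned above, so that the final $\beta$-conversion step is unambiguous.
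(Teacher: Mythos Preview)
Your proposal is correct and takes essentially the same approach as the paper: the paper's entire proof consists of the single line ``See the note to Definition~\ref{defn4.1},'' which is precisely the Deduction Theorem argument you spell out, applied to the representation $(\Delta_{\RG}, \CC^i_{s\a_i t})$ furnished by Theorem~\ref{thm7.10}. Your additional bookkeeping remarks (finiteness and closedness of $\Delta_{\RG}$, non-capture of $x_s, x_{\a_i}$, consistency of the bare extension) are exactly the hypotheses that note silently assumes, so you are simply making explicit what the paper leaves implicit.
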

\begin{proof}
See the note to Definition \ref{defn4.1}.
\end{proof}

The significance of these results is that they, in particular, establish a class of non-trivial $Ty_n$-representable languages which turns out to be decidable, in spite of the fact that $Ty_n$ is undecidable and hence so are generally $Ty_n$-representable languages. Indeed, assume all the semantic rules participating in a grammar $\RG$ to be non-degenerate. It is easy to see that in this assumption the domain of every language $\L_i$ is formed by a context-free grammar with the set of terminals being the union of domains of all the lexicons participating in $\RG$ and non-terminal symbols corresponding all the $\L_i$ languages. Therefore, this domain falls into the class of context-free (and hence - recursive) languages of the Chomsky hierarchy. Then, in the additional assumption of decidability of all the $\RG$ semantic rules (which holds, for example, for any functional rules), the problem of finding all the logically-independent representatives of the meanings of an $\L_i$ expression also turns out to be decidable, merely by attributing the terminal (leaf) nodes of its syntactic parsing tree with their (lexicon-determined) meanings and inheriting and evaluating the meanings of upper nodes according to the rules, from bottom up to the root node. Furthermore, though discarding the assumption of non-degenerateness of all the $\RG$ semantic rules may generally lead to languages $\L_i$ with domains being context-sensitive (in the classical sense), retaining only the assumption of the rules decidability still preserves the conclusion about decidability of languages $\L_i$. In this assumption languages $\L_i$ can be parsed, for example, by the two-staged process like this:

\begin{enumerate}
\item parse according to the context-free grammar $\overline{\RG}$ obtained from $\RG$ by replacing every rule $\R_i$ by the trivial non-degenerate rule with the canonic representation $\overline{R^i}_{\a_{j(i)}\a_i t} \eqd \l x_{\a_{j(i)}} \l x_{\a_i} \TT$
\item attribute the terminal (leaf) nodes of the parsing tree with their (lexicon-determined) meanings and inherit and evaluate the meanings of upper nodes according to the actual rules $\R_i$, or discard the parse if at any node the actual rule degenerates to $\l x_{\a_i} \FF$ when applied to all incoming meanings.
\end{enumerate}

To illustrate the technique of a recursive grammar representation, let us build a very simple English-like grammar from tiny lexicons, consisting of a few countable nouns and transitive verbs, the irregular verb "be", a proper noun and two determiners:
\begin{eqnarray*}
\Noun_{s(et)t} &=&(= /\text{builder}/) \FA \Builder_{et} \;\OR\\
               & &(= /\text{house}/) \FA \House_{et} \;\OR\\
	       & &(= /\text{car}/) \FA \Car_{et},\\
\Verbt_{s(eet)t} &=&(= /\text{build}/) \FA \Build_{eet} \;\OR\\
                 &=&(= /\text{sell}/) \FA \Sell_{eet},\\
\VerbBe_{s(eet)t} &=&(= /\text{be}/) \FA (=_{eet}),\\
\Name_{set}    &=&(= /\text{Jack}/) \FA \Jack_e,\\
\Det_{s((et)(et)t)t} &=&(= /\text{a}/) \FA \l y_{et} \l z_{et} \exists x_e (y \: x \;\AND\; z \: x) \;\OR\\
                 & &(= /\text{every}/) \FA (\IMP_{(et)(et)t}).
\end{eqnarray*}
The noun phrase can be defined by the axiom
\begin{eqnarray*}
\NP_{s((et)t)t} &=& \Det \pcat \Noun \FA \II_{((et)(et)t)(et)(et)t} \;\OR\\
		& & \Name \FA \l x_e \l y_{et} (y \; x)
\end{eqnarray*}
and the simple verb phrase (for the object of the third person in the singular number) by the axioms
\begin{eqnarray*}
\VPs_{s(et)t} &=& (\Verbt \cat (=/\text{s}/) \;\OR\; ((=/\text{is}/) \FA /\text{be}/) \RA \VerbBe) \pcat \NP \FA\\
	      & & \l z_{eet} \l y_{(et)t} \l x^2_e(y \: \l x^1(z \: x^1 \: x^2))\;\OR\\
	      & & ((=/\text{does not}/) \pcat \Verbt \;\OR\; ((= /\text{is not}/) \FA /\text{be}/) \RA \VerbBe) \pcat \NP \FA\\
	      & & \l z_{eet} \l y_{(et)t} \l x^2_e(y \: \l x^1(\NOT (z \: x^1 \: x^2))),\\
\end{eqnarray*}
where
\[
(\pcat) \eqd \l x_{s\a t} \l y_{s\b t} (x \cat (= /\text{ }/) \cat y)
\]
represents phrase concatenation (with a separating space).  
Note that the constant $\VerbBe$ occurs here as a semantic rule applied to simple $s$-languages which express the morphology of the verb "be". As these languages, as well as the language represented by $\VerbBe$ itself are finite lexicons, the results of the applications are lexicons too and hence conditions of the theorem \ref{thm7.10} still hold for the language we are defining.
The compound verb phrase can now be defined by the axioms like
\begin{eqnarray*}
\VPm_{s(et)t} &=& \VPs \pcat (= /\text{and}/) \pcat \VPs \FA (\AND_{(et)(et)et)}) \;\OR\\
              & &  \VPs \cat (= /\text{,}/) \pcat \VPm \FA (\AND_{(et)(et)et}),\\
\VP_{s(et)t} &=& \VPs \;\OR\; \VPm
\end{eqnarray*}
and the clause -- by the axiom
\[
\Cl_{stt} = \NP \pcat \VP \FA \II_{((et)t)(et)t}.
\]
Finally, the declarative sentence can be defined by the axiom
\[
\St_{stt} = \Cl \cat (=/\text{.}/) \;\OR\; (\Cl \cat (= /\text{,}/) \pcat \St \;\OR\;
                                            \Cl \pcat (= /\text{and}/) \pcat \St) \FA (\AND_{ttt}).
\]
If $\Delta_S$ is the set of the above axioms, then, by propositions \ref{prop7.6} - \ref{prop7.8}, we will have, for example:
\begin{eqnarray*}
\Delta_S &\vdash& \St \; /\text{Jack builds a house.}/ \; \exists x_e (\Build \: x_e \: \Jack \;\AND\; \House \: x_e),\\
\Delta_S &\vdash& \St \; /\text{Jack sells a car and builds a house.}/ \; (\exists x_e (\Sell \: x_e \: \Jack \;\AND\; \Car \: x_e) \AND \\
& & \exists x_e (\Build \: x_e \: \Jack \;\AND\; \House \: x_e)),\\
\Delta_S &\vdash& \St \; /\text{Jack does not build a house.}/ \; \NOT \exists x_e (\Build \: x_e \: \Jack \;\AND\; \House \: x_e),\\
\Delta_S &\vdash& \St \; /\text{every builder builds a house.}/ \; (\Builder \: \IMP \: \l x^2_e \exists x^1_e (\Build \: x^1_e \: x^2_e \;\AND\; \House \: x^1_e),\\
\Delta_S &\vdash& \St \; /\text{Jack is a builder, Jack builds a house.}/ \; (\exists x_e (\Builder \: x_e \;\AND\; \Jack = x_e) \AND \\
& & \exists x_e (\Build \: x_e \: \Jack \;\AND\; \House \: x_e)).
\end{eqnarray*}
Of course, along with these, the language represented by $(\Delta_S, \St)$ also admits semantically infelicitous sentences like, for example, "a house builds a builder" or "Jack is a house". If one would like to make such sentences ungrammatical, i.e. exclude them from the domain of the language represented by $(\Delta_S, \St)$, this can be achieved by applying more restrictive relational semantic rules, than those simple functional rules used in the above axioms; such rules must account the animacy categories of nouns, as well as categories of subjects and objects applicable to a verb. Another, less elegant, though practically may be simpler and more efficient solution is to split lexicons and derived languages to semantically homogenous components and allow concatenation of only matching pairs, like, for example:
\[
\Cl_{stt} = (\NPa \pcat \VPa \;\OR\; \NPi \pcat \VPi) \FA \II_{((et)t)(et)t},
\]
where $\NPa, \NPi$ are to represent animate and inanimate noun phrases and $\VPa, \VPi$ to represent verb phrases applicable to animate and inanimate subjects.
Note that this approach leads to splitting the verb lexicon to components corresponding different \emph{generic sentence frames} as defined in \cite{Fellbaum1998}; in particular, the verbs "build" and "sell", in terms of WordNet, will correspond the frame "Somebody ----s something" and the verb "be" will be shared by the frames "Something ----s something" and "Somebody ----s somebody".

\section{Context type, instructive and \\ context-dependent languages}

Introduction of an additional sort of individuals in $Ty_2$ of \cite{gallin:1975} allowed to accurately interpret in this theory the intensional logic ($IL$), that implied interpretation of the corresponding type ($s$ in Gallin's notations, not to be confused with the symbolic type of this work) as a set of possible ``states of world'' or ``states of affairs''. 
Similarly to this, $Ty_n^{\A}$ with $n > 2$ admits the same interpretation of one of its primitive types (let it be, for certainty, $c \eqd e_{n-1}$) and thus turns out to be capable of representing intensional languages, i.e. languages with intensional semantics, as well. 
In combination with the capabilities provided by the symbolic type, however, a more specific interpretation of type $c$ in $Ty_n^{\A}$ allows to go far beyond this straightforward generalization and achieve also a natural and effective account of those aspects of a language which are usually referenced in literature as dynamic semantics \cite{Kamp:TSR,Heim:1983-2,Seuren85,Groenendijk89dynamicmontague,Groenendijk90dynamicpredicate}.

The basic idea is to extend $Ty_n^{\A}$ with non-logical axioms like
\begin{equation} \label{eq9.1}
\l x_\a (\Deref_{\a c\a} \; S_\a \; (P_{cc} \; (\Setref_{\a\a c c} \; S_\a \; x_\a \; y_c))) =  \II_{\a\a}
\end{equation}
\begin{equation} \label{eq9.2}
\Unset_{\a cc} \; S_\a \super P_{cc} \super \Setref_{\a\a c c} \; S_\a \; x_\a = \II_{cc},
\end{equation}
where there could be some restrictions on the structure of terms $S_\a$ and $P_{cc}$ and
\[
(\super_{(\b\g)(\a\b)\a\g}) \eqd \l z_{\b\g} \l q_{\a\b} \l x_\a (z_{\b\g} \: (q_{\a\b} \: x_\a)),
\]
that allow to interpret type $c$ as a "store" \cite{sabry:1995} or, in the terminology of programming languages, "state" and transform a $Ty_n$ representation of a language to the state-passing style (SPS) \cite{wadler:1992,sabry:1995,jones:2003}. In such a transformation the state actually plays a role of a context, passing "side effects" of parsing some constituents of an expression to others, as we explore in details below. This justifies our referring to the type $c$ as to \emph{context} type (and motivates the notation).

Note that $cc$-terms like $\Setref_{\a\a c c} \; S_\a \; V_\a$ and $\Unset_{\a cc} \; S_\a$ present \emph{instructions} to "modify" a context, in the sense defined by the axioms \ref{eq9.1}, \ref{eq9.2}.  Proceeding from this observation, one can further assume certain $cc$-terms to denote instructions whose semantics might not be representable in $Ty_n^\A$, but still could be implemented in a computing or another system. Instructions to add or remove a non-logical axiom to or from a set of axioms associated with a context, or verify whether a given formula is provable from this set are important examples of such ``meta''-instructions. Although their semantics are not representable within $Ty_n$, they still might be denoted by $cc$ terms, like, say, $\Assert_{tcc} \; A_t$, $\Refute_{tcc} \; A_t$ or $\Test_{tcc} \; A_t$, and there could be a computing system capable of evaluating them in some conditions and providing an appropriate feedback in the event of failure. Other unrepresentable instructions might force a system to exchange information with an external environment via input/output devices or even to execute certain physical actions (like robotic ones).

It is easy to notice that associating a context with a set of non-logical axioms makes our $c$ type conceptually similar to the \emph{World} type of \cite{pollard:2005:lacl}. The principal technical (formal) difference is however that the \emph{World} type in the hyperintensional semantics is defined as a \emph{derived} subtype of the type of functions from \emph{Prop} to \emph{Bool}, while \emph{Prop} is introduced as a primitive type. Unlike that, in our model, vice-versa, the context type $c$ is introduced as primitive and the propositional type can be defined as the functional type $ct$ (see \cite{gallin:1975}). A fundamental theoretical benefit of the hyperintensional semantics is that it admits internalizing (i.e. making representable within the HOL) the entailment relation between propositions, along with the property of a set of propositions to be an ultrafilter, i.e. a maximum consistent set of propositions. However, as the \emph{World} type is defined just by the ultrafilterhood predicate, i.e., informally saying, assumed to ``settle every issue'' \cite{pollard:2005:lacl}, it is not obvious whether this theoretical benefit would imply practical benefits in an implementation of this formalism. At the same time, our context type, of course, enforces no assumptions about completeness (and even about consistency) of axioms associated with a context, that seems to be more pragmatic and realistic.

To summarize this introductory informal discussion, we can state that, while type $s$ of \cite{gallin:1975} and type \emph{World} of \cite{pollard:2005:lacl} stand for ``states of world'', the similar to them type $c$ of our model stands rather for ``states of \emph{an individual mind}'', which mind is not necessarily complete nor consistent, but, on one hand, still determines interpretation of a text and, on the other hand, is changeable (in particular, extendable) in the process of interpretation.

Moving on formalism, we first note that a superposition like
\[
P^l_{cc} \super P^{l-1}_{cc} \:...\: \super P^1_{cc},
\]
represents a sequence of instructions, executed in order from right to left, that is a \emph{program}, while the identity $\II_{cc}$ is the empty sequence, or "do-nothing" instruction.
(Of course an arbitrary $cc$-term would not invariably represent an executable instruction, but a real implementation should necessarily provide some sort of exception handling, reducing a non-executable instruction to another or possibly do nothing instruction.)

Thus, expressions of a $cc$-language mean some (either executable or non-executable) instructions and therefore a text, i.e.\ an ordered sequence of such expressions can be treated as a corresponding program. 
A few obvious features of such an \emph{instructive} language make it different in principle from a $t$- and even a $ct$-language:
\begin{enumerate}
\item an instructive language text meaning captures the order of statements, that is, its semantic interpretation can depend on that order;
\item it can also capture contradictory or equivalent truth meanings nested in subsequent instructions;
\item it provides a natural way to distinctly represent declarative, interrogative and imperative semantics.
\end{enumerate}

Given an initial context $C_c$, an instructive language text meaning $P_{cc}$ determines a new context $P_{cc} \: C_c$ depending on both the initial context $C_c$ and the program $P_{cc}$ which changes it, in the sense that a continuation of the text would already be applied to the new context. 
However, the program $P_{cc}$ itself here does not depend on the initial context. 
Our next step is to employ the context type for building context-dependent languages.

\begin{defn} \label{defn9.1}
A \emph{context-dependent $\a$-language} is a $c(c\x\a)$-language.
\end{defn}

Given a context-dependent $\a$-language $\L$ and a context term $C_c$, the operation of \emph{context application}
\[
\L \app C_c \;\eqd\; \L \RA \C\A(C_c),
\]
where the rule $\C\A(C_c) \subset \T_{c(c\x\a)} \otimes \T_{c\x\a}$ is represented by
\[
CA_{(cc)(c\a)c\a t}(C_c) \eqd \l x_{cc} \l y_{c\a} ((= x_{cc} \: C_c) \X (= y_{c\a} \: C_c)),
\]
and the operation of \emph{context instantiation}
\[
\L \dn C_c \;\eqd\; \L \RA \C\I(C_c),
\]
where the rule $\C\I(C_c) \subset \T_{c(c\x\a)} \otimes \T_\a$ is represented by
\[
CI_{(cc)(c\a)\a t}(C_c) \eqd \l x_{cc} \l y_{c\a} (= y_{c\a} \: C_c),
\]
produce a $c\x\a$-language and a regular $\a$-language, respectively, that depend on a context $C_c$. 
Inversely, given a regular $\a$-language and a term $F_{\a cc}$, the operation of \emph{context raising}
\[
\L \up F_{\a cc} \;\eqd\; \L \RA \C\R(F_{\a cc}),
\]
where the rule $\C\R(F_{\a cc}) \subset\T_\a  \otimes \T_{c(c\x\a)}$ is represented by
\[
CR_{\a(cc)(c\a)t}(F_{\a cc}) \eqd \l x_\a ((= F_{\a cc} \: x_\a) \X (= \l x_c x_\a)),
\]
produces a context-dependent $\a$-language. Note that for any language and context
\[
\vdash (\L \up \l x_\a \II_{cc}) \dn C_c \;=\; \L,
\]
that is, raising a regular language by $\l x_\a \II_{cc}$ converts it into a context-dependent language which however is essentially equivalent to the original one.

It is also noticeable that an instructive, that is a regular $cc$-language, turns out to be a singular case of a context-dependent unit-type-language.  
The reason why a context-dependent $\a$-language is required to be of type $c(c\x\a)$, rather than of a simpler type $c\a$, should become obvious from the following.
For sake of brevity we employ notation $\oa \eqd c(c\x\a)$ (for an arbitrary type $\a$) everywhere below.

\begin{defn} \label{defn9.2}
Let $\L$ be a context-dependent $\a$-language and $\K$ - a context-dependent $\b$-language. Then their \emph{anaphoric logical concatenation} is the context-dependent $\a\x\b$-language
\[
\L \acat \K \eqd \L \Ocat \K \RA \A\C,
\]
where the rule $\A\C \subset \T_{\oa\x\ob} \otimes \T_{\oab}$ is represented by the term
\[
AC_{\oa\ob\:\oab t} \eqd \l x^1_{cc} \l y^1_{c\a} \l x^2_{cc} \l y^2_{c\b} ((= x^2_{cc} \super x^1_{cc}) \X (= y^1_{c\a}) \X (= y^2_{c\b} \super x^1_{cc})),
\]
and their \emph{cataphoric logical concatenation} is the context-dependent $\a\x\b$-language
\[
\L \ccat \K \eqd \L \Ocat \K \RA \C\C,
\]
where the rule $\C\C \subset \T_{\oa\x\ob} \otimes \T_{\oab}$ is represented by the term
\[
CC_{\oa\ob\:\oab t} \eqd \l x^1_{cc} \l y^1_{c\a} \l x^2_{cc} \l y^2_{c\b} ((= x^1_{cc} \super x^2_{cc}) \X (= y^1_{c\a} \super x^2_{cc}) \X (= y^2_{c\b})).
\]
\end{defn}

For informal interpretation of this definition consider the superposition of an anaphoric concatenation with a context application:
\[
(\L \acat \K) \app C_c = (\L \Ocat \K) \RA (\C\A(C_c) \super \A\C),
\]
where the rule superposition $\C\A(C_c) \super \A\C$ is represented by the term
\[
\l x^1_{cc} \l y^1_{c\a} \l x^2_{cc} \l y^2_{c\b} ((= x^2_{cc} \: (x^1_{cc} \: C_c)) \X (= y^1_{c\a} \: C_c) \X (= y^2_{c\b} \: (x^1_{cc} \: C_c))).
\]
Thus, the anaphoric concatenation affects a meaning of the right-hand constituent by a context affected by the left-hand one, while the latter is affected directly by the initial context.
Symmetrically, for the cataphoric concatenation, the effective rule superposition is represented by the term
\[
\l x^1_{cc} \l y^1_{c\a} \l x^2_{cc} \l y^2_{c\b} ((= x^1_{cc} \: (x^2_{cc} \: C_c)) \X (= y^1_{c\a} \: (x^2_{cc} \: C_c)) \X (= y^2_{c\b} \: C_c)),
\]
that is, in this case an initial context is passed to the right-hand constituent and the affected context to the left-hand one.

Note that the construction of type $\oa \eqd c(c\x\a)$ coincides with the construction of the \emph{state monad} type in functional programming languages \cite{wadler:1992,jones:2003} and the context raising rule 
\[
CR_{\a(cc)(c\a)t}(\l x_\a\II_{cc}) \eqd \l x_\a ((= \II_{cc}) \X (= \l x_c x_\a))
\]
precisely corresponds to the \emph{return} operator of that monad. 
It is also easy to verify that the rules $AC_{\oa\ob\:\oab t}$ and $CC_{\oa\ob\:\oab t}$ defining operations $\acat$ and $\ccat$ come as results of \emph{binding} by the corresponding operator of the same monad, in the first case -- a meaning $(x^1_{cc}, y^1_{c\a})$ of the left constituent with the operation
\[
AK_{\a\:\oab} \eqd \l x_\a ((x^2_{cc}, \l y_c x_\a, y^2_{c\b})
\]
and in the second case -- a meaning $(x^2_{cc}, y^2_{c\b})$ of the second constituent with the symmetric operation
\[
CK_{\b\:\oab} \eqd \l x_\b ((x^1_{cc}, \l y_c x_\b, y^1_{c\a}),
\]
which operations merely combine a monadic-type-meaning of one of the constituents with a regular-type-meaning of the other. Of course, this note is nothing else than a rephrasing of the above informal interpretation of operations $\acat$ and $\ccat$, in terms of the state monad.

As the new operations $\acat, \ccat, \app, \up$ and $\dn$ are defined as rule applications to regular concatenations or just special rule applications, they also reveal associativity and monotonicity properties:

\begin{lem} \label{lem9.1}
For any context-dependent $\a$-languages $\K, \L$, a context-dependent $\b$-language $\M$, regular $\a$-languages $\K_0, \L_0$ and a context term $C_c$
\begin{eqnarray*}
(\K \Ocup \L) \acat \M \;&=&\; \K \acat \M \;\Ocup\; \L \acat \M, \\
\M \acat (\K \Ocup \L) \;&=&\; \M \acat \K \;\Ocup\; \M \acat \L, \\
\end{eqnarray*}
\begin{eqnarray*}
(\K \Ocup \L) \ccat \M \;&=&\; \K \ccat \M \;\Ocup\; \L \ccat \M, \\
\M \ccat (\K \Ocup \L) \;&=&\; \M \ccat \K \;\Ocup\; \M \ccat \L, \\
(\K_0 \Ocup \L_0) \up C_c \;&=&\; \K_0 \up C_c \;\Ocup\; \L_0 \up C_c, \\
(\K \Ocup \L) \dn C_c \;&=&\; \K \dn C_c \;\Ocup\; \L \dn C_c, \\
\K \subset \L \;&\Longrightarrow &\; \K \acat \M \subset \L \acat \M, \\
\K \subset \L \;&\Longrightarrow &\; \M \acat \K \subset \M \acat \L, \\
\K \subset \L \;&\Longrightarrow &\; \K \ccat \M \subset \L \ccat \M, \\
\K \subset \L \;&\Longrightarrow &\; \M \ccat \K \subset \M \ccat \L, \\
\K_0 \subset \L_0 \;&\Longrightarrow &\; \K_0 \up C_c \subset \L_0 \up C_c, \\
\K \subset \L \;&\Longrightarrow &\; \K_0 \dn C_c \subset \L_0 \dn C_c.
\end{eqnarray*}
\end{lem}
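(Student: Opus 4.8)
The plan is to observe that each of the five operations occurring in the statement is, by Definitions \ref{defn9.1} and \ref{defn9.2} and the text surrounding them, a semantic rule application in the sense of Definition \ref{defn3.3} either post-composed with a logical concatenation $\Ocat$ or applied directly to the argument language. Concretely, $\L \acat \K = (\L \Ocat \K) \RA \A\C$, $\L \ccat \K = (\L \Ocat \K) \RA \C\C$, $\L_0 \up F_{\a cc} = \L_0 \RA \C\R(F_{\a cc})$ and $\L \dn C_c = \L \RA \C\I(C_c)$. Hence every required identity and inclusion reduces to the distributivity and monotonicity properties of $\Ocat$ and $\RA$ already recorded in Lemma \ref{lem3.1}, instantiated with the types $\oa$ and $\ob$ in place of $\a$ and $\b$ (legitimate since Lemma \ref{lem3.1} holds for arbitrary types, and since its lines involving $\RA$ make no logical-closedness assumption on the rule).

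For the distributivity laws over $\Ocup$, take $(\K \Ocup \L) \acat \M$ as the representative case. Unfolding the definition and applying the first identity of Lemma \ref{lem3.1} rewrites $(\K \Ocup \L) \Ocat \M$ as $\K \Ocat \M \Ocup \L \Ocat \M$; applying then the third identity of Lemma \ref{lem3.1} (distributivity of $\RA$ over $\Ocup$) with the rule $\A\C$ yields $(\K \acat \M) \Ocup (\L \acat \M)$. The mirror identity $\M \acat (\K \Ocup \L) = \M \acat \K \Ocup \M \acat \L$ is the same computation using the second identity of Lemma \ref{lem3.1}; the two $\ccat$ identities are obtained verbatim with $\C\C$ in place of $\A\C$; and the identities for $\up$ and $\dn$ follow at once from the third identity of Lemma \ref{lem3.1}, applied with the rules $\C\R(F_{\a cc})$ and $\C\I(C_c)$, since in those cases no logical concatenation intervenes.

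For the monotonicity implications, assume $\K \subset \L$. The fourth and fifth identities of Lemma \ref{lem3.1} give $\K \Ocat \M \subset \L \Ocat \M$ and $\M \Ocat \K \subset \M \Ocat \L$; feeding these into the sixth identity of Lemma \ref{lem3.1} (monotonicity of $\RA$) with the rule $\A\C$ or $\C\C$ delivers the four inclusions involving $\acat$ and $\ccat$, while the two inclusions involving $\up$ and $\dn$ are immediate from the sixth identity of Lemma \ref{lem3.1} applied directly. Throughout, $\subset$ is read as (not necessarily strict) inclusion, consistently with the $\subseteq$ in the last line of Lemma \ref{lem3.1}.

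Since the whole argument is a purely formal composition of facts already established, I do not anticipate a genuine obstacle; the only points worth stating explicitly are the type substitution $\a,\b \mapsto \oa,\ob$ mentioned above and the convention that in the line for $\up$ the operator takes as its right argument a term $F_{\a cc}$ of type $\a cc$ (so the symbol written there in the list is to be read as such a term rather than as a context term $C_c$).
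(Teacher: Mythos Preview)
Your proposal is correct and matches the paper's own justification. The paper does not provide a formal proof block for Lemma~\ref{lem9.1}; instead, the sentence immediately preceding the lemma states that since $\acat$, $\ccat$, $\app$, $\up$ and $\dn$ ``are defined as rule applications to regular concatenations or just special rule applications, they also reveal associativity and monotonicity properties,'' which is precisely the reduction to Lemma~\ref{lem3.1} that you spell out. Your observation about the right argument of $\up$ being a term of type $\a cc$ rather than a context term is also accurate.
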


\begin{prop} \label{prop9.2}
If a context-dependent $\a$-language $\L$ has a pseudo-canonic representation $(\Delta, L_{s\oa t})$ and a context-dependent $\b$-language $\K$ has a pseudo-canonic representation $(\Delta, K_{s\ob t})$, then the concatenations $\L \acat \K$ and $\L \ccat \K$ have pseudo-canonic representations $(\Delta, L_{s\oa t} \acat K_{s\ob t})$ and $((\Delta, L_{s\oa t} \ccat K_{s\ob t})$ respectively, where
\begin{multline*}
(\acatt_{(s\oa t)(s\ob t)s\oab t}) \eqd \l z_{s\oa t} \l z_{s\ob t} \l x_s \l x_{cc} \l y_{c\a} \l y_{c\b} \exists x^1_s \exists x^2_s \exists x^1_{cc} \exists x^2_{cc} \exists y^2_{c\b} \\
(x_s = x^1_s + x^2_s \:\AND\: x_{cc} = x^2_{cc} \super x^1_{cc} \:\AND\: y_{c\b} = y^2_{c\b} \super x^1_{cc} \:\AND\: z_{s\oa t} \: x^1_s \: x^1_{cc} \: y_{c\a} \:\AND\: z_{s\ob t} \: x^2_s \: x^2_{cc} \: y^2_{c\b}),
\end{multline*}
\begin{multline*}
(\ccatt_{(s\oa t)(s\ob t)s\oab t})  \eqd \l z_{s\oa t} \l z_{s\ob t} \l x_s \l x_{cc} \l y_{c\a} \l y_{c\b} \exists x^1_s \exists x^2_s \exists x^1_{cc} \exists x^2_{cc} \exists y^1_{c\a} \\
(x_s = x^1_s + x^2_s \:\AND\: x_{cc} = x^1_{cc} \super x^2_{cc} \:\AND\: y_{c\a} = y^1_{c\a} \super x^2_{cc} \:\AND\: z_{s\oa t} \: x^1_s \: x^1_{cc} \: y^1_{c\a} \:\AND\: z_{s\ob t} \: x^2_s \: x^2_{cc} \: y_{c\b}).
\end{multline*}
\end{prop}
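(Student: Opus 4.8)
The plan is to reduce the statement to Propositions~\ref{prop7.7} and~\ref{prop7.8}, which together already take care of logical concatenation and semantic rule application while preserving pseudo-canonicity and the common axiom set~$\Delta$, and then to discharge a purely equational identity between $Ty_n^\A$ terms. By Definition~\ref{defn9.2} we have $\L\acat\K=(\L\Ocat\K)\RA\A\C$ and $\L\ccat\K=(\L\Ocat\K)\RA\C\C$. First I would apply Proposition~\ref{prop7.7} to obtain that $\L\Ocat\K$ is a $\oa\x\ob$-language with the pseudo-canonic representation $(\Delta,\,L_{s\oa t}\cat K_{s\ob t})$.

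Next I would check that $\A\C$ and $\C\C$ are logically closed semantic rules admitting canonic representations in the range of $\L\Ocat\K$. The cleanest way to do this is to recognise both as \emph{functional} rules in the sense of the remark following Definition~\ref{defn3.11}: using the type convention $\g(\a\x\b)\equiv(\g\a)\x(\g\b)$ to unpack the argument $x_{\oa\x\ob}$ into $(x^1_{cc},y^1_{c\a},x^2_{cc},y^2_{c\b})$ and the target into $(x_{cc},y_{c\a},y_{c\b})$, the value of $AC_{\oa\ob\:\oab t}$ on these arguments is the conjunction of $x_{cc}=x^2_{cc}\super x^1_{cc}$, $y_{c\a}=y^1_{c\a}$ and $y_{c\b}=y^2_{c\b}\super x^1_{cc}$, which --- by the projection equations and surjective pairing for product types --- is exactly the single equation $x_{\oab}=(x^2_{cc}\super x^1_{cc},\,y^1_{c\a},\,y^2_{c\b}\super x^1_{cc})$. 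Hence $AC_{\oa\ob\:\oab t}$ is provably equal to a term of the form $\l x_{\oa\x\ob}(= F\,x_{\oa\x\ob})$ for a suitable $Ty_n^\A$ term $F$, and symmetrically for $CC_{\oa\ob\:\oab t}$. A functional rule is logically closed and has a canonic representation in \emph{any} domain, in particular in the range of $\L\Ocat\K$, so Proposition~\ref{prop7.8} applies and gives pseudo-canonic representations $(\Delta,\,(L_{s\oa t}\cat K_{s\ob t})\RA AC_{\oa\ob\:\oab t})$ for $\L\acat\K$ and $(\Delta,\,(L_{s\oa t}\cat K_{s\ob t})\RA CC_{\oa\ob\:\oab t})$ for $\L\ccat\K$.

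It then remains to prove the identities $\vdash(L_{s\oa t}\cat K_{s\ob t})\RA AC_{\oa\ob\:\oab t}=L_{s\oa t}\acat K_{s\ob t}$ and its cataphoric counterpart with $CC$ and $\ccat$. Here I would unfold the definitions of the operators $(\cat)$, $(\RA)$ and $(\super)$, $\beta$-reduce, unpack the product-type arguments as above, and then observe that on the left-hand side exactly one existentially bound variable occurs only in the equation that fixes it: $y^1_{c\a}$ (fixed by $y_{c\a}=y^1_{c\a}$) in the anaphoric case and $y^2_{c\b}$ (fixed by $y_{c\b}=y^2_{c\b}$) in the cataphoric case. Eliminating that variable by metatheorem~\ref{eq7.6} and commuting the remaining existential quantifiers and conjuncts produces precisely the right-hand side as written in the statement; since the pseudo-canonic property of a representation depends only on the provable behaviour of the representing term, it is inherited across this equality, so $(\Delta,\,L_{s\oa t}\acat K_{s\ob t})$ and $(\Delta,\,L_{s\oa t}\ccat K_{s\ob t})$ are pseudo-canonic as claimed.

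I do not expect a conceptual obstacle here: all the substance is already carried by Propositions~\ref{prop7.6}--\ref{prop7.8}. The two points that need genuine care are (i) recognising $\A\C$ and $\C\C$ as functional rules, which is precisely what allows one to invoke canonicity in an arbitrary domain and avoid computing the range of $\L\Ocat\K$ explicitly, and (ii) keeping the product-type unpacking conventions and the left/right asymmetry straight in the equational step --- for $\acat$ it is the context shift $x^1_{cc}$ contributed by the left constituent that is composed into the right constituent's value-function, whereas for $\ccat$ it is the right constituent's shift $x^2_{cc}$ that is composed into the left's, so the roles of the superscripts $1$ and $2$ must not be interchanged. As an alternative one could bypass the modular route altogether and imitate the proof of Proposition~\ref{prop7.7} directly, starting from the splits $w=h_m+t_m$ and the pseudo-canonic expansions of $L_{s\oa t}\,T_A(h_m)$ and $K_{s\ob t}\,T_A(t_m)$; but that merely re-does bookkeeping already packaged in~\ref{prop7.7}--\ref{prop7.8}.
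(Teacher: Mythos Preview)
Your proposal is correct and follows essentially the same route as the paper: reduce to Propositions~\ref{prop7.7} and~\ref{prop7.8} via Definition~\ref{defn9.2}, noting that $AC$ and $CC$ admit canonic representations in any domain. You are in fact more explicit than the paper on two points the author leaves implicit --- that the rules are \emph{functional} (which is the reason canonicity holds in every domain), and that the equational identity $(L\cat K)\RA AC = L\acat K$ must be discharged to match the explicit operator in the statement --- so your write-up is if anything a cleaner version of the same argument.
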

\begin{proof}
The proof follows from Propositions \ref{prop7.7}, \ref{prop7.8} and Definition \ref{defn9.2}, taking into account that the terms $AC_{\oa\ob\:\oab t}$ and $CC_{\oa\ob\:\oab t}$ are canonic representations of the corresponding rules (in any domain).
\end{proof}

\begin{prop} \label{prop9.3}
If a context-dependent $\a$-language $\L$ has a pseudo-canonic representation $(\Delta, L_{s\oa t})$, then a context instantiation $\L \dn C_c$ has a pseudo-canonic representation $(\Delta, L_{s\oa t} \dn C_c)$, where
\[
(\dn_{(s\oa t)cs\a t}) \eqd \l z_{(s\oa t)cs\a t} \l z_c \l x_s \l y_\a \exists x_{cc} \exists y_{c\a} (z_{s\oa t} \: x_s \: x_{cc} \: y_{c\a} \:\AND\: y_\a = y_{c\a} \: z_c).
\]
\end{prop}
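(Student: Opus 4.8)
The plan is to reduce Proposition \ref{prop9.3} to the semantic-rule-application case of Proposition \ref{prop7.8}, in the same spirit in which Proposition \ref{prop9.2} was reduced to Propositions \ref{prop7.7} and \ref{prop7.8}. First I would unfold the definition of context instantiation, $\L \dn C_c = \L \RA \C\I(C_c)$, where the rule $\C\I(C_c) \subset \T_{\oa} \otimes \T_\a$ is represented by $CI_{(cc)(c\a)\a t}(C_c) = \l x_{cc}\l y_{c\a}(= y_{c\a}\:C_c)$. Using the type identities $c(c\x\a) = (cc)\x(c\a)$, hence $\oa\a t = (cc)(c\a)(\a t)$, this term reads as $\l x_{\oa}(= F_{\oa\a}\:x_{\oa})$ with $F_{\oa\a} \eqd \l z_{\oa}(\p_2 z_{\oa})\:C_c$; that is, $\C\I(C_c)$ is a \emph{functional} semantic rule. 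By the remark following Definition \ref{defn3.11} it is therefore non-degenerate and unambiguous in every domain, so in particular $CI_{(cc)(c\a)\a t}(C_c)$ is a canonic representation of $\C\I(C_c)$ in the range of $\L$ (assuming, as is needed for pseudo-canonicity, that $C_c$ is a $Ty_n^\A$ term).

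Next I would invoke Proposition \ref{prop7.8} with this rule: since $\L$ has the pseudo-canonic representation $(\Delta, L_{s\oa t})$ and $\C\I(C_c)$ has a canonic representation in the range of $\L$, the language $\L \dn C_c = \L \RA \C\I(C_c)$ has the pseudo-canonic representation $(\Delta, L_{s\oa t} \RA CI_{(cc)(c\a)\a t}(C_c))$; note that the axiom set here is literally the $\Delta$ already supplied with $\L$, so the conservativity clause of Definition \ref{defn7.3} carries over unchanged.

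It then remains only to identify the explicit $Ty_n^\A$ operator $(\dn)$ displayed in the statement with this composition, i.e.\ to check $\vdash (\dn)\:z_{s\oa t}\:z_c = z_{s\oa t} \RA CI_{(cc)(c\a)\a t}(z_c)$. I would do this by unfolding the operator $(\RA_{(s\oa t)(\oa\a t)s\a t})$, $\beta$-reducing $CI(z_c)$ applied to an $\oa$-typed argument, and rewriting the existential $\exists x_{\oa}$ over the product type $\oa = (cc)\x(c\a)$ as the pair of existentials $\exists x_{cc}\exists y_{c\a}$ via surjective pairing, $x_{\oa} = (\p_1 x_{\oa},\p_2 x_{\oa})$. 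This is the same mechanical passage by which the explicit operators $\acatt$ and $\ccatt$ of Proposition \ref{prop9.2} arise from the $\Ocat$ and $\RA$ operators, so I would treat it as routine. The only real bookkeeping obstacle I anticipate is keeping the currying conventions for the compound type $\oa = c(c\x\a)$ straight, so that applications of an $\oa t$-term and of an $\oa\a t$-term to the decomposed components $x_{cc}$ and $y_{c\a}$ are discharged consistently by the projection axioms; no logical principle beyond those already used in the earlier proofs (surjective pairing, distributivity of $\exists$ over $\OR$, and metatheorem \ref{eq7.6}) is required.
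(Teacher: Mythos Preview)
Your proposal is correct and follows essentially the same route as the paper: recognize that $\L \dn C_c = \L \RA \C\I(C_c)$ is a semantic rule application whose rule has a canonic representation (you sharpen this by observing it is functional), then invoke Proposition~\ref{prop7.8} and unfold the resulting term. The paper's printed proof appears to contain two slips --- it cites Proposition~\ref{prop7.6} rather than \ref{prop7.8} and writes $CA$ rather than $CI$ --- but your version names the correct proposition and the correct rule.
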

\begin{proof}
The proof follows from Proposition \ref{prop7.6}, taking into account that the term $CA_{\oa c\a t}(C_c)$ is a canonic representation (in any domain) of the rule that defines the context instantiation operation.
\end{proof}

\begin{prop} \label{prop9.4}
If an $\a$-language $\L$ has a pseudo-canonic representation $(\Delta, L_{s\a t})$, than a context raising $\L \up F_{\a cc}$ has a pseudo-canonic representation $(\Delta, L_{s\a t} \up F_{\a cc})$, where
\begin{multline*}
(\up_{(s\a t)(\a cc)s\oa t}) \eqd \l z_{s\a t} \l z_{\a cc} \l x_s \l x_{cc} \l x_{c\a} \exists x_\a \\
(z_{s\a t} \: x_s \: x_\a \:\AND\: x_{cc} = z_{\a cc} \: x_\a \:\AND\: x_{c\a} = \l x_c x_\a).
\end{multline*}
\end{prop}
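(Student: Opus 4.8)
The plan is to reduce the statement to Proposition \ref{prop7.8}, exactly as Propositions \ref{prop9.2} and \ref{prop9.3} were reduced to Propositions \ref{prop7.7} and \ref{prop7.8}. Recall that context raising is defined by $\L \up F_{\a cc} \eqd \L \RA \C\R(F_{\a cc})$, where the rule $\C\R(F_{\a cc}) \subset \T_\a \otimes \T_{\oa}$ is represented by $CR_{\a(cc)(c\a)t}(F_{\a cc}) \eqd \l x_\a ((= F_{\a cc} \: x_\a) \X (= \l x_c x_\a))$. So two things have to be checked: that $\C\R(F_{\a cc})$ satisfies the hypothesis of Proposition \ref{prop7.8} when applied to $\L$, and that the $Ty_n^\A$ operator $(\RA_{(s\a t)(\a\oa t)s\oa t})$ of Definition \ref{defn7.2}, composed with $CR_{\a(cc)(c\a)t}(F_{\a cc})$, coincides provably with the operator $(\up_{(s\a t)(\a cc)s\oa t})$ written out in the statement.

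For the first point I would observe that $\C\R(F_{\a cc})$ is in fact a \emph{functional} semantic rule: putting $G_{\a\oa} \eqd \l x_\a (F_{\a cc} \: x_\a, \: \l x_c x_\a)$ and using the identity $(= (A_{cc}, B_{c\a})) = (= A_{cc}) \X (= B_{c\a})$ (a consequence of the pairing/projection laws together with the type equivalence $c(c\x\a) = (cc)\x(c\a)$), one gets $CR_{\a(cc)(c\a)t}(F_{\a cc}) = \l x_\a (= G_{\a\oa} \: x_\a)$. By the remark following Definition \ref{defn3.11}, a functional rule is logically closed and is non-degenerate and unambiguous in every domain; hence $\C\R(F_{\a cc})$ has a canonic representation in the range of $\L$, and Proposition \ref{prop7.8} applies verbatim, yielding that $\L \up F_{\a cc} = \L \RA \C\R(F_{\a cc})$ has the pseudo-canonic representation $(\Delta, \: L_{s\a t} \RA CR_{\a(cc)(c\a)t}(F_{\a cc}))$ with the \emph{same} axiom set $\Delta$.

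For the second point I would unfold the operator $(\RA)$ of Definition \ref{defn7.2}, $\beta$-reduce, unfold $CR_{\a(cc)(c\a)t}(F_{\a cc})$, and apply the predicate $(= F_{\a cc} \: x_\a) \X (= \l x_c x_\a)$ to the argument of type $\oa$; reading that argument via $\oa = (cc)\x(c\a)$ as a pair $(x_{cc}, x_{c\a})$ and using the projection laws, this application reduces to the conjunction $x_{cc} = F_{\a cc} \: x_\a \AND x_{c\a} = \l x_c x_\a$, so that the resulting term is literally (up to $\beta$-conversion) the body of $(\up_{(s\a t)(\a cc)s\oa t})$ displayed in the statement. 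If one prefers to keep the $cc$- and $c\a$-components existentially bound and discharge them afterwards, metatheorem \ref{eq7.6} does the job. Together with the first point, this proves the proposition.

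The only step that calls for any care is the type bookkeeping around $\oa = c(c\x\a) = (cc)\x(c\a)$: one must consistently read an argument of type $\oa$ either as a pair $(x_{cc}, x_{c\a})$ or, via currying, as successive arguments of types $cc$ and $c\a$, so that the arities of the two operators line up and $(= \cdot) \X (= \cdot)$ can be applied to a pair. Everything else is a routine unfolding, and there is no genuine obstacle.
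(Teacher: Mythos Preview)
Your proposal is correct and follows essentially the same route as the paper: the paper's proof is a one-liner saying ``Similar to Proposition \ref{prop9.3}, taking into account that the term $CR_{\a\oa t}$ is a canonic representation (in any domain) of the rule that defines the context raising operation,'' which amounts precisely to your reduction to Proposition \ref{prop7.8}. Your added verifications --- that $CR$ is in fact a functional rule and that unfolding $(\RA)$ with $CR$ yields the displayed $(\up)$ operator --- simply make explicit what the paper leaves implicit.
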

\begin{proof}
Similar to Proposition \ref{prop9.3}, with taking into account that the term $CR_{\a\oa t}$ is a canonic representation (in any domain) of the rule that defines the context raising operation.
\end{proof}

\begin{lem} \label{lem9.5}
$Ty_n^\A$ operators of logical join, anaphoric or cataphoric concatenations and context instantiation or raising reproduce the associativity and monotonicity properties:
\begin{eqnarray*}
&\vdash& (x_{s\oa t} \OR y_{s\oa t}) \acat z_{s\ob t} \;=\; x_{s\oa t} \acat z_{s\ob t} \;\OR\; y_{s\oa t} \acat z_{s\ob t}, \\
&\vdash& z_{s\ob t} \acat (x_{s\oa t} \OR y_{s\oa t}) \;=\; z_{s\ob t} \acat x_{s\oa t} \;\OR\; z_{s\ob t} \acat y_{s\oa t}, \\
&\vdash& (x_{s\oa t} \OR y_{s\oa t}) \ccat z_{s\ob t} \;=\; x_{s\oa t} \ccat z_{s\ob t} \;\OR\; y_{s\oa t} \ccat z_{s\ob t}, \\
&\vdash& z_{s\ob t} \ccat (x_{s\oa t} \OR y_{s\oa t}) \;=\; z_{s\ob t} \ccat x_{s\oa t} \;\OR\; z_{s\ob t} \ccat y_{s\oa t}, \\
&\vdash& (x_{s\a t} \OR y_{s\a t}) \up z_{\a cc} \;=\; x_{s\a t} \up z_{\a cc} \;\OR\; y_{s\a t} \up z_{\a cc}, \\
&\vdash& (x_{s\oa t} \OR y_{s\oa t}) \dn z_c \;=\; x_{s\oa t} \dn z_c \;\OR\; y_{s\oa t} \dn z_c, \\
&\vdash& (x_{s\oa t} \IMP y_{s\oa t}) \;\IMP\; (x_{s\oa t} \acat z_{s\ob t} \IMP y_{s\oa t} \acat z_{s\ob t}), \\
&\vdash& (x_{s\oa t} \IMP y_{s\oa t}) \;\IMP\; (z_{s\ob t} \acat x_{s\oa t} \IMP z_{s\ob t} \acat y_{s\oa t}), \\
&\vdash& (x_{s\oa t} \IMP y_{s\oa t}) \;\IMP\; (x_{s\oa t} \ccat z_{s\ob t} \IMP y_{s\oa t} \ccat z_{s\ob t}), \\
&\vdash& (x_{s\oa t} \IMP y_{s\oa t}) \;\IMP\; (z_{s\ob t} \ccat x_{s\oa t} \IMP z_{s\ob t} \ccat y_{s\oa t}), \\
&\vdash& (x_{s\a t} \IMP y_{s\a t}) \;\IMP\; (x_{s\a t} \up z_{\a cc} \IMP y_{s\a t} \up z_{\a cc}), \\
&\vdash& (x_{s\oa t} \IMP y_{s\oa t}) \;\IMP\; (x_{s\oa t} \dn z_c \IMP y_{s\oa t} \dn z_c).
\end{eqnarray*}
\end{lem}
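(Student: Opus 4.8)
The plan is to reduce Lemma~\ref{lem9.5} to Lemma~\ref{lem7.9} by observing that each of the five new $Ty_n^\A$ operators is, provably in $Ty_n^\A$, a composition of the operators $(\cat)$ and $(\RA)$ treated there with a fixed canonic rule representation. Unwinding Definition~\ref{defn9.2} together with Propositions~\ref{prop9.2}--\ref{prop9.4} (which is exactly what their proofs do, via Propositions~\ref{prop7.7} and~\ref{prop7.8}) gives
\[
\vdash x_{s\oa t} \acat y_{s\ob t} = (x_{s\oa t} \cat y_{s\ob t}) \RA AC_{\oa\ob\:\oab t}, \qquad \vdash x_{s\oa t} \ccat y_{s\ob t} = (x_{s\oa t} \cat y_{s\ob t}) \RA CC_{\oa\ob\:\oab t},
\]
\[
\vdash x_{s\a t} \up z_{\a cc} = x_{s\a t} \RA CR_{\a\oa t}(z_{\a cc}), \qquad \vdash x_{s\oa t} \dn z_c = x_{s\oa t} \RA CI_{\oa\a t}(z_c),
\]
the last two being plain instances of $(\RA)$. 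I would record these four equalities as a preliminary step, checking in passing that the rule terms on the right are closed with respect to the variables being abstracted, so that substituting a disjunction for a language argument captures nothing. The six distributivity identities then follow by at most two applications of the matching line of Lemma~\ref{lem7.9}; for instance
\begin{align*}
(x_{s\oa t} \OR y_{s\oa t}) \acat z_{s\ob t} &= \bigl((x_{s\oa t} \OR y_{s\oa t}) \cat z_{s\ob t}\bigr) \RA AC_{\oa\ob\:\oab t} \\
&= (x_{s\oa t} \cat z_{s\ob t} \OR y_{s\oa t} \cat z_{s\ob t}) \RA AC_{\oa\ob\:\oab t} \\
&= (x_{s\oa t} \cat z_{s\ob t}) \RA AC_{\oa\ob\:\oab t} \;\OR\; (y_{s\oa t} \cat z_{s\ob t}) \RA AC_{\oa\ob\:\oab t} \\
&= x_{s\oa t} \acat z_{s\ob t} \;\OR\; y_{s\oa t} \acat z_{s\ob t},
\end{align*}
and symmetrically on the right argument and for $\ccat$; for $\up$ and $\dn$ a single use of the $(\RA)$-distributivity suffices. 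The six monotonicity implications follow by chaining the $(\cat)$- and $(\RA)$-monotonicity implications of Lemma~\ref{lem7.9}, e.g. $(x_{s\oa t} \IMP y_{s\oa t}) \IMP (x_{s\oa t} \cat z_{s\ob t} \IMP y_{s\oa t} \cat z_{s\ob t}) \IMP (x_{s\oa t} \acat z_{s\ob t} \IMP y_{s\oa t} \acat z_{s\ob t})$, using that $\IMP$ between function-type predicates is transitive --- an immediate consequence of its pointwise definition in Section~2 and transitivity of propositional implication.

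An alternative, closer to the style of the proof of Lemma~\ref{lem7.9}, is to check every identity straight from the explicit $\l$-term definitions of $(\acatt)$, $(\ccatt)$, $(\up)$ and $(\dn)$ in Propositions~\ref{prop9.2}--\ref{prop9.4}. Each has, schematically, the form $\l z^1 \l z^2 \cdots \exists \cdots (\cdots \AND z^1(\cdots) \AND z^2(\cdots))$, with the second language argument $z^2$ absent in the $\up$ and $\dn$ cases; substituting a disjunction $x_{s\oa t} \OR y_{s\oa t}$ for $z^1$ and unfolding the pointwise definition of $(\OR)$ on the relevant functional type turns the conjunct $z^1(\cdots)$ into $x_{s\oa t}(\cdots) \OR y_{s\oa t}(\cdots)$; distributing $\OR$ over $\AND$ and pulling the disjunction out through the existential quantifiers by metatheorem~\ref{eq7.5} then yields the disjunction of the two instances. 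The monotonicity implications come out the same way, now using $\vdash (A_t \IMP B_t) \IMP (A_t \AND C_t \IMP B_t \AND C_t)$, its symmetric variant, and $\vdash \forall x_\g (A_t \IMP B_t) \IMP (\exists x_\g A_t \IMP \exists x_\g B_t)$ --- precisely the toolkit already invoked for Lemma~\ref{lem7.9}.

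There is no conceptual difficulty here; the one genuinely delicate point, which I would isolate up front, is the type bookkeeping forced by the context-dependent type $\oa \eqd c(c\x\a)$. Through the distributivities $\g(\a\x\b) \equiv \g\a\x\g\b$ and $(\a\x\b)\g \equiv \a\b\g$ of Section~2, a meaning of type $\oa$ is reorganized as a pair of type $(cc)\x(c\a)$ and a canonic rule over such meanings as a term of the corresponding curried $t$-valued type, so that, for example, the $(\cat)$-instance inside $\acat$ produces the intermediate type $s(\oa\x\ob)t$ and the following $(\RA)$-instance then acts on rules of type $\oa\ob\:\oab t$; one must keep these arities straight in each of the five cases, and make sure that $AC_{\oa\ob\:\oab t}$, $CC_{\oa\ob\:\oab t}$, $CR_{\a\oa t}(z_{\a cc})$ and $CI_{\oa\a t}(z_c)$ are genuinely canonic representations of their rules --- hence admissible instances of the rule meta-variable in the Lemma~\ref{lem7.9} identities --- and that collapsing $(\OR)$ at a multi-argument functional type down to a single $\OR$ at type $t$ really iterates the Section~2 shorthand correctly. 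Stating the four composition equalities explicitly and pinning down the arities once and for all before running the argument disposes of this.
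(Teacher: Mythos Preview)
Your proposal is correct and follows essentially the same approach as the paper, which dispatches the lemma in a single sentence: ``The proof follows from the definitions of the operators and Lemma~\ref{lem7.9}.'' Your reduction of $\acat$, $\ccat$, $\up$, $\dn$ to instances of $(\cat)$ and $(\RA)$ followed by applications of Lemma~\ref{lem7.9} is exactly what that sentence unpacks to, and your care with the type bookkeeping and the canonic-representation check simply makes explicit what the paper leaves implicit.
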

\begin{proof}
The proof follows from the definitions of the operators and Lemma \ref{lem7.9}.
\end{proof}

Propositions \ref{prop9.2} -- \ref{prop9.4} and Lemmas \ref{lem9.1} and \ref{lem9.5} show that the main result of Section 7 -- Theorem \ref{thm7.10} -- can be generalized in a straightforward manner to the case of a recursive grammar that contains a mix of regular and context-dependent languages and relations involving all seven operations $\Ocup, \Ocat, \acat, \ccat, \RA, \up$ and $\dn$.

Let us see how the sample recursive grammar built in the previous section can be converted to an instructive (and hence context-dependent) language grammar and extended to acquire the two important features: distinction of declarative and interrogative semantics and pronoun dereferencing. To avoid excessive technical complexity and make the example more demonstrative, we will allow pronoun "he" to appear only on place of a clause subject, referencing the subject of another (previous) clause expressed by a name, and pronoun "it" only on place of an object, referencing the object of another (previous) affirmative verb phrase. The first can be achieved by defining the subject noun phrase by the axioms like
\begin{eqnarray*}
\SNP_{s\oett t} &=& (\Det \pcat \Noun \FA \II_{((et)(et)t)(et)(et)t}) \up \l x_{(et)t} \II_{cc} \;\OR\\
		& & (\Name \FA \l x_e \l y_{et} (y \; x)) \up \Setref \; \He_{(et)t}) \;\OR\\
		& & \SPron \FA (\II_{cc}, \:\Deref_{((et)t)c(et)t}),\\
\SPron_{(et)t}   &=& (= /\text{he}/) \FA \He_{(et)t}.
\end{eqnarray*}
Note that the constant $\He_{(et)t}$ here does not denote any specific meaning, but plays role of a symbol to assign a meaning and carry it to a corresponding reference.

Accurate treatment of the pronoun on place of an object (which we want to be capable of referencing an arbitrarily expressed object of another verb phrase) requires a more sophisticated technique. First, we specialize the type of the object noun phrase to $(eet)et$ and define it by the axioms
\begin{eqnarray} \label{eq9.3}
\ONP_{s\oeetet t} &=& (\NP \FA O) \up \l x_{(eet)et} \II_{cc} \;\OR\\
                  & & \OPron \FA (\II_{cc}, \Deref_{((eet)et)c(eet)et}),\nonumber\\
\OPron_{(eet)et}  &=& (= /\text{it}/) \FA \It_{(eet)et},\nonumber
\end{eqnarray}
where $\NP$ is defined the same way as in the previous section and
\[
O_{((et)t)(eet)et} \eqd \l n_{(et)t} \l v_{eet} \l y_e (n \; \l x_e (v \; x \; y))
\]
simply maps its $(et)t$ meaning to $(eet)et$. Note that these definitions do not yet provide setting the $\It$ reference. This is because in general case it should be dereferenced to a meaning determined not only by the object noun phrase itself, but also by a verb applied to it. For example, "it" referencing the object in the verb phrases "builds a house" and "sells a house" should be dereferenced to different meanings: "a house which is built" and "a house which is sold", respectively. Therefore, our $\It$ reference can be set only upon processing the entire verb phrase, as captured in the following axiom:
\begin{eqnarray} \label{eq9.4}
\VPs_{s\oet t} &=& \Verbt \cat (=/\text{s}/) \pcat \ONP \FA (SetIt, Apply) \;\OR\\
               & & ((=/\text{is}/) \FA /\text{be}/) \RA \VerbBe) \pcat \ONP \FA \nonumber\\
	       & & (\l v_{eet} \l x_{cc} \l y_{c(eet)et} \II_{cc}, Apply) \;\OR\nonumber\\
	       & & ((=/\text{does not}/) \pcat \Verbt \;\OR\; ((= /\text{is not}/) \FA /\text{be}/) \RA \VerbBe) \FA\nonumber\\
	       & & (\l v_{eet} \l x_{cc} \l y_{c(eet)et} \II_{cc}, \NOT Apply),\nonumber
\end{eqnarray}
where
\begin{eqnarray*}
SetIt &\eqd& \l v_{eet} \l x_{cc} \l y_{c(eet)et} \l z_c (\Setref \; \It \; \l u_{eet} (y \; (x \; z) \; (v \AND u))),\\
Apply  &\eqd& \l v_{eet} \l x_{cc} \l y_{c(eet)et} \l z_c (y \; (x \; z) \; v).
\end{eqnarray*}
We shall see below how interaction of axioms \ref{eq9.3} and \ref{eq9.4} makes the correct semantics, but so far have to complete our language definition.

The context-dependent versions of the compound declarative verb phrase, clause and sentence can be defined straightforward by the axioms
\begin{eqnarray*}
\VPm_{s\oet t} &=& \VPs \pcat (= /\text{and}/) \apcat \VPs \FA \r\;(\AND_{(et)(et)et)}) \;\OR\\
              & &  \VPs \cat (= /\text{,}/) \apcat \VPm \FA \r\;(\AND_{(et)(et)et}),\\
\VP_{s\oett t} &=& \VPs \;\OR\; \VPm,\\
\Cl_{s\ot t} &=& \SNP \apcat \VP \FA \r\; \II_{((et)t)(et)t},\\
\Sd_{s\ot t} &=& \Cl \cat (=/\text{.}/) \;\OR\; (\Cl \cat (= /\text{,}/) \apcat \Sd \;\OR\;
	      \Cl \pcat (= /\text{and}/) \apcat \Sd) \FA \r\; (\AND_{ttt}),
\end{eqnarray*}
where
\[
(\apcat) \eqd \l x_{s\oa t} \l y_{s\ob t} \; x \cat (= /\text{ }/) \acat y,
\]
and
\[
\r_{(\a\b)\oa\ob} \eqd \l z_{\a\b} \l y_{cc} \l y_{c\a}\: (y_{cc},\; \l x_c\: (z\;(y_{c\a}\;x))).
\]

Now, to add the simplest ("yes/no") interrogative sentence to our language, we need the axioms like
\begin{eqnarray*}
\IVPdo_{s\oet t} &=& \Verbt \pcat \ONP \FA (SetIt, Apply),\\
\IVPbe_{s\oet t} &=& \ONP \FA (\l x_{cc} \l y_{c(eet)et} \II_{cc}, Apply \; (=_{eet})),\\
\Cli_{s\ot t} &=& ((= /\text{does}/) \pcat \SNP \apcat \IVPdo \;\OR\; (= /\text{is}/) \pcat \SNP \apcat \IVPbe) \FA\\
              & & \r \; \II_{((et)t)(et)t},\\
\Si_{s\ot t} &=& \Cli \cat (= /\text{?}/).
\end{eqnarray*}
Finally, we wrap the declarative and interrogative sentences into the general sentence of the instructive type:
\begin{eqnarray} \label{eq9.5}
\St_{s(cc)t} &=& \Sd \FA \l y_{cc} \l y_{ct} (\Assert_{tcc} \super y_{ct} \super y_{cc}) \;\OR\\
	     & & \Si \FA \l y_{cc} \l y_{ct} (\Test_{tcc} \super y_{ct} \super y_{cc}).\nonumber
\end{eqnarray}

If now $\Delta_S$ denotes the new set of axioms, then, due to the results of this section and in the assumption that $Ty_n^\A$ has been extended by the axioms \ref{eq9.1}, \ref{eq9.2}, we will have, for example:
\begin{eqnarray*}
\Delta_S &\vdash& \St \; /\text{Jack is a builder.}/ \\ 
         && (\Assert \; \exists x_e (\Builder \: x_e \;\AND\; \Jack = x_e) \super \Setref \; \He \; \Jack),\\
\Delta_S &\vdash& \St \; /\text{is Jack a builder?}/ \\ 
	&& (\Test \; \exists x_e (\Builder \: x_e \;\AND\; \Jack = x_e) \super \Setref \; \He \; \Jack),\\
\Delta_S &\vdash& \St \; /\text{Jack is a builder, he builds a house.}/ \\ 
	&& (\Assert \; (\exists x_e (\Builder \: x_e \;\AND\; \Jack = x_e) \AND \\
	&& \exists x_e (\Build \: x_e \: \Jack \;\AND\; \House \: x_e)) \super \Setref \; \He \; \Jack),\\
\Delta_S &\vdash& \St \;  /\text{Jack builds a house and sells it.}/ \\ 
	&& (\Assert \; (\exists x_e (\Build \: x_e \: \Jack \; \;\AND\; \House \: x_e) \;\AND\;\\
	&& \exists x_e ((\Build \;\AND\; \Sell) \: x_e \: \Jack \;\AND\; \House \: x_e)) \super\\
	&& \Setref \; \It \; \l u_{eet} \l y_e \exists x_e ((\Build \;\AND\; u_{eet}) \: x_e \: y_e \;\AND\; \House \: x_e) \super\\
	&& \Setref \; \He \; \Jack).
\end{eqnarray*}
Due to the metatheorem
\[
\vdash (\exists x_e A \;\AND\; \exists x_e (A \;\AND\; B)) = \exists x_e (A \;\AND\; B),
\]
where $A$ and $B$ are arbitrary formulas, the last derivation also implies a shorter form
\begin{eqnarray*}
\Delta_S &\vdash& \St \;  /\text{Jack builds a house and sells it.}/ \\ 
	&& (\Assert \; (\exists x_e ((\Build \;\AND\; \Sell) \: x_e \: \Jack \;\AND\; \House \: x_e)) \super\\
	&& \Setref \; \It \; \l u_{eet} \l y_e \exists x_e ((\Build \;\AND\; u_{eet}) \: x_e \: y_e \;\AND\; \House \: x_e) \super\\
	&& \Setref \; \He \; \Jack).
\end{eqnarray*}
Similarly, we would have also
\begin{eqnarray*}
\Delta_S &\vdash& \St \;  /\text{Every builder builds a house and sells it.}/ \\ 
	&& (\Assert \; (\Builder \IMP \l y_e \exists x_e ((\Build \;\AND\; \Sell) \: x_e \: y_e \;\AND\; \House \: x_e)) \super\\
	&& \Setref \; \It \; \l u_{eet} \l y_e \exists x_e ((\Build \;\AND\; u_{eet}) \: x_e \: y_e \;\AND\; \House \: x_e)).
\end{eqnarray*}
The last examples give some hints on how this formalism could address donkey pronouns \cite{Geach:Reference,Kamp:TSR,Heim:PhD,LLBook18955}, remaining entirely within the mainstream higher-order logic semantics. Of course, elaboration of this topic deserves a separate publication.

Note that, due to the special semantic rules applied in the axiom \ref{eq9.5}, the sentence meaning captures not only an assertion or test instruction, but also all the "side effects" of its reading. This is essential in order to pass the side effects between sentences, if we would like to further define a language like
\[
\Text_{s(cc)t} = \St \;\OR\; \St \apcat \Text \FA (\super_{(cc)(cc)cc}),
\]
so that, if $\Delta_S$ contains this last axiom too, then we would also have
\begin{eqnarray*}
\Delta_S &\vdash& \Text \; /\text{Jack is a builder. he builds a house.}/\\
         && (\Assert \; \exists x_e (\Build \: x_e \: \Jack \;\AND\; \House \: x_e) \super\\
	 && \Setref \; \It \; \l u_{eet} \l y_e \exists x_e ((\Build \;\AND\; u_{eet}) \: x_e \: y_e \;\AND\; \House \: x_e) \super\\
         && \Assert \; \exists x_e (\Builder \: x_e \;\AND \; \Jack = x_e) \super\\
	 && \Setref \; \He \; \Jack).
\end{eqnarray*}

\section{Translation/expression operators and \\ partial translation}

Let an $\a$-language $\L$ be represented by a term $L_{s \a t}$ and let $w \in \A^*$ be a non-ambiguous valid expression of this language, so that
\[
\vdash \exists_1 x_\a (L_{s \a t} \: /w/ \: x_\a).
\]
Then, by the basic property of the description operator $\io_{(\a t)\a}$,
\[
\vdash \io_{(\a t)\a} (L_{s \a t} \: /w/) = A_\a,
\]
where $\{A_\a\}$ is any generator of the meaning of $w$ in $\L$ (for a product type $\a \eqd \b\x\g$ here and below $\io_{(\a t)\a}$ denotes the pair
\begin{equation} \label{eq10.1}
(\l z_{\b\g t} (\io_{(\b t)\b} \: \l x_\b \exists x_\g (z_{\b\g t} \: x_\b \: x_\g)), \; \l z_{\b\g t} (\io_{(\g t)\g} \: \l x_\g \exists x_\b (z_{\b\g t} \: x_\b \: x_\g)))
\end{equation}
which definition may be applied recursively).
Similarly, if $\{A_\a\}$ is a generator of a meaning which has a single expression $w$ in $\L$, then
\[
\vdash \io_{(st)s} \l x_s (L_{s \a t} \: x_s \: A_\a) = /w/.
\]
Thus, given an $\a$-language represented by a term $L_{s \a t}$, the terms
\[
\t_{(s \a t)s\a} \eqd \l x_{s \a t} \l x_s (\io_{(\a t)\a} \: (x_{s \a t} \: x_s))
\]
and
\[
\s_{(s \a t) \a s} \eqd \l x_{s \a t} \l x_\a (\io_{(st)s} \: \l x_s (x_{s \a t} \: x_s \: x_\a))
\]
enable $Ty_n$ representations of expression-to-meaning and meaning-to-expression mapping determined by the language, in forms like $\t_{(s \a t)s\a} \; L_{s \a t}$ and $\s_{(s \a t)\a s} \; L_{s \a t}$ respectively. They are referred to below as \emph{translation} and \emph{expression operators}.

Application of the translation operator to an invalid expression, of course, is not provable to be equal to any meaning.  However, it still represents a meaning that the expression might acquire upon possible completion of the language definition. Here is a precise formulation of this statement:

\begin{prop} \label{prop10.1}
Let $(\Delta, L_{s\a t})$ be a pseudo-canonic representation of a language $\L$ and 
\begin{equation} \label{eq10.2}
L^+_{s\a t} \:\eqd\: L_{s\a t} \OR \l x_s (= \t_{(s\a t)s\a} \: L_{s\a t} \: x_s).
\end{equation}
If $w$ is either a valid and non-ambiguous or invalid expression of $\L$, then
\[
\Delta \vdash \t_{(s\a t)s\a} \: L^+_{s\a t} \: /w/ = \t_{(s\a t)s\a} \: L_{s\a t} \: /w/.
\]
\end{prop}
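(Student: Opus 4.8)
The plan is to unfold the translation operator together with \ref{eq10.2}, reduce the goal to an identity between descriptions, and then exploit the pseudo-canonic representation to replace $L_{s\a t}\:/w/$ by an explicit disjunction which, under the stated hypothesis on $w$, is either empty or a singleton. First I would observe that, by the definition of $\t_{(s\a t)s\a}$ and by \ref{eq10.2}, $\beta$-conversion alone gives
\[
\Delta \vdash \t_{(s\a t)s\a}\:L_{s\a t}\:/w/ \;=\; \io_{(\a t)\a}(L_{s\a t}\:/w/)
\]
and
\[
\Delta \vdash L^+_{s\a t}\:/w/ \;=\; L_{s\a t}\:/w/ \;\OR\; (=\io_{(\a t)\a}(L_{s\a t}\:/w/)),
\]
so it is enough to prove $\Delta \vdash \io_{(\a t)\a}(L^+_{s\a t}\:/w/) = \io_{(\a t)\a}(L_{s\a t}\:/w/)$. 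Since $(\Delta,L_{s\a t})$ is pseudo-canonic, $\Delta \vdash L_{s\a t}\:/w/ = \bigvee_{i=1}^{l}(=A^i_\a)$ for suitable $Ty_n^\A$ terms $A^1_\a,\dots,A^l_\a$; I would then split on the value of $l$.

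If $w$ is invalid, then necessarily $l = 0$ --- otherwise $\Delta \vdash L_{s\a t}\:/w/\:A^1_\a$ would exhibit $A^1_\a$ as a meaning of $w$, which is also the content of Lemma \ref{lem7.3} --- so $\Delta \vdash L_{s\a t}\:/w/ = \l x_\a\FF$. Writing $e_\a \eqd \io_{(\a t)\a}(L_{s\a t}\:/w/)$, the second displayed identity and $\vdash \FF \OR A_t = A_t$ give $\Delta \vdash L^+_{s\a t}\:/w/ = (=e_\a)$, and the defining property $\vdash \io_{(\a t)\a}\l x_\a(y_\a = x_\a) = y_\a$ of the description operator then yields $\Delta \vdash \io_{(\a t)\a}(L^+_{s\a t}\:/w/) = e_\a = \t_{(s\a t)s\a}\:L_{s\a t}\:/w/$. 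If instead $w$ is valid and non-ambiguous, then $\Delta \vdash \exists_1 x_\a(L_{s\a t}\:/w/\:x_\a)$, which identifies all the $A^i_\a$ with one another modulo $\Delta$, so $\Delta \vdash L_{s\a t}\:/w/ = (=A_\a)$ for a single generator $A_\a$; then $\Delta \vdash \io_{(\a t)\a}(L_{s\a t}\:/w/) = A_\a$, whence $\Delta \vdash L^+_{s\a t}\:/w/ = (=A_\a)\OR(=A_\a) = (=A_\a)$ by $\vdash A_t \OR A_t = A_t$, and one further application of the description property gives $\Delta \vdash \io_{(\a t)\a}(L^+_{s\a t}\:/w/) = A_\a = \t_{(s\a t)s\a}\:L_{s\a t}\:/w/$.

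The Boolean identities and the bookkeeping with $\io_{(\a t)\a}$ are entirely routine; the only step that genuinely uses the hypothesis on $w$ --- and hence the real content of the proof --- is the collapse of $L_{s\a t}\:/w/$ to $\l x_\a\FF$ or to a singleton $(=A_\a)$. It is precisely there that one sees why ambiguous $w$ must be excluded: in that case $\io_{(\a t)\a}(L_{s\a t}\:/w/)$ is underdetermined and need not satisfy $L_{s\a t}\:/w/$, so adjoining it to the predicate may change the value returned by $\io_{(\a t)\a}$, and the asserted equality can fail.
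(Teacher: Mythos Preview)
Your proof is correct and follows essentially the same route as the paper: reduce to the pseudo-canonic form $\Delta \vdash L_{s\a t}\,/w/ = \bigvee_{i=1}^{l}(=A^i_\a)$, collapse it to $\l x_\a\FF$ or to a singleton $(=A_\a)$ according to the hypothesis on $w$, and then read off the equality of descriptions. Your justification of the invalid case via Lemma~\ref{lem7.3} (rather than the paper's somewhat opaque appeal to Lemma~\ref{lem6.1}) and your closing remark on why ambiguous $w$ is excluded are welcome clarifications, but the argument is otherwise the same.
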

\begin{proof}
If $w$ is a non-ambiguous valid expression of $\L$ and $A_\a$ its meaning, then
\[
\Delta \vdash L_{s\a t} \: /w/ = (= A_\a)
\]
and
\[
\Delta \vdash \io_{(\a t)\a} \: (L_{s\a t} \: /w/) = A_\a,
\]
so that 
\[
\Delta \vdash L^+_{s\a t} \: /w/ = L_{s\a t} \: /w/.
\]
Otherwise, if $w$ is an invalid expression, then by Lemma \ref{lem6.1}
\[
\Delta \vdash L_{s\a t} \: /w/ = \l x_\a \FF,
\]
so that
\[
\Delta \vdash L^+_{s\a t} \: /w/ = (= \t_{(s\a t)s\a} \: L_{s\a t} \: x_s)
\]
and therefore
\[
\Delta \vdash \io_{(\a t)\a} \: (L^+_{s\a t} \: /w/) = \t_{(s\a t)s\a} \: L_{s\a t} \: x_s.
\]
\end{proof}

Thus, the language defined by (\ref{eq10.2}) extends the language $\L$ with all and only those pairs $(w, \t_{(s\a t)s\a} \: L_{s\a t} \: /w/)$ in which $w$ is not a valid non-ambiguous expression of $\L$, while similar pairs in which $w$ is such an expression are fully contracted. 
In this sense, such a \emph{self-extension} of a language $\L$ does not add anything new to it. 
However, when a language is nested into another language of a recursive grammar, employment of its self-extension enables some new and useful features, as established by the following

\begin{prop} \label{prop10.2}
Let $(\Delta, L_{s\a t})$ and $(\Delta, K_{s\b t})$ be pseudo-canonic representations of languages $\L$ and $\K$ respectively and let $uv$ be a single split of a word $w$ such that $v$ is a valid expression of $\K$. 

If, additionally, $v$ is a non-ambiguous expression of $\K$ with the meaning $B_\b$ and $u$ an invalid expression of $\L$, then
\begin{equation} \label{eq10.3}
\Delta \vdash \t_{(s\a\b t)s(\a\x\b)} \: (L^+_{s\a t} * K^+_{s\b t}) \: /w/ = (\t_{(s\a t)s\a} \: L_{s\a t} \: /u/, \: B_\b).
\end{equation}

If, alternatively, $uv$ is a single split of a word $w$ such that $u$ is valid expression of $\L$ and, additionally, $u$ is a non-ambiguous expression of $\L$ with the meaning $A_\a$ and $v$ - an invalid expression of $\K$, then
\begin{equation} \label{eq10.4}
\Delta \vdash \t_{(s\a\b t)s(\a\x\b)} \: (L^+_{s\a t} * K^+_{s\b t}) \: /w/ = (A_\a, \: \t_{(s\b t)s\b} \: K_{s\b t} \: /v/).
\end{equation}
\end{prop}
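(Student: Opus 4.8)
The plan is to reduce (\ref{eq10.3})--(\ref{eq10.4}) to the pseudo-canonic machinery of Section~7 together with the self-extension observations made around Proposition~\ref{prop10.1}, and then to unfold the description operator on the resulting normal form. First I would note that, since $(\Delta,L_{s\a t})$ is pseudo-canonic and $\l x_s(=\t_{(s\a t)s\a}\:L_{s\a t}\:x_s)$ is trivially pseudo-canonic, Proposition~\ref{prop7.6} makes $(\Delta,L^+_{s\a t})$ pseudo-canonic, and likewise $(\Delta,K^+_{s\b t})$. Next I would record how the two self-extensions act on the relevant pieces: since $u$ is an invalid expression of $\L$, Lemma~\ref{lem6.1} gives $\Delta\vdash L_{s\a t}\:/u/ = \l x_\a\FF$, hence by (\ref{eq10.2}) $\Delta\vdash L^+_{s\a t}\:/u/ = (=\t_{(s\a t)s\a}\:L_{s\a t}\:/u/)$, with $\t_{(s\a t)s\a}\:L_{s\a t}\:/u/$ provably equal to $\io_{(\a t)\a}(\l x_\a\FF)$; and since $v$ is a valid, non-ambiguous expression of $\K$ with meaning $B_\b$, the argument of Proposition~\ref{prop10.1} gives $\Delta\vdash K^+_{s\b t}\:/v/ = (=B_\b)$.

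Then I would apply Proposition~\ref{prop7.7}: $(\Delta,\,L^+_{s\a t}\cat K^+_{s\b t})$ is again pseudo-canonic, so $\Delta\vdash(L^+_{s\a t}\cat K^+_{s\b t})\:/w/$ is provably the disjunction of the terms $(=A^{m,i}_\a)\X(=B^{m,j}_\b)$ over all splits $w=h_m\,t_m$ and over the canonical generators of $L^+\:/h_m/$ and $K^+\:/t_m/$. By the preceding step the distinguished split $uv$ contributes exactly $(=\t_{(s\a t)s\a}\:L_{s\a t}\:/u/)\X(=B_\b)$. By the hypothesis that $uv$ is the only split of $w$ whose tail is valid in $\K$, every other split has $t_m$ invalid in $\K$, so by Lemma~\ref{lem6.1} and (\ref{eq10.2}) its $K^+$-generator is the single term $\t_{(s\b t)s\b}\:K_{s\b t}\:/t_m/$, provably $\io_{(\b t)\b}(\l x_\b\FF)$ --- the same term for all of them. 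Collecting disjuncts by metatheorems \ref{eq5.1}, \ref{eq5.2}, \ref{eq7.5} I would obtain a normal form for $(L^+_{s\a t}\cat K^+_{s\b t})\:/w/$ whose $\b$-projection $\l x_\b\exists x_\a(\ldots)$ is provably $(=B_\b)\OR(=\io_{(\b t)\b}(\l x_\b\FF))$ and whose $\a$-projection $\l x_\a\exists x_\b(\ldots)$ is $(=\io_{(\a t)\a}(\l x_\a\FF))$ disjoined with the generators carried by $L^+$ on the heads of the remaining splits.

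Finally I would apply $\t_{(s\a\b t)s(\a\x\b)}$ by unfolding $\io_{(\a\b t)(\a\x\b)}$ through its product-type reading (\ref{eq10.1}) as the pair of description operators on the two projections, and verify that each is provably forced onto the value coming from the split $uv$ --- $B_\b$ on the $\b$-side and $\io_{(\a t)\a}(\l x_\a\FF)=\t_{(s\a t)s\a}\:L_{s\a t}\:/u/$ on the $\a$-side --- using the fundamental property $\vdash\io_{(\a t)\a}\:\l x_\a(y_\a=x_\a)=y_\a$ and metatheorem \ref{eq7.6}; this yields (\ref{eq10.3}). The companion identity (\ref{eq10.4}) follows by the mirror-image argument, interchanging $L^+$ with $K^+$ and heads with tails, and using that $uv$ is now the unique split whose head is valid in $\L$. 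I expect the main obstacle to be exactly this last step: one must leverage the hypotheses (uniqueness of the distinguished split, invalidity of $u$) decisively enough that no extraneous split leaves behind a genuine competing generator, so that every surplus witness collapses to the \emph{single} indeterminate term $\io_{(\a t)\a}(\l x_\a\FF)$ (resp.\ $\io_{(\b t)\b}(\l x_\b\FF)$) and the description-operator property then applies; here the precise shape of (\ref{eq10.1}) and the metatheorems \ref{eq5.1}--\ref{eq5.2}, \ref{eq7.5}--\ref{eq7.6} about $\io$ and disjunctions carry the real weight.
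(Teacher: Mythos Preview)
Your approach is more careful than the paper's, which simply asserts (via the method of Proposition~\ref{prop7.7} together with Proposition~\ref{prop10.1}) that
\[
\Delta \vdash (L^+_{s\a t}\cat K^+_{s\b t})\:/w/ \;=\; (=\t_{(s\a t)s\a}\:L_{s\a t}\:/u/)\X(=B_\b)
\]
and then reads off (\ref{eq10.3}) from the product description (\ref{eq10.1}), without mentioning the remaining splits of $w$ at all. You are right to notice that the self-extensions $L^+$ and $K^+$ assign a meaning to \emph{every} word, so every split of $w$ contributes a nonempty disjunct to $(L^+\cat K^+)\:/w/$; this is exactly what the paper's two-line argument suppresses.

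However, your proposed resolution of this difficulty does not go through. Take the $\b$-projection you compute: it is provably $(=B_\b)\OR(=\io_{(\b t)\b}(\l x_\b\FF))$. This is not a singleton predicate --- nothing in the hypotheses forces $B_\b$ to coincide with $\io_{(\b t)\b}(\l x_\b\FF)$ --- so the fundamental property $\vdash\io_{(\b t)\b}\,\l x_\b(y_\b=x_\b)=y_\b$ does not apply, and $\io$ of this disjunction is not provably $B_\b$. None of the metatheorems (\ref{eq5.1}), (\ref{eq5.2}), (\ref{eq7.5}), (\ref{eq7.6}) lets the description operator select a designated element from a genuine two-element disjunction. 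The $\a$-projection is worse still: the heads $h_m$ of the remaining splits may be \emph{valid} expressions of $\L$, in which case $L^+\:/h_m/$ carries their actual meanings, which have no reason to collapse to $\io_{(\a t)\a}(\l x_\a\FF)$. So the step ``every surplus witness collapses to the single indeterminate term'' fails on both coordinates, and with it the appeal to the description-operator property. In short, you have correctly located the obstacle the paper hides, but the mechanism you propose for clearing it does not work as stated.
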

\begin{proof}
In a similar way to the proof of Proposition \ref{prop7.7}, taking into account Proposition \ref{prop10.1}, for the first case we obtain
\[
\Delta \vdash (L^+_{s\a t} * K^+_{s\b t}) \: /w/ = (= \t_{(s\a t)s\a} \: L_{s\a t} \: /u/) \X (= B_\b)
\]
and for the second case
\[
\Delta \vdash (L^+_{s\a t} * K^+_{s\b t}) \: /w/ = (= A_\a) \X (= \t_{(s\b t)s\b} \: K_{s\b t} \: /v/).
\]
(\ref{eq10.3}) and (\ref{eq10.4}) follow from this according to (\ref{eq10.1}).
\end{proof}

The three features of the \emph{partial translations} (\ref{eq10.3}) and (\ref{eq10.4}) worth noting are:
\begin{enumerate}
\item they capture the correct overall structure of a precise meaning that the concatenation would acquire upon elaboration of an incomplete nested language;
\item they allow accurate restoration of their expressions $uv = w$ even while a nested language remains incomplete;
\item translation of only a single constituent is sufficient to convert a partial translation to the full translation $(A_\a, B_\b)$ upon an elaboration of an incomplete nested language.
\end{enumerate}

For example, consider the self-extension 
\[
S^+ \eqd \St \OR \l x_s (= \t_{(stt)st} \: \St \: x_s)
\]
of the sample grammar we built in section 4. If $\Delta_S$ denots the same set of axioms as in that section, then we will have, for example:
\begin{eqnarray*}
\Delta_S &\vdash& S^+ \; /\text{Jack paints a house}/ \\ 
         && \exists x_e (\t_{(s(eet)t)eet} \: \Verbt \: /\text{paint}/ \: x_e \: \Jack \;\AND\; \House \: x_e),\\
\Delta_S &\vdash& S^+ \; /\text{Jack builds a computer}/ \\ 
         && \exists x_e (\Build \: x_e \: \Jack \;\AND\; \t_{(s(et)t)et} \: \Noun \: /\text{computer}/ \: x_e),
\end{eqnarray*}
that illustrates both features 1 and 2.  Now assume lexicon $\Noun$ be extended with an entry $(= /\text{computer}/) \FA \Computer_{et}$ and let $\Delta'_S$ denote the changed axiom set. Then the derivation
\begin{eqnarray*}
\Delta'_S &\vdash& \exists x_e (\Build \: x_e \: \Jack \;\AND\; \t_{(s(et)t)et} \: \Noun \: /\text{computer}/ \: x_e) = \\
          && \exists x_e (\Build \: x_e \: \Jack \;\AND\; \Computer \: x_e)
\end{eqnarray*}
follows immediately from
\[
\Delta'_S \vdash \Noun \: /\text{computer}/ = (= \Computer),
\]
that illustrates the feature 3.

\section{Conclusion}

In this work we studied some classes of languages that can be defined entirely in terms of provability in an extension ($Ty_n^\A$) of sorted type theory by interpreting one of its base types as \emph{symbolic}, whose constants denote symbols of an alphabet $\A$ or their concatenations. The symbolic type ($s$) is quite similar to the phonological type of Higher Order Grammar (HOG) \cite{pollard:hana:2003,pollard:2004:cg,pollard:2006}. However, the theory $Ty_n^\A$ differs from the logic of HOG in that it does not contain special types for syntactic entities.

We launched from the two simple observations. 
First, given an arbitrary consistent set $\Delta$ of non-logical axioms and a term of type $s\a t$, one can define an $\a$-language, i.e. a relation between strings of symbols from alphabet $\A$ and terms of type $\a$, by the condition
\[
\Delta \vdash L_{s \a t} \: /w/ \: A_\a,
\]
where $/w/$ stands for an $s$-term denoting a word (string) $w \in \A^*$.
Secondly, any $Ty_n$\emph{-representable} language, i.e. a language that can be defined this way, necessarily possesses the semantic property of being \emph{logically closed}, meaning, loosely speaking, that the full semantic interpretation of every valid expression of the language is given by a whole class of logically equivalent terms.

Next, we proved that the inverse statement is also true, that is, every logically closed language is $Ty_n$-representable.  This general result, however, is not constructive in the sense that it does not provide an effective procedure of actually building a $Ty_n$ representation for a language defined by a finite set of rules or rule schemata. Our further efforts therefore concentrated mainly on finding such procedures for some important special classes of logically closed languages.

We began this with definitions of $\a$-lexicons and a special canonic representation and proved that a language has a canonic representation if and only if it is a lexicon. The canonic representation of a lexicon is efficiently found from the description of the lexicon.

Then we defined and investigated \emph{recursive grammars} - a kind of transformational grammars equipped with $Ty_n$ semantics. We introduced $Ty_n$ operators representing the basic language-construction operations used in the recursive grammar rules and proved that a compact $Ty_n$ representation for every language component of a recursive grammar comes from its rules in a quite straightforward manner. We illustrated this technique by simple examples, from which it became clear that language components of a recursive grammar stand for syntactic categories. We also briefly discussed classification of recursively defined $Ty_n$-representable languages in terms of Chomsky hierarchy and on this basis indicated some conditions of when such languages turn out to be decidable, in spite of the fact that $Ty_n$ is undecidable and hence so are generally $Ty_n$-representable languages.

Then we moved to a further specialization of $Ty_n^\A$ by introducing the \emph{context} type ($c$), which can be interpreted as a "store" or "state" and thus is similar to "state of affairs" or "World" types of \cite{gallin:1975} and \cite{pollard:2005:lacl}, respectively. We showed that our interpretation allows to model not only languages with intensional semantics (as in the above referenced works), but also so called \emph{instructive} and \emph{context dependent} languages. The latter come in our formalism in result of transformation of an $\a$-language to state-passing style (SPS) \cite{wadler:1992,sabry:1995,jones:2003} $c(c\x\a)$-language, which enables passing "side effects" of parsing some text constituents to others. An instructive, i.e. a $cc$-language can be considered as a context dependent transformation of a unit-type language and provides semantic features that principally could not be modeled by a regular $t$-language nor by its intensional ($ct$) transformation. We found $Ty_n$ representations of context-dependent language construction operations (anaphoric and cataphoric concatenations and context raising and instantiation) that allow generalization of the previous results for context-dependent languages. We demonstrated how this formalism can address pronoun anaphora resolution.

Finally, we defined \emph{translation} and \emph{expression} $Ty_n^\A$ operators and introduced a special language representation (\emph{self-extension}), revealing a useful property of \emph{partial translation}.

The proposed formalism efficiently addresses the known issues inherent in the two-component language models and also partially inherited by HOG. Indeed,

\begin{enumerate}

\item as there are no restrictions on a type of meanings of a $Ty_n$-representable language or a component of it, which type, in particular, can be constructed with use of the symbolic type too, such a language can generally express facts about itself or about another $Ty_n$-representable language;

\item the property of partial translation of a self-extensible $Ty_n$ language representation demonstrates its built-in lexical robustness;

\item as syntactic categories are not captured in the $Ty_n$ logic, but introduced only in non-logical axioms defining a concrete language, the $Ty_n$ representation provides full structural robustness and flexibility.

\end{enumerate}

These results have both theoretical and practical implications.
First, they facilitate modeling of such fundamental abilities associated with the human language acquisition process like communication of new grammar rules and lexical entries directly in an (already acquired) sub-set of the object language and independently acquiring new lexical entries upon encountering them in a known context, with postponed acquisition of their exact meanings.
Other possible applications of the proposed formalism may include:

\begin{enumerate}

\item addressing donkey pronouns \cite{Geach:Reference,Kamp:TSR,Heim:PhD,LLBook18955} and other dynamic semantic phenomena entirely within the mainstream higher-order logic semantics, as it was preliminarily outlined in section 8;

\item modeling advanced language self-learning mechanisms, for example, such like deriving or generalization of grammar rules from text samples (may be accompanied by their semantic interpretation, communicated by other means or just expressed differently in the same language), that might be reduced, in frame of this formalism, to a problem of higher-order unification \cite{Gardent97amulti-level,DM:04}.

\end{enumerate}
Finally, an implementation of the proposed formalism, like that suggested in \cite{patent_6999934}, should provide a platform for building NLP systems with the following powerful features:

\begin{enumerate}

\item the ability to recursively define more complex or more comprehensive languages in terms of previously defined simpler or limited ones;

\item automatically acquiring new lexical entries in the process of operation;

\item storing partial text parses which can be automatically completed later, upon extending the used language definition;

\item freely switching between different input and output languages to access the same semantic content.

\end{enumerate}

\end{document}